\documentclass[11pt]{article}
\usepackage[
letterpaper,
top=1in,
bottom=1in,
left=1in,
right=1in]{geometry}

\usepackage{microtype}
\usepackage{graphicx}
\usepackage{subcaption}
\usepackage{booktabs}

\usepackage{amsmath,amssymb,amsfonts,amsthm,turnstile}
\usepackage{aliascnt}
\usepackage{mathtools}
\usepackage{dsfont}

\usepackage{enumitem}
\usepackage{xcolor}

\usepackage{newtxtext,newtxmath}
\definecolor{deeprednew}{rgb}{0.58, 0.0, 0.0}
\definecolor{deepgreennew}{rgb}{0.0, 0.31, 0.11}
\usepackage{hyperref}
\hypersetup{
colorlinks=true,
urlcolor=blue,
linkcolor=deeprednew,
citecolor=deepgreennew,
}

\newcommand{\veps}{\varepsilon}

\newcommand{\KL}{D_{\mathrm{KL}}}
\newcommand{\TV}{d_{\mathrm{TV}}}

\usepackage{algorithm}
\usepackage{algorithmic}

\usepackage{tabularx}
\usepackage{makecell}
\usepackage{xcolor}
\usepackage{amssymb}

\title{Learning Gaussian Graphical Models \\ from a Glauber Trajectory Without Mixing}
\author{Eric Shen\thanks{Equal contribution.}\\
\texttt{eys@mit.edu}\\
MIT
\and
Tony Wu\footnotemark[1]\\
\texttt{tonyyzwu@mit.edu}\\
MIT
\and 
Mahbod Majid\\
\texttt{mahbod@mit.edu}\\
MIT
\and
Ankur Moitra\thanks{
Supported in part by DARPA expMath, a Microsoft Trustworthy AI Grant, NSF-CCF 2430381, an ONR grant, and a David
and Lucile Packard Fellowship.}\\
\texttt{moitra@mit.edu}\\
MIT}

\usepackage[capitalize,noabbrev]{cleveref}

\theoremstyle{plain}
\newtheorem{theorem}{Theorem}[section]

\newaliascnt{proposition}{theorem}
\newtheorem{proposition}[proposition]{Proposition}
\aliascntresetthe{proposition}

\newaliascnt{lemma}{theorem}
\newtheorem{lemma}[lemma]{Lemma}
\aliascntresetthe{lemma}

\newaliascnt{fact}{theorem}
\newtheorem{fact}[fact]{Fact}
\aliascntresetthe{fact}

\newaliascnt{corollary}{theorem}
\newtheorem{corollary}[corollary]{Corollary}
\aliascntresetthe{corollary}

\theoremstyle{definition}
\newaliascnt{definition}{theorem}
\newtheorem{definition}[definition]{Definition}
\aliascntresetthe{definition}

\newaliascnt{assumption}{theorem}

\aliascntresetthe{assumption}

\theoremstyle{remark}
\newaliascnt{remark}{theorem}
\newtheorem{remark}[remark]{Remark}
\aliascntresetthe{remark}

\crefname{theorem}{theorem}{theorems}
\Crefname{theorem}{Theorem}{Theorems}
\crefname{proposition}{proposition}{propositions}
\Crefname{proposition}{Proposition}{Propositions}
\crefname{lemma}{lemma}{lemmas}
\Crefname{lemma}{Lemma}{Lemmas}
\crefname{fact}{fact}{facts}
\Crefname{fact}{Fact}{Facts}
\crefname{corollary}{corollary}{corollaries}
\Crefname{corollary}{Corollary}{Corollaries}
\crefname{definition}{definition}{definitions}
\Crefname{definition}{Definition}{Definitions}
\crefname{assumption}{assumption}{assumptions}
\Crefname{assumption}{Assumption}{Assumptions}
\crefname{remark}{remark}{remarks}
\Crefname{remark}{Remark}{Remarks}
\crefname{problem}{problem}{problems}
\Crefname{problem}{Problem}{Problems}

\usepackage{thm-restate}

\begin{document}

\maketitle

\begin{abstract}
We study the task of learning the structure of a $d$-sparse Gaussian graphical model on $n$ variables from a single trajectory of Glauber dynamics. Beyond algorithmic considerations, many applications present temporally correlated observations rather than i.i.d.\ samples. In the classical i.i.d.\ setting, under comparably general sparsity and minimum edge-strength assumptions, sublinear-in-$n$ sample guarantees are known, but achieving them in polynomial-time remains open. Motivated in part by this gap, we give a polynomial-time algorithm that recovers the conditional-independence graph from a single Glauber trajectory, with a trajectory-length guarantee that does not depend on the mixing time.

Technically, our algorithm has three components. First, we estimate the conditional variances and rescale the trajectory to reduce to the unit-diagonal case, without changing the underlying graph. Second, we design a local edge test that extracts adjacency information from short update windows by isolating pairwise influence. Third, we aggregate these local statistics using a robust median-based estimator, and prove accuracy despite temporal dependence arising from a single trajectory.
\end{abstract}

\newpage
\tableofcontents
\newpage
\section{Introduction}

A \emph{Gaussian Graphical Model} (GGM) on $n$ vertices is a mean-zero Gaussian random variable
$X \sim \mathcal{N}(0,\Sigma)$.
The relevant object for the graph structure is the \emph{precision matrix} $\Theta \coloneq \Sigma^{-1}$.
We associate to $\Theta$ an undirected graph $G=(V,E)$ by putting an edge $(i,j)$ whenever $\Theta_{ij}\neq 0$.
The key fact is that, for Gaussians, zeros in $\Theta$ exactly encode conditional independences:
for distinct $i,j \in V$,
\[
X_i \perp X_j \, \vert \, X_{V\setminus\{i,j\}}
\quad\Longleftrightarrow\quad
\Theta_{ij}=0.
\]
This is known as the Markov property. An important measure of complexity of GGMs is \emph{sparsity}: we say the GGM is $d$-sparse if every vertex has at most $d$
neighbors in $G$, equivalently each row of $\Theta$ has at most $d$ nonzero off-diagonal entries.

GGMs provide a natural way to represent \emph{conditional dependence structure}
among many interacting variables. The literature on GGM applications is too vast
to survey here, but representative examples include neuroscience and brain
connectivity \cite{dyrba2020gaussian,huang2010learning}, genomics
\cite{yi2022information}, metabolic pathway reconstruction
\cite{krumsiek2011gaussian}, climate science \cite{zerenner2014gaussian},
financial systemic-risk modeling \cite{cerchiello2016conditional}, and environmental
psychology \cite{bhushan2019using}.
A recurring regime in such applications is high dimensionality, where the number
of variables $n$ can be comparable to or larger than the number of available
observations, motivating our focus on sparse GGMs, in which the conditional-independence graph has maximum degree at most $d$.

Algorithmically, the main challenge is already present in \emph{structure learning}:
recovering the edge set $E$ (equivalently, the support of $\Theta$).
Indeed, once $G$ is known, estimating the coefficients reduces to running $n$ low-dimensional
(regression) problems, each involving only the $d$ neighbors of a node. More precisely, for each node $i\in V$, the conditional distribution of $X_i$ given the remaining coordinates is
\[
X_i = -\sum_{j\in N(i)} \frac{\Theta_{ij}}{\Theta_{ii}} X_j + \xi_i,
\quad
\xi_i \sim \mathcal{N}\left(0,\frac{1}{\Theta_{ii}}\right).
\]
Thus, given $G$, estimating $\Theta$ reduces to $n$ regressions of $X_i$ onto $\{X_j : j\in N(i)\}$,
which recover the coefficients $\{-\Theta_{ij}/\Theta_{ii}\}_{j\in N(i)}$ and the noise variance $1/\Theta_{ii}$.

In the classical i.i.d.\ data model, Misra, Vuffray and Lokhov \cite{misra2020information} studied the information-theoretic
sample complexity of learning sparse GGMs \emph{without} assuming bounded spectrum or incoherence.
The only assumption they make is the following guarantee on the minimum normalized edge strength
\begin{equation*}
    \frac{\lvert\Theta_{ij}\rvert}{\sqrt{\Theta_{ii}\Theta_{jj}}} \ge \alpha
    \qquad\forall (i, j) \in E,
    \tag{non-degeneracy}
\label{eq:non-degeneracy}
\end{equation*}
which is a natural non-degeneracy condition ensuring that present edges are not arbitrarily weak. It is important to note that this constraint does \emph{not} impose any assumptions on the minimum eigenvalue of the normalized matrix, and the spectrum may be arbitrarily ill-conditioned. For a simple demonstration of this, see appendix \cref{app:nondegeneracy-conditioning}.

They show that information-theoretically $O(d \log n / \alpha^2)$ i.i.d\ samples suffice for learning the graph structure. Earlier work of Wang, Wainwright, and Ramchandran \cite{wang2010information} shows that at least $\Omega(\log n / \alpha^2)$ i.i.d\ samples are necessary for this task, and it is currently unknown which of the upper bound or the lower bound is tight.
However, the price paid is computational: the algorithm of \cite{misra2020information} uses an exhaustive search based on an $\ell_0$-constrained sparse linear regression and runs in time \(n^{\Omega(d)}\). Whether one can match the information-theoretic sample complexity with a polynomial-time algorithm for general GGMs remains open. More broadly, there is evidence for computational barriers in related sparse linear regression problems. In particular, in the fixed-design, worst-case setting, \emph{proper} sparse linear regression---meaning the algorithm must output a \(k\)-sparse predictor when a \(k\)-sparse solution exists---is \textsf{NP}-hard \cite{natarajan1995sparse,zhang2014lower}.

In many scientific settings, observations are not i.i.d.; instead we observe a system evolving over time.
A natural stylized model for such temporal dependence is a Markov chain whose stationary distribution is a GGM,
for instance single-site Gibbs sampling (Glauber dynamics). If the chain mixes rapidly, then by spacing observations by at least the mixing time one can obtain
approximately independent draws and reduce to the classical i.i.d.\ setting.

However, mixing-based reductions can be unsatisfactory even for Gaussian targets.
Indeed, for a multivariate normal target, single-site Gibbs has an explicit linear-operator description, and its convergence rate is controlled by the spectrum of an associated update matrix \cite{amit1991rates,roberts1997updating}.
Moreover, this convergence behavior is invariant under diagonal rescaling of the coordinates, so it is expressed in terms of the normalized precision matrix $\Theta' = D^{-1/2}\Theta D^{-1/2}$, where $D=\mathrm{diag}(\Theta_{11},\ldots,\Theta_{nn})$.
For example, a standard spectral-gap bound for single-site Gibbs implies
\begin{equation}
\label{eq:mixing-time}
t_{\mathrm{mix}}(\varepsilon) \approx  n\,  
\frac{1}{\lambda_{\min}(\Theta')}
\,\log(1/\varepsilon),
\end{equation}
where the equality is up to absolute constants \cite{roberts1997updating,amit1991rates}. Since $\lambda_{\min}(\Theta')$ can be arbitrarily small under the~\ref{eq:non-degeneracy} constraint,\footnote{See \Cref{app:nondegeneracy-conditioning} for a simple demonstration of this.} the mixing time can be arbitrarily large.

This leads to our main question:
\begin{quote}
\emph{Is it possible to learn the structure of a sparse GGM from a single Glauber trajectory
\emph{without} waiting for the chain to mix, and without imposing additional assumptions on the precision matrix $\Theta$?}
\end{quote}

We answer this question in the affirmative by giving an efficient algorithm that recovers the graph from a single trajectory, with no dependence on the mixing time and without imposing additional assumptions on the precision matrix beyond the~\ref{eq:non-degeneracy} condition.
In the next section we formalize the model and state our main theorem.

\subsection{Results}

We begin by recalling two definitions that formalize our setting: $(\alpha,d)$-sparse Gaussian graphical models and single-site Glauber dynamics.

\begin{definition}[$(\alpha,d)$-sparse Gaussian graphical model]
\label{def:sparse-ggm}
Let $\Sigma\in\mathbb{R}^{n\times n}$ be positive definite and let $\Theta\coloneq \Sigma^{-1}$.
For $\alpha>0$ and an integer $d\ge 1$, we say that $X\sim\mathcal{N}(0,\Sigma)$ is an $(\alpha,d)$-sparse GGM
if the graph $G=(V,E)$ on $V=[n]$ defined by
\[
(i,j)\in E \quad \Longleftrightarrow \quad \Theta_{ij}\neq 0,
\]
has maximum degree at most $d$, and moreover every edge has normalized strength at least $\alpha$, i.e.,
\[
\left\lvert\frac{\Theta_{ij}}{\sqrt{\Theta_{ii}\Theta_{jj}}}\right\rvert\ \ge\ \alpha
\qquad \forall (i,j)\in E.
\]
We refer to $G$ as the (conditional-independence) graph or the sparsity pattern of the model.
\end{definition}

Next we define the Glauber dynamics for a GGM. We work with the continuous-time dynamics, where each coordinate has an independent rate-$1$ Poisson clock, so the chain performs $n$ updates per unit time in expectation.

\begin{definition}[Continuous-time Glauber dynamics for a GGM]
\label{def:cont-glauber-dynamics}
The continuous-time Glauber dynamics for a GGM with precision matrix $\Theta$
is a random process $\{Y^{(t)}\}_{t\ge 0}$ taking values in $\mathbb{R}^n$.
It is initialized at an arbitrary (possibly random or worst-case) vector $Y^{(0)}$, and is updated at random times
$\{S^{(\ell)}\}_{\ell\in\mathbb{N}}$ with $S^{(0)}=0$, where $S^{(\ell)}$ is the time of the $\ell$-th update.
The inter-update times $\{S^{(\ell+1)}-S^{(\ell)}\}_{\ell\ge 0}$ are i.i.d.\ sampled from the exponential distribution with parameter $n$, namely $\mathrm{Exp}(n)$, so the chain performs
$n$ updates per unit time in expectation.
The process is piecewise constant between updates; define the embedded discrete-time chain
$X^{(\ell)} := Y^{(t)}$ for $t\in [S^{(\ell)}, S^{(\ell+1)})$.

At each update time $S^{(\ell)}$, an index $I^{(\ell)}\in[n]$ is chosen uniformly at random.
Let $i=I^{(\ell)}$. Then we resample coordinate $i$ from its conditional distribution given the others:
\[
X_i^{(\ell)} \sim \mathcal{N}\left(-\sum_{j\in N(i)} \frac{\Theta_{ij}}{\Theta_{ii}}\, X_j^{(\ell-1)},\ \frac{1}{\Theta_{ii}}\right),
\]
and set $X_j^{(\ell)} = X_j^{(\ell-1)}$ for all $j\neq i$.
Here $N(i)=\{j\neq i:\Theta_{ij}\neq 0\}$ denotes the neighborhood of $i$ in the sparsity graph of $\Theta$.
\end{definition}

We are now ready to present our main result.

\begin{theorem}[Main Theorem (Structure learning)]
\label{thm:main-theorem-structure-learning}
There exists a polynomial-time algorithm which, given a Glauber trajectory
from an $(\alpha,d)$-sparse Gaussian graphical model on $n$ vertices,
recovers the sparsity pattern (i.e., $\mathrm{supp}(\Theta)$) with probability at least $1-\delta$, provided the trajectory length satisfies
\[
T=\Omega\!\left(\frac{d^3\log(n/\delta)}{\alpha^5}\right).
\]
\end{theorem}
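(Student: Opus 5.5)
The plan follows the three components in the abstract: normalize, run a local edge test, and aggregate robustly. First, I will reduce to the unit-diagonal case. Each time coordinate $i$ is updated, the new value is $X_i=-\sum_{j\in N(i)}(\Theta_{ij}/\Theta_{ii})X_j+\xi_i$ with fresh noise $\xi_i\sim\mathcal{N}(0,1/\Theta_{ii})$. Suppose $i$ is updated twice in quick succession, with no neighbor updated in between. This event has constant probability if the window is of order $1/(d+1)$ in the per-site clock. Then the difference of the two values is $\xi_i-\xi_i'\sim\mathcal{N}(0,2/\Theta_{ii})$, whatever the (possibly unmixed) state. The learner does not know the neighbors, so I would condition instead on no other coordinate being updated between the two updates of $i$. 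In continuous time this has probability $\approx 1/2$ for consecutive updates of $i$ with an $\mathrm{Exp}$ gap; I would use "$i$ updated at consecutive steps $\ell,\ell+1$ of the embedded chain", which the learner observes. A median of squared differences over these events estimates $1/\Theta_{ii}$ up to a constant factor $(1\pm\epsilon)$. The median is robust to heavy tails and needs only independence of the fresh noises, which holds by the strong Markov property. Rescaling $X_i\mapsto\sqrt{\hat\Theta_{ii}}X_i$ preserves the graph and changes normalized edge strengths by at most a factor $1\pm O(\epsilon)$. So from here on I will assume $\Theta_{ii}=1$.

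Second comes the local edge test for a pair $(i,j)$. I would look at windows in which $j$ is updated, then $i$ is updated, and no other coordinate of any relevance changes in between. Operationally, the window is $j$ updated at step $\ell$ and $i$ at step $\ell+1$, compared with an $i$-update immediately before step $\ell$. Writing $a$ for the value of $X_j$ before its update and $b$ for the value after, the two $i$-values differ by $-\Theta_{ij}(b-a)+(\xi-\xi')$. Since $b-a$ is observed, the statistic $Z=(\Delta X_i)\cdot(b-a)$, normalized suitably (e.g.\ $\mathrm{sign}$ or a ratio), has conditional mean proportional to $-\Theta_{ij}\,\mathbb{E}[(b-a)^2\mid\cdot]$. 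The key point is that $b=\text{(mean)}+\xi_j$ with fresh noise of unit variance, so $(b-a)^2$ is bounded below in distribution regardless of the state, because of the fresh noise. For robustness I would rather test the sign agreement $\Pr[\mathrm{sign}(\Delta X_i)=-\mathrm{sign}(\Theta_{ij})\mathrm{sign}(b-a)]$. Given $b-a=u$, $\Delta X_i\sim\mathcal{N}(-\Theta_{ij}u,2)$, so this probability exceeds $1/2$ by $\Omega(|\Theta_{ij}|\min(|u|,1))$, and $|u|\gtrsim1$ with constant probability. This gives a bias of $\Omega(\alpha)$ for edges and exactly $0$ for non-edges, with no dependence on $\Sigma$ or on mixing. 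The extra factors of $\alpha$ and $d$ in the bound come from the normalization error of step one (which must be $\epsilon\lesssim\alpha$) and from the rate at which such windows occur.

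Third, aggregation and concentration. The trajectory length $T$ is in continuous time, so each ordered pair sees $\Theta(T/n)$ relevant windows per site. I need a careful accounting of how many useful windows each pair gets; the $d^3/\alpha^5$ presumably arises from the window-probability and bias factors, and I would tune the window definitions to match. I would group the windows into blocks and take a median of block means. The signs in distinct windows use disjoint fresh Gaussian noises, so conditioned on the selection sigma-algebra they are independent Bernoulli variables with bias either $0$ or $\ge c\alpha$. Hence a martingale or Azuma bound applies despite the trajectory dependence. A union bound over the $n^2$ pairs gives the $\log(n/\delta)$ factor. Thresholding at $c\alpha/2$ recovers $E$.

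The main obstacle is the second-to-third step: ensuring that the selected windows genuinely isolate pairwise influence. Other neighbors of $i$ must not move in between, and the selection events must not bias the fresh noise. I would handle this by defining every window through stopping times on the embedded chain's index sequence $I^{(\ell)}$, which is independent of the Gaussian noises. This makes the conditional independence exact.
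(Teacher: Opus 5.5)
Your proposal has two genuine gaps, and either one breaks the claimed bound.

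\textbf{The selected windows are far too rare.} You require $i$ to update at consecutive steps $\ell,\ell+1$ of the embedded chain, or $i,j,i$ at three consecutive steps. These events have probability $1/n^2$ (resp.\ $1/n^3$) per step. A horizon-$T$ trajectory has about $nT$ steps, so it contains only about $T/n$ (resp.\ $T/n^2$) usable windows. Your estimator therefore needs $T$ polynomial in $n$, not $O(d^3\log(n/\delta)/\alpha^5)$.

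Your claim that ``no other coordinate updates between two successive $i$-updates'' has probability $\approx 1/2$ is wrong: it is exactly $1/n$. Only ``no \emph{neighbor} updates'' has constant probability, and that event is not observable.

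The paper proceeds differently. It keeps every window of length $\Theta(\varepsilon/d)$ (for the diagonal) or $\Theta(\alpha/d)$ (for edges) that exhibits the visible pattern. Windows in which an unknown neighbor updated are treated as corrupted, at rate $O(dT_0)$, and a corruption-robust median is used under a martingale filtration. This trade-off is exactly where $d^3/\alpha^5$ comes from:
\begin{enumerate}
\item $T_0\asymp\alpha/d$, to keep the corruption rate $\lesssim\alpha$;
\item a four-update pattern appears with probability $\asymp T_0^4$;
\item $\asymp\alpha^{-2}\log(n/\delta)$ accepted windows are needed.
\end{enumerate}
Together these give total length $\asymp\alpha^{-2}\log(n/\delta)\,T_0^{-3}$.

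\textbf{The $iji$ conditional law you use is false.} You assert that, given $b-a=u$, $\Delta X_i\sim\mathcal N(-\Theta_{ij}u,2)$. But $b$ is computed from the value of $X_i$ produced by the first $i$-update. That update's fresh noise $\xi'$ is the same one subtracted in $\Delta X_i$.

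Take unit diagonal and $\theta=\Theta_{ij}$. Then $u=c-\theta\xi'+\xi_j$ and $\Delta X_i=-\theta u+\xi-\xi'$, where $c=\theta R_i-R_j$ with $R=\Theta X^{(0)}$ depends on the unmixed state before the window. Hence
\[
\Delta X_i\mid u,X^{(0)}\ \sim\ \mathcal N\!\left(-\frac{\theta^3}{1+\theta^2}\,u-\frac{\theta c}{1+\theta^2},\ 1+\frac{1}{1+\theta^2}\right).
\]
The slope in $u$ is cubic in $\theta$, and there is an offset that depends on the state. At a state with $c=0$, your sign-agreement bias is only $O(\alpha^3)$, not $\Omega(\alpha)$. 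This is precisely the dependence documented in \Cref{sec:serious-gap}.

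The paper's fix is the $iiji$ pattern. The extra $i$-update refreshes $X_i$ before $j$ moves. With $\Delta_i=Y_i^{(4)}-Y_i^{(1)}$ and $\Delta_j=Y_j^{(3)}-Y_j^{(0)}$, the baseline noise is then independent of $\Delta_j$, and $\Delta_i\mid\Delta_j\sim\mathcal N(-\Theta'_{ij}\Delta_j,2)$ holds exactly (\Cref{iijiStatisticFormula}).

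On $iiji$ windows of length $\Theta(\alpha/d)$, either your sign test or the paper's ratio $-\Delta_i/\Delta_j$ on $\{|\Delta_j|\ge 0.6\}$ would work. The aggregation, however, must rest on state-uniform conditional bounds plus Azuma, as in \Cref{lem:advrobustmeanest}. It cannot rest on independence given the selection $\sigma$-field, because each window's law depends on the state left by earlier windows.
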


A few remarks are in order.
First, the guarantee makes no mixing-time assumption and does not require any additional regularity conditions on $\Theta$
beyond sparsity and the edge-strength condition in Definition~\ref{def:sparse-ggm}.
Second, in the continuous-time dynamics of Definition~\ref{def:cont-glauber-dynamics}, the chain performs $n$ single-site
updates per unit time in expectation; thus, observing the chain up to time horizon $T$ corresponds to about $nT$
single-site updates on average.
Third (on optimality), information-theoretic lower bounds of $\Omega(\log(d) / \alpha^2)$ were previously known (see Theorem~2 in \cite{tirukkonda2025structure}).
Their statement is parameterized by $\beta_{\min}\coloneq \min_{(i,j)\in E} |\Theta_{ij}|/\Theta_{ii}$,
whereas we work with the symmetric normalization
$\min_{(i,j)\in E} |\Theta_{ij}|/\sqrt{\Theta_{ii}\Theta_{jj}}$.
Under the condition that $\Theta$ has diagonal $1$, these parameters are equal.
In \Cref{sec:info-theoretic-lower-bound} (\Cref{thm:infolb}) using similar techniques, we show an improved information-theoretic lower bound
that $\Omega(n \log(n)/\alpha^2)$ single-site updates are necessary for structure learning from a single trajectory (for success probability $1/2$). Since the number of single-site updates and the continuous-time trajectory length are equivalent up to a factor of $n$, this implies a lower bound of $\Omega(\log(n) / \alpha^2)$ on the continuous-time trajectory length.\footnote{See \Cref{app:ct-dt-reduction} for a formal statement of this equivalence.}
Thus, while our result matches the correct logarithmic dependence, its polynomial dependence on $d$ and $\alpha$ may not be optimal. Closing the remaining gap between the lower bound and the current no-mixing upper bounds is an interesting open problem.

\Cref{thm:main-theorem-structure-learning} follows from the more general parameter-learning theorem below. Our parameter-learning theorem, \Cref{thm:main-theorem-parameter-learning}, learns each coordinate up to multiplicative factor $\varepsilon$. The structure-learning theorem then follows by choosing $\varepsilon = O(1)$ and outputting an edge if $\lvert\widehat{\Theta}_{ij}\rvert > \frac{\alpha}{2}$.

\begin{theorem}[Main Theorem (Parameter learning)]
\label{thm:main-theorem-parameter-learning}
Let $0<\alpha,\delta,\varepsilon<1$ and let $n,d\in\mathbb N$.
Given a Glauber trajectory evolving according to an $(\alpha,d)$-sparse GGM with precision matrix $\Theta$ and whose length satisfies
\[
T=\Omega\!\left(\frac{d^3\log(n/\delta)}{\alpha^5\varepsilon^5}\right),
\]
there is a polynomial-time algorithm that outputs an estimate $\widehat{\Theta}$ such that
\[
\left\lvert\widehat{\Theta}_{ij}-\Theta_{ij}\right\rvert \le \varepsilon |\Theta_{ij}|
\quad\forall i,j,
\]
with success probability $1-\delta$.
\end{theorem}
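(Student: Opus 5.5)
We follow the three-stage outline of the abstract: normalize the diagonal, build a local edge statistic from short update windows, and aggregate with a median. Throughout, we use that the trajectory exposes the update times $S^{(\ell)}$, the updated indices $I^{(\ell)}$, and the values $X^{(\ell)}$. Each update of coordinate $i$ draws a fresh sample $X_i^{(\ell)} = m_i(X^{(\ell-1)}) + \xi$, with $m_i(x) = -\sum_{j\in N(i)}(\Theta_{ij}/\Theta_{ii})x_j$ and $\xi\sim\mathcal N(0,1/\Theta_{ii})$ independent of the past. All statistics will be built from such "fresh-noise" events, so that each has a known conditional law given the past, whatever the current (unmixed) state is.

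\textbf{Step 1: conditional variances.} If coordinate $i$ is updated twice in a short window with no intervening update of any neighbor, the difference of the two values is $\xi-\xi'\sim\mathcal N(0,2/\Theta_{ii})$. This holds exactly, independent of the state.

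The graph is unknown, so we cannot check "no neighbor updated" directly. Instead we use windows so short (length $\approx c\alpha\varepsilon/d$ in continuous time) that, with probability $1-O(\alpha\varepsilon)$, none of the at most $d$ neighbors rings. Contaminated windows are then a small adversarial fraction. A median of $|X_i^{(\ell)}-X_i^{(\ell')}|$ over windows, rescaled by the Gaussian median constant, gives $1/\Theta_{ii}$ to within a $(1\pm\varepsilon)$ factor. This is robust because the median tolerates a small constant fraction of corruptions.

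The number of such windows for coordinate $i$ in time $T$ is about $T\cdot(\alpha\varepsilon/d)$. A Chernoff bound plus a union bound over $n$ coordinates gives the $\log(n/\delta)$ factor. We then rescale $Y_i\mapsto \sqrt{\widehat\Theta_{ii}}\,Y_i$, which reduces to $\Theta_{ii}\approx 1$ without changing the support.

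\textbf{Step 2: local pairwise test.} To isolate $\Theta_{ij}$, consider the pattern "$i$ updated, then $j$ updated, then $i$ updated again", within a window of length $\approx\alpha\varepsilon/d$ containing no other updates among $N(i)$. The change in $X_i$ between the two $i$-updates is
\[
-\Theta_{ij}\,(X_j^{\mathrm{new}}-X_j^{\mathrm{old}}) + (\xi'-\xi).
\]
Conditionally on the past, $\Delta_j := X_j^{\mathrm{new}}-X_j^{\mathrm{old}}$ is Gaussian with variance at least $1/\Theta_{jj}\approx1$ around a state-dependent mean, and it is observed.

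The unknown state therefore enters only through $\Delta_j$, which we see. Regressing the $i$-increment on the observed $\Delta_j$ within the window, for example via the ratio statistic $Z = \Delta_i\,\mathrm{sign}(\Delta_j)$ or a median of slopes, gives a per-window estimate whose conditional distribution depends only on $\Theta_{ij}$. This is where the bounded degree and the lack of any spectral assumption matter: we never need the state to be typical.

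\textbf{Step 3: robust aggregation under dependence.} Windows are selected by a stopping-time rule from the Poisson clocks, and their fresh noises are independent of everything earlier. So the indicator "estimate in window $k$ exceeds $\theta$" minus its conditional probability forms a martingale difference sequence. Azuma/Freedman then gives concentration of the empirical CDF at a few thresholds, hence of the median (or median-of-means), with no mixing required.

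The main obstacle is this step combined with the contamination accounting. Windows with stray neighbor updates bias each estimate by an amount proportional to the unknown, possibly huge, state $\|X_{N(i)}\|$. Such windows must therefore be treated as fully adversarial, and their fraction must be driven down to $O(\alpha\varepsilon)$, so that a median of an estimator with noise scale $1$ resolves a signal of size $\alpha\varepsilon$. Balancing window length (cost $d$), the required signal resolution (cost $1/(\alpha\varepsilon)^2$), and the rarity of the three-event pattern (further $\alpha\varepsilon$ factors) is how we expect to arrive at $T=\Omega(d^3\log(n/\delta)/(\alpha\varepsilon)^5)$.

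Finally, we union bound over all pairs $i,j$, and translate the multiplicative guarantee back through the Step 1 rescaling, using the estimated diagonal, to obtain the bound in \Cref{thm:main-theorem-parameter-learning}.
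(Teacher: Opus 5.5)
There is a genuine gap in Step 2, and it is the pitfall the paper's construction is designed to avoid. In an $iji$ window the middle $j$-update reads the value of $X_i$ produced by the first $i$-update. So the noise $\xi$ that you subtract in $\xi'-\xi$ also sits inside your regressor. Work in normalized units with $\theta=\Theta'_{ij}$, on a window with no stray neighbor updates. Then $\Delta_j=a-\theta\xi+\xi_j$, where $a$ is a function of the state at the start of the window, and $\Delta_i=-\theta\Delta_j+\xi'-\xi$. Gaussian conditioning gives
\[
\mathbb E\left[\Delta_i \mid \Delta_j,\text{past}\right]
=-\frac{\theta^3}{1+\theta^2}\,\Delta_j-\frac{\theta}{1+\theta^2}\,a .
\]
So the per-window slope is centered near $-\theta^3/(1+\theta^2)=O(\alpha^3)$ when $a\approx 0$, and near $-\theta$ when $\lvert a\rvert\gg 1$. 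The unknown state therefore does not enter ``only through $\Delta_j$'': it also enters through the mean $a$ of $\Delta_j$. Without mixing, nothing controls how $a$ is distributed over the accepted windows, so your median has no fixed target. On windows with $a\approx0$, a true edge produces a signal far below the $\Omega(\alpha)$ needed even for structure recovery, let alone the $\alpha\varepsilon$ resolution you need here. The paper uses $iji$ only in its mixing-based result, where stationarity fixes the law of $a$ and the target becomes $-\theta/2$. Your alternative $Z=\Delta_i\,\mathrm{sign}(\Delta_j)$ has a further defect: even without this dependence, its location $-\theta\lvert\Delta_j\rvert$ changes from window to window.

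The missing idea is to prepend a second $i$-update. Search for $iiji$ windows and use $\Delta_i=Y_i^{(4)}-Y_i^{(1)}$ and $\Delta_j=Y_j^{(3)}-Y_j^{(0)}$. The $j$-update now sees only the noise of the second $i$-update. Hence on a strict window $\Delta_i=-\theta\Delta_j+\varepsilon_4-\varepsilon_1$, with $\varepsilon_4-\varepsilon_1\sim\mathcal N(0,2)$ independent of $\Delta_j$ and of the start state. Then condition on $\lvert\Delta_j\rvert\ge c$. This holds with constant probability by anti-concentration of the fresh $j$-noise, and only inflates the corruption rate by a constant factor. After this conditioning, $-\Delta_i/\Delta_j$ is Gaussian around the common value $\theta$ (up to the factor $c_j/c_i$ from normalization) with $O(1)$ variance. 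That is exactly what a martingale median argument like your Step 3 needs.

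The extra update is also where the stated rate comes from, and your accounting does not match your own construction. With window length $L\asymp\alpha\varepsilon/d$, a $k$-update pattern occurs with probability $\Theta(L^k)$. Collecting $\Theta((\alpha\varepsilon)^{-2}\log(n/\delta))$ accepted windows therefore costs total length $\Theta((\alpha\varepsilon)^{-2}L^{1-k}\log(n/\delta))$. That is $d^2/(\alpha\varepsilon)^4$ for $k=3$, better than the theorem, which is itself a hint that something is missing. The theorem's $d^3/(\alpha\varepsilon)^5$ is exactly the cost for $k=4$.

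Finally, the guarantee requires $\widehat\Theta_{ij}=0$ exactly on non-edges. So you need an explicit thresholding step before rescaling by $\sqrt{\widehat\Theta_{ii}\widehat\Theta_{jj}}$. Your Steps 1 and 3 are otherwise in line with the paper.
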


We also show the following result when explicit dependence on the mixing time is allowed, without requiring any assumption on the precision matrix beyond the~\ref{eq:non-degeneracy} condition.

\begin{theorem}[Structure Learning with Mixing]
\label{thm:main-mixing-theorem}
There is a polynomial-time algorithm with the following guarantee. For some absolute constant $c>0$, given a length-$T$ Glauber trajectory from an $(\alpha,d)$-sparse Gaussian graphical model on $n$ vertices, the algorithm recovers the sparsity pattern $\mathrm{supp}(\Theta)$ with probability at least $1-\delta$, provided the trajectory length satisfies
\[
T = \Omega\!\left(
\frac{\log(n/\delta)}{\alpha^2}
\left(
t_{\mathrm{mix}}(c\alpha)+\frac{d^2}{\alpha^2}
\right)
\right).
\]
\end{theorem}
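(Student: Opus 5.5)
The plan is to reduce to the i.i.d.\ setting by subsampling the trajectory at spacing $t_{\mathrm{mix}}(c\alpha)$, and then to run a polynomial-time structure learner whose i.i.d.\ sample complexity is $O(\log(n/\delta)(1+d^2/\alpha^2)/\alpha^2)$. That learner will be built from the paper's own local statistics rather than from the exhaustive $\ell_0$ search of \cite{misra2020information}.

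\textbf{Step 1 (subsampling).} Set $\tau = t_{\mathrm{mix}}(c\alpha)$ and record $Z_k = Y^{(k\tau)}$ for $k=1,\dots,m$, where $m\approx T/\tau$. By the Markov property, each $Z_k$, conditioned on $Z_1,\dots,Z_{k-1}$, has law within total variation $c\alpha$ of $\mathcal{N}(0,\Sigma)$. Note that $c\alpha$ is a constant-order accuracy, not $\delta/m$, so a naive union bound over the $m$ samples cannot be used. Instead, the estimator has to be robust to a $c\alpha$-fraction of arbitrarily corrupted samples: couple each $Z_k$ with an exact stationary draw, and record a "bad" indicator for when the coupling fails. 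Given the past, each indicator has conditional probability at most $c\alpha$, so by a martingale Chernoff bound at most a $2c\alpha$-fraction of the $m$ samples are bad, with probability $1-\delta/2$, once $m\gtrsim \log(1/\delta)/\alpha$.

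\textbf{Step 2 (local windows after each sample).} At stationarity, use the short window of length $O(d^2/\alpha^2)$ that follows each subsampled point, without waiting for further mixing. This is where the $d^2/\alpha^2$ term comes from. Within each window, apply the paper's rescaling and local edge test: conditional variances are read off from resampled values, and pairwise influence from the updates of $i$ that follow updates of $j$. Each window then yields a statistic for every pair $(i,j)$ whose mean, under stationarity, separates $|\Theta'_{ij}|\ge\alpha$ from $\Theta'_{ij}=0$ by a margin of order $\alpha$.

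\textbf{Step 3 (median-of-means aggregation).} Let $p$ be the probability that a single good window gives the correct side of the threshold $\alpha/2$. Because the good windows are i.i.d.\ at stationarity, a median over blocks of windows gives the correct decision unless more than a constant fraction of the blocks are wrong. With $O(\log(n/\delta)/\alpha^2)$ windows, the Chernoff bound and a union bound over the $n^2$ pairs give failure probability at most $\delta/2$. The $2c\alpha$-fraction of corrupted windows shifts the median's rank by only a small constant fraction, so $c$ small enough absorbs it. The total length is (number of windows) $\times$ (spacing $\tau$ plus window length $d^2/\alpha^2$), which matches the claimed bound.

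\textbf{Main obstacle.} The hardest step is Step 2: showing that one window of length $O(d^2/\alpha^2)$ started from an exact stationary draw gives a statistic correct with constant probability (or with variance $O(1)$ relative to an $\alpha$-separation), using only the non-degeneracy condition. I would first try the single-pair update statistic: the product of the increment of $X_i$ and the prior value of $X_j$ just after an update of $j$, normalized by the estimated conditional variances. I would then bound its second moment using the stationary marginals $\mathrm{Var}(X_i)\,\Theta_{ii}\ge 1$ together with Gaussian hypercontractivity. If that fails, I would fall back on invoking the window-level guarantee of the no-mixing algorithm (\Cref{thm:main-theorem-parameter-learning}) as a black box, applied per window.
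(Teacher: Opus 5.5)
Your outer architecture is the paper's. You wait $t_{\mathrm{mix}}(c\alpha)$ (the paper takes $c=1/4000$), inspect a post-mixing window, charge coupling failures as $O(\alpha)$-rate corruption, and aggregate $O(\log(n/\delta)/\alpha^2)$ samples robustly. The gap is the step you flag as the main obstacle: it is the real content of the proof, and neither of your options closes it.

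The paper's key fact is \Cref{lem:iji-joint}. On a clean window started from exact stationarity, $R=\Theta'Y'^{(0)}\sim\mathcal N(0,\Theta')$ has unit variances and $\operatorname{Cov}(R_i,R_j)=\theta=\Theta'_{ij}$. Hence the \emph{increments} $(\Delta'_j,\Delta'_i)$ are centered Gaussian with variances $2$ and covariance $-\theta$, with no trace of the possibly ill-conditioned $\Sigma'$.

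Your candidate multiplies an $i$-increment by a \emph{raw} value of $X_j$. With the pre-update value of $X_j$ on a clean stationary $ji$ window, this product does have mean $\theta$, but its variance is $2\Sigma'_{jj}+\theta^2$. Under non-degeneracy alone $\Sigma'_{jj}$ is unbounded: for $B_\varepsilon$ in \Cref{app:nondegeneracy-conditioning}, $\Sigma'_{11}\approx 1/(2\varepsilon)$. The bound $\operatorname{Var}(X_i)\Theta_{ii}\ge 1$ you cite points the wrong way for a second-moment bound, and hypercontractivity controls tail shape, not this scale. The missing idea is to replace $X_j$ by its increment. At stationarity this already works for your $ji$ window, since the paper's leading $i$-update only re-draws a stationary state.

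Your fallback fails on length. The no-mixing guarantee only applies to trajectories of length $\Omega(d^3/\alpha^5)$. Even the $iiji$ test by itself occurs with probability $\Theta(T_0^4)$ per sub-interval of length $T_0=\Theta(\alpha/d)$; this $T_0$ is forced by keeping the non-strictness rate $O(dT_0)$ at $O(\alpha)$. So that test needs windows of length $\Theta(d^3/\alpha^3)$, not $O(d^2/\alpha^2)$. In the paper, the $d^2/\alpha^2$ term is exactly $NT_0=50/T_0^2$, the cost of finding one three-update $iji$ pattern.

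Step~3 also fails as written. With per-window noise of order one, a block needs $\Theta(1/\alpha^2)$ windows for its mean to land on the correct side of an $\alpha/2$ threshold. That leaves only $O(\log(n/\delta))$ blocks. Meanwhile a $2c\alpha$ fraction of $\Theta(\log(n/\delta)/\alpha^2)$ windows is $\Theta(c\log(n/\delta)/\alpha)$ corrupted windows, enough to spoil every block unless $c=O(\alpha)$, which is not an absolute constant. Median-of-means only reaches error of order $\sqrt{\eta}$ under an $\eta$ fraction of corruption, far above $\alpha$ when $\eta=\Theta(\alpha)$.

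The paper instead takes the median of the individual samples (\Cref{lem:advrobustmeanest}), whose error is $O(\eta)$. That lemma needs each clean sample, conditionally on the past, to be centered at one common location with Gaussian-dominated tails of bounded scale. Conditioning on $|\widehat\Delta_j|\ge 1.1$ is what delivers this. By the Gaussian regression $\Delta'_i=-\tfrac{\theta}{2}\Delta'_j+\zeta$, with $\zeta$ independent of $\Delta'_j$, the ratio $-2\widehat\Delta_i/\widehat\Delta_j$ is a Gaussian scale mixture centered at $\frac{c_j}{c_i}\theta$ with scale below $2.86$ (\Cref{lem:mixed-ratio}). A corruption rate $\eta<\alpha/35$ then keeps the median within $9\alpha/22$ of that center.
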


The proof uses the shorter ``$iji$'' pattern together with mixing gaps between accepted windows. On a clean nearly stationary window, the edge signal appears as the regression coefficient between the $j$-increment and the later $i$-increment, so after conditioning on a large $j$-increment the resulting ratio statistic is a bounded-variance estimate of \(\Theta'_{ij}\).

In particular, when the mixing time is guaranteed to be small, for example \(t_{\mathrm{mix}}(c\alpha)=O(\log(1/\alpha))\), the bound becomes \(T = O\!\left(\frac{d^2\log(n/\delta)}{\alpha^4}\right)\).
This improves the bound in \Cref{thm:main-theorem-structure-learning} by a factor of \(d/\alpha\). Moreover, relative to the i.i.d.\ setting, the algorithm effectively requires \(\log(n/\delta)/\alpha^2\) nearly independent samples, matching the lower bound for i.i.d.\ data shown in \cite{wang2010information}.

Similar results with mixing were obtained previously in \cite{tirukkonda2025structure}. Compared with that work, \Cref{thm:main-mixing-theorem} gives an improved dependence on \(\log(1/\delta)\) and does not require any additional assumptions on the precision matrix. We discuss the extra assumptions in \cite{tirukkonda2025structure} in \Cref{sec:related-work}.

Finally we discuss the practicality of our algorithms.
\begin{remark}[On Practicality]
Despite our asymptotic improvements in trajectory length,
constants that appear in our analysis preclude the practicality of our algorithm.
We believe that these constants are not significantly improvable,
and indeed, our experiments suggest that the trajectory lengths required by our algorithms are too large to be practical.
We briefly explain the theoretical necessity for these large constants
at the end of \Cref{sec:main-algorithm}. Obtaining practical algorithms for this problem is an interesting direction for future work.
\end{remark}

To see a full proof of \Cref{thm:main-theorem-structure-learning,thm:main-theorem-parameter-learning} see \Cref{sec:main-algorithm}. For a full proof of \Cref{thm:main-mixing-theorem}, see \Cref{sec:mixed_ub}. For a technical overview of the algorithm see \Cref{sec:technical-overview}.

\subsection{Related work}
\label{sec:related-work}

\paragraph{Gaussian GGMs from Glauber dynamics.}
Most closely related to our work is the recent work of Tirukkonda, Rayas, and Dasarathy \cite{tirukkonda2025structure},
which gives the first algorithmic guarantees for structure learning in Gaussian graphical models from a single Glauber trajectory,
along with information-theoretic lower bounds.
In \Cref{sec:serious-gap} we discuss a technical issue in the proof of one of their lemmas.
{
After we notified the authors of this gap and posted a preliminary version of this paper on the MIT SPUR website\footnote{\href{https://math.mit.edu/documents/spur/2025Shen_Wu.pdf}{https://math.mit.edu/documents/spur/2025Shen\_Wu.pdf}}, they addressed the issue caused by looking at ``$iji$'' updates by adopting an approach similar to ours, based on ``$iiji$" updates. However, both the earlier version and the updated version of their manuscript still rely crucially on several regularity assumptions beyond a minimum edge-strength assumption.
}

Concretely, in addition to requiring $|\Theta_{ij}|/\Theta_{ii}\ge \beta_{\min}$ for all $(i,j)\in E$, they assume
(i) an upper bound $|\Theta_{ij}|/\Theta_{ii}\le \beta_{\max}$ for all $(i,j)\in E$ (Assumption~A1),
(ii) bounded marginals $\Sigma_{ii}\le \sigma_{\max}^2$ and non-degenerate conditionals $1/\Theta_{ii}\ge \sigma_{\min}^2$ (Assumption~A2),
and (iii) a sample decay condition $d\,\beta_{\max}<1-\Omega(1)$ (Assumption~A3).\footnote{As stated it is written as $d\,\beta_{\max}<1$; however, the proof of Lemma~4 explicitly uses that this gap is at least a constant.}
Meanwhile, we make no additional assumptions beyond the minimum edge-strength assumption.
Most importantly, in the diagonal-$1$ and $d=O(1)$ setting, the last assumption implies very fast mixing via the Gershgorin circle theorem:
$\lambda_{\min}(\Theta) \ge 1-d\beta_{\max} = \Omega(1)$ and $\lambda_{\max}(\Theta) \le 1+d\beta_{\max}=O(1)$ (see \Cref{eq:mixing-time}).
Therefore, under their assumptions one can essentially simulate approximate i.i.d.\ samples by observing the Glauber trajectory at constant time intervals.
However, in our setting, the mixing time may be arbitrarily slow. 
{
Under constant mixing-time bounds and polynomial runtime, \Cref{thm:main-mixing-theorem} improves the dependence on $d$ and $\alpha$ by a factor of $d / \alpha$ compared with their algorithm. In addition, our result avoids the extra assumptions A1 and A2 required in their analysis.
}


\newcommand{\cmark}{\checkmark}
\newcommand{\xmark}{\ensuremath{\times}}

\begin{table}[t]
\scriptsize
\renewcommand{\arraystretch}{1.45}
\centering
\label{tab:theorem-comparison}
\renewcommand{\arraystretch}{1.2}
\begin{tabular}{@{}l c c c c@{}}
\toprule
\textbf{Result}
& \textbf{Trajectory length}
& \textbf{Polynomial runtime}
& \textbf{No mixing}
& \textbf{Extra assumptions} \\
\midrule

\cite{tirukkonda2025structure} Theorem~1
&
$O \left(d^3\cdot \left(\dfrac{\log^{^{6/5}} (n/\delta)\sigma^2_{\max}}{(1-d\beta_{\max})^2\sigma^2_{\min}\beta_{\min}}\right)^5 \right)$
&
\cmark
&
\xmark
&
\cite{tirukkonda2025structure}'s A[1-3]

\\
\addlinespace[0.5em]

\cite{tirukkonda2025structure} Theorem~2
&
$\tilde{O}\left(\dfrac{d \log^2(n/\delta)}{(1-d \beta_{\max}) \alpha^2}\right)$
&
\xmark
&
\xmark
&
\cite{tirukkonda2025structure}'s A[1-3]

\\
\addlinespace[0.5em]

\Cref{thm:main-mixing-theorem}
&
$O\left(
\dfrac{\log(n/\delta)}{\alpha^2}
\left(
t_{\mathrm{mix}}(\alpha)+\dfrac{d^2}{\alpha^2}
\right)
\right)$
&
\cmark
&
\xmark
&
None

\\
\addlinespace[0.5em]

\Cref{thm:main-theorem-structure-learning}
&
$O\left(\dfrac{d^3 \log(n/\delta)}{\alpha^5}\right)$
&
\cmark
&
\cmark
&
None

\\
\bottomrule
\end{tabular}
\caption{Comparison of structure learning results. In the notation of \cite{tirukkonda2025structure}, $\beta_{\max}, \beta_{\min}$ denote the largest and the smallest off diagonal value in the precision matrix. $\sigma^2_{\max}$ is the upper bound on the diagonal entries on the covariance matrix and $\sigma^2_{\min}$ is the lower bound on $1/\Theta_{ii}$. $\tilde{O}$ hides lower order logarithmic factors in all parameters.}
\end{table}

\paragraph{Learning graphical models from Glauber dynamics.}
A growing line of work studies structure learning when observations arise from local Markov dynamics rather than i.i.d.\ sampling.
Bresler, Gamarnik, and Shah \cite{bresler2017learning} initiated this direction for discrete graphical models,
showing that observing a single-site Glauber trajectory can make structure learning computationally tractable for Ising models.
More recently, Gaitonde and Mossel \cite{gaitonde2024unified} gave a unified analysis for learning Ising models from Glauber trajectories, and Gaitonde, Moitra, and Mossel \cite{gaitonde2025better} gave the first efficient algorithms in the observation model that reveals only actual configuration changes rather than all update attempts, with extensions to reversible single-site chains such as Metropolis.
In a different direction, Gaitonde, Moitra, and Mossel \cite{gaitonde2025bypassing} show that for higher-order Markov random fields, access to Glauber dynamics trajectories can bypass i.i.d.\ hardness barriers (e.g., noisy parity), yielding efficient learning.

Technically, our algorithm is close in spirit to several methods in this line of work: we examine short windows of the trajectory to probe
the interaction between a candidate pair $(i,j)$, and we exploit sparsity to ensure that with sufficiently large probability no neighbor of
$i$ or $j$ updates within the window, allowing the local effect of $(i,j)$ to be isolated from confounding updates. That said, structure learning for GGMs differs in important ways from the Ising setting.
First, the variables are continuous, so one cannot rely on discrete indicator statistics present in previous work whose empirical averages directly estimate event probabilities.
Second, in the Gaussian case a key difficulty is \emph{anti-concentration}: to detect an edge $(i,j)$ one needs the neighbor's influence on the conditional mean to be typically non-negligible.
Unlike the Ising setting—where boundedness and discrete concentration can often be leveraged—Gaussian anti-concentration depends on the scale of the conditional variance (equivalently, on $\Theta_{ii}$) and can degrade without additional regularity beyond $(\alpha,d)$.
As a result, while the high-level philosophy is shared, the Gaussian case requires new technical ideas.

\section{Technical Overview}
\label{sec:technical-overview}

At a high level, our algorithm has three ingredients.
First, we estimate the diagonal entries $\Theta_{ii}$ from short windows of the trajectory and use them to normalize the model so that the precision matrix has (approximately) unit diagonal.
Second, for each candidate edge $(i,j)$, we look for short update patterns that isolate the influence of $j$ on a later update of $i$.
Third, because we cannot directly tell whether hidden neighbor updates occurred inside a window, we treat such windows as contaminated and aggregate many windows using robust median-based estimators.

We begin with diagonal estimation because it already contains the main ideas of the full algorithm: short informative windows, unobserved contamination, and robust aggregation.
We then explain how to estimate off-diagonal entries, and finally describe a simpler variant that is available when the chain is allowed to mix between samples.

\paragraph{Two nearby $i$-updates reveal $\Theta_{ii}$.}
Suppose that at some point in the trajectory the $i$th coordinate updates twice, with no update to any neighbor of $i$ in between.
Write the corresponding states as
\[
X^{(0)}\stackrel{i}\longrightarrow X^{(1)}\stackrel{i}\longrightarrow X^{(2)}.
\]
Since no neighbor of $i$ changes between the two $i$-updates, both updates use the same conditional mean.
Hence
\[
X_i^{(1)}-X_i^{(2)}\sim \mathcal{N}\!\left(0,\frac{2}{\Theta_{ii}}\right).
\]
So every such window gives a sample whose variance is exactly $2/\Theta_{ii}$.
This turns diagonal estimation into a one-dimensional robust variance-estimation problem.

\paragraph{The ``ii'' pattern and hidden contamination.}
Exact consecutive ``ii'' updates are too rare to be useful on the trajectory lengths we target.
Instead, we divide the trajectory into short windows of length $T$ and keep every window that contains at least two updates to coordinate $i$.
A window is \emph{clean} if, in addition, no neighbor of $i$ updates inside that window.
On a clean window, the same calculation as above shows that the difference between the two updated values of coordinate $i$ is distributed as $\mathcal{N}(0,2/\Theta_{ii})$.

The difficulty is that cleanliness depends on the unknown neighborhood of $i$, so we cannot test it directly.
We therefore keep all windows with two $i$-updates and view the non-clean ones as contaminated samples.
Because the window is short and the graph is $d$-sparse, the contamination rate is small.
Taking the median absolute deviation of the resulting samples gives a robust estimate of $\Theta_{ii}$.

\paragraph{Normalization.}
Using a small initial portion of the trajectory, we estimate every diagonal entry $\Theta_{ii}$.
We then rescale coordinate $i$ by $\sqrt{\Theta_{ii}}$ (or, in the actual algorithm, by its estimate), replacing each state $X\in\mathbb R^n$ with $X'$ defined by
\(X_i' = \sqrt{\Theta_{ii}}\,X_i\).
Under exact normalization, $X'$ is again a Glauber trajectory, now for a precision matrix with diagonal entries equal to $1$.
With estimated diagonals, the normalized trajectory has diagonal entries within $1\pm \varepsilon$, and the later analysis is stable to this approximation.
For the overview, we therefore assume from this point onward that the diagonal is exactly $1$.

\paragraph{A naive ``iji'' test for $\Theta_{ij}$.}
To estimate an off-diagonal entry $\Theta_{ij}$, the most natural idea is to look for a short window of the form
\[
X^{(1)}\stackrel i\longrightarrow X^{(2)}
\stackrel j\longrightarrow X^{(3)}
\stackrel i\longrightarrow X^{(4)},
\]
again with no neighbor of $i$ or $j$ updating inside the window.
On such a clean window,
\begin{align*}
X_i^{(2)} &= \sum_{k\ne i} -\Theta_{ik}X_k^{(1)} + \varepsilon^{(2)},\\
X_i^{(4)} &= \sum_{k\ne i} -\Theta_{ik}X_k^{(3)} + \varepsilon^{(4)},
\end{align*}
with $\varepsilon^{(2)},\varepsilon^{(4)}\sim \mathcal{N}(0,1)$ i.i.d.
The only relevant change between the two conditional means is the value of coordinate $j$, so
\[
X_i^{(4)}-X_i^{(2)}
= -\Theta_{ij}\bigl(X_j^{(3)}-X_j^{(1)}\bigr)
+ \varepsilon^{(4)}-\varepsilon^{(2)}.
\]
This suggests estimating $\Theta_{ij}$ from the ratio
\[
\frac{X_i^{(4)}-X_i^{(2)}}{X_j^{(3)}-X_j^{(1)}}.
\]
To keep the denominator well behaved, we further condition on \(\lvert X_j^{(3)}-X_j^{(1)}\rvert>c\) for a fixed constant $c>0$.

\paragraph{Why the ``iji'' test fails.}
The problem is a subtle dependence issue.
In the ``iji'' pattern, the middle $j$-update depends on the value produced by the first $i$-update.
As a result, the noise term from the first $i$-update is not independent of the denominator $X_j^{(3)}-X_j^{(1)}$.
So the ratio above is not centered in the simple way suggested by the heuristic calculation.
This is the main obstacle that forces us away from the naive ``iji'' statistic.

\paragraph{The ``iiji'' pattern breaks the dependence.}
To remove this dependence, we prepend one extra $i$-update and instead search for windows of the form
\[
X^{(0)}\stackrel i\longrightarrow X^{(1)}\stackrel i\longrightarrow
X^{(2)}\stackrel j\longrightarrow X^{(3)}\stackrel i\longrightarrow X^{(4)},
\]
where the visible event requires that $i$ updates in the first, second, and
fourth quarters, and that $j$ updates in the third quarter, with no off-pattern
$i/j$ updates. A window is clean if, in addition, no coordinate in
$(N(i)\cup N(j))\setminus\{i,j\}$ updates in the window. Conditioned on
$X^{(0)}$, the extra $i$-update acts as a refresh step: the second $i$-update is
independent of the first, so the noise introduced at $X^{(1)}$ is independent of
the later state of coordinate $j$.
This yields the relation
\[
X_i^{(4)}-X_i^{(1)}
= -\Theta_{ij}\bigl(X_j^{(3)}-X_j^{(0)}\bigr)
+ \varepsilon^{(4)}-\varepsilon^{(1)},
\]
where $\varepsilon^{(1)},\varepsilon^{(4)}\sim \mathcal{N}(0,1)$ are independent.
After conditioning on \(\bigl\lvert X_j^{(3)}-X_j^{(0)}\bigr\rvert>c\),
the ratio statistic
\[
\frac{X_i^{(4)}-X_i^{(1)}}{X_j^{(3)}-X_j^{(0)}}
= -\Theta_{ij} + \frac{\varepsilon^{(4)}-\varepsilon^{(1)}}{X_j^{(3)}-X_j^{(0)}},
\]
is centered at $-\Theta_{ij}$ and has variance bounded by a constant.
Thus each clean ``iiji'' window gives a noisy but informative estimate of $\Theta_{ij}$.

\paragraph{Robust aggregation.}
As in the diagonal-estimation step, we cannot directly verify that a window is
clean, because the unknown graph determines which coordinates count as hidden
neighbors. We therefore select windows using only the visible ``iiji'' pattern
and treat windows with hidden neighbor updates as contaminated.
Moreover, the variance of the ratio statistic can vary from one accepted window to another, and successive windows are not independent because they come from a single trajectory.
Nevertheless, the dependence is structured enough that martingale concentration can be used in place of the usual independent-sample Chernoff argument.
This shows that the sample median remains a robust estimator of the common location parameter $-\Theta_{ij}$.
Estimating every $\Theta_{ij}$ to additive error $O(\alpha)$ is then enough to recover the graph structure.
With tighter parameter settings, the same framework also yields multiplicative estimation of the full precision matrix.

\paragraph{Learning with mixing.}
We also give a simpler and more sample-efficient algorithm when the mixing time is allowed to enter the bound.
We insert waiting periods of length about $t_{\mathrm{mix}}$ between accepted
windows so that each accepted window begins from an almost stationary state.
In this regime, the shorter ``iji'' pattern suffices. On a clean stationary
window of the normalized trajectory,
\[
X'^{(0)}\stackrel i\longrightarrow X'^{(1)}\stackrel j\longrightarrow
X'^{(2)}\stackrel i\longrightarrow X'^{(3)},
\]
the pair
\[
\left(
X'^{(2)}_j-X'^{(0)}_j,\;
X'^{(3)}_i-X'^{(1)}_i
\right)
\]
is centered Gaussian with covariance matrix
\[
\begin{pmatrix}
2 & -\Theta'_{ij}\\
-\Theta'_{ij} & 2
\end{pmatrix}.
\]
Thus the edge signal appears in the covariance, equivalently in the regression
coefficient of the later $i$-increment on the earlier $j$-increment. After
passing to the observable trajectory and conditioning on a large
$j$-increment, the ratio statistic
\(
-2\widehat\Delta_i/\widehat\Delta_j
\)
is a bounded-variance noisy estimate of $\Theta'_{ij}$. A robust median over
well-separated epochs then recovers the edge, improving the dependence on $d$
and $\alpha$ when mixing is available.

\paragraph{Information-theoretic lower bound.}
Finally, we prove an information-theoretic lower bound showing that logarithmic trajectory length is unavoidable even without computational constraints.
We construct a family of GGMs on $2n$ vertices whose graphs are disjoint unions of edges, with each candidate obtained by deleting one edge from a perfect matching.
The resulting trajectories are hard to distinguish from one another.
By upper-bounding the pairwise KL divergence and applying Fano's inequality, we obtain a lower bound on the number of Glauber updates required for structure learning.

Our lower bound also differs from \cite{tirukkonda2025structure} in both the hard family and the KL calculation. Instead of working with a broader graph-class construction, we use a simple family obtained from a perfect matching by deleting a single edge. More importantly, we work directly with the exact KL divergence of the Gaussian conditional-update distributions, rather than conditioning on small updates and comparing truncated Gaussians. This yields a $\log n$ improvement in the lower bound. Closing the remaining gap to our current no-mixing upper bounds is an interesting open problem.

\section{Preliminaries}
\label{sec:preliminaries}

\subsection{Continuous-time Glauber dynamics}
We consider Glauber dynamics
in the continuous-time setting.
The initial configuration is arbitrary,
and each coordinate $i=1,\ldots,n$ updates according
to an independent Poisson process of rate 1.
The following lemma records the relevant update probabilities:
\begin{lemma}[Lemma~3.1 in \cite{gaitonde2025bypassing}]\label{lem:poissonbounds}
Given an interval $I\subset\mathbb R_{\ge0}$ of length $T$,
let $U_I$ denote the set of coordinates that update in $I$.
For any $S\subset[n]$ with $|S|=\ell$, we have
\begin{align*}
    \mathbb{P}[S\cap U_I=\varnothing]&=\exp(-T\ell),\\
    \text{and}\quad
    \mathbb{P}[S\subseteq U_I]&\ge1-\ell\exp(-T).
\end{align*}
\end{lemma}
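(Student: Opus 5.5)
The plan is to use the independence structure of the continuous-time model. Each coordinate $k\in[n]$ carries its own rate-$1$ Poisson clock, and these clocks are independent. This description is equivalent to \Cref{def:cont-glauber-dynamics}: superposing the $n$ clocks gives a rate-$n$ Poisson process of update times with $\mathrm{Exp}(n)$ gaps, and each mark is uniform on $[n]$ independently, by the thinning/superposition theorem for Poisson processes. For a fixed interval $I$ of length $T$, let $N_k(I)$ be the number of rings of clock $k$ in $I$. Then $N_k(I)\sim\mathrm{Poisson}(T)$, and the $N_k(I)$ are mutually independent across $k$.

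For the first identity, the event $S\cap U_I=\varnothing$ says exactly that $N_k(I)=0$ for every $k\in S$. Each factor has probability $\mathbb P[\mathrm{Poisson}(T)=0]=e^{-T}$. Independence over the $\ell$ coordinates of $S$ then gives $e^{-T\ell}$.

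For the second bound, take complements and use a union bound:
\[
\mathbb P[S\not\subseteq U_I]=\mathbb P\Bigl[\textstyle\bigcup_{k\in S}\{N_k(I)=0\}\Bigr]\le \sum_{k\in S} e^{-T}=\ell e^{-T}.
\]
This yields $\mathbb P[S\subseteq U_I]\ge 1-\ell e^{-T}$. The same computation with independence gives the exact value $(1-e^{-T})^\ell$, which dominates $1-\ell e^{-T}$ by Bernoulli's inequality. Either route suffices.

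The only nontrivial step is justifying the equivalence between the "rate-$n$ clock with uniform index" description and the "$n$ independent rate-$1$ clocks" description. This follows from the standard marking theorem, which I would cite rather than reprove. Note also that the update rule (the Gaussian resampling) plays no role here: the events in question depend only on the clocks, and the clocks are independent of the initial state and of the resampled values.
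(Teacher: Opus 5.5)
Your proof is correct and is essentially the paper's argument: the paper just notes that a rate-$1$ Poisson clock misses $I$ with probability $e^{-|I|}$ and says both bounds follow immediately, which is exactly your independence computation for the first identity and your union bound for the second. Your remark reconciling the independent per-coordinate clocks with the rate-$n$ description in \Cref{def:cont-glauber-dynamics} (via superposition/marking) fills in a step the paper leaves implicit, since the paper simply adopts the independent-clocks description in its preliminaries.
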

\begin{proof}
    For each Poisson process $\Pi$ of rate 1
    and interval $I\subseteq\mathbb R_{\ge0}$,
    we have $\mathbb{P}[\Pi\cap I=\varnothing]=\exp(-|I|)$,
    and the claimed bounds follow immediately.
\end{proof}

\subsection{Observing patterns}
Throughout the paper, we look for intervals
containing a specific ``pattern,'' which we concretely define below:
\begin{definition}[Pattern exhibition and strictness]
\label{def:pattern}
    A \emph{pattern} of length $k$
    is a sequence of indices $P=(i_1,i_2,\ldots,i_k)\in\{1,\ldots,n\}^k$.
    Consider a pattern $P$
    and an interval $I=[t,t+T)$ of a continuous-time single-site update Glauber trajectory.
    Let $I_j=[t+(j-1)T/k,t+jT/k)$ for $j=1,\ldots,k$.
    We say $I$ \emph{exhibits} pattern $P$
    if index $i_j$ updates in $I_j$ for each $j=1,\ldots,k$.
    We say $I$ is \emph{strict} with respect to $P$
    if, for each $j=1,\ldots,k$, no neighbor of any vertex appearing in $P$,
    other than $i_j$, updates in $I_j$.
\end{definition}

We next bound the probabilities that a given interval exhibits a pattern
and is strict with respect to it.
\begin{lemma}\label{lem:strict}
    Let $I$ be an interval of length $T$,
    and let $P$ be a pattern containing $\ell$ distinct indices.
    Then with probability at least $\exp(-T\ell d)$, $I$ is strict with respect to $P$.
\end{lemma}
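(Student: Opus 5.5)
The plan is to reduce strictness to a finite collection of ``no update'' events for disjoint sets of coordinates on disjoint sub-intervals, and then to evaluate each one with \Cref{lem:poissonbounds}. Write $I=I_1\cup\cdots\cup I_k$ as in \Cref{def:pattern}, each $I_j$ of length $T/k$. Let $V_P$ be the set of $\ell$ distinct indices in $P$, and let
\[
B:=\bigcup_{v\in V_P} N(v).
\]
Since every vertex has degree at most $d$, we have $|B|\le \ell d$. Strictness says that for each $j$, no vertex of $B\setminus\{i_j\}$ updates during $I_j$. This event is implied by the event $E_j$ that no vertex of $B_j:=B\setminus\{i_j\}$ updates in $I_j$, and in fact coincides with it.

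The $k$ sub-intervals are disjoint. The update clocks are independent rate-$1$ Poisson processes, and a Poisson process has independent increments on disjoint intervals. Hence the events $E_1,\ldots,E_k$ are independent, since each depends only on the restriction of the clocks to its own $I_j$. By the first identity of \Cref{lem:poissonbounds}, applied to the interval $I_j$ of length $T/k$ and the set $B_j$,
\[
\mathbb P[E_j]=\exp\!\left(-\tfrac{T}{k}|B_j|\right)\ge \exp\!\left(-\tfrac{T}{k}\,\ell d\right).
\]

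Multiplying over $j=1,\ldots,k$ gives
\[
\mathbb P[I\text{ strict}]=\prod_{j=1}^k \mathbb P[E_j]\ge \exp\!\left(-k\cdot \tfrac{T}{k}\,\ell d\right)=\exp(-T\ell d).
\]
An alternative route avoids the independence argument altogether. The event that no vertex of $B$ updates anywhere in $I$ is contained in the strictness event. By \Cref{lem:poissonbounds} applied directly to $I$ and $B$, it has probability $\exp(-T|B|)\ge\exp(-T\ell d)$. I would present this shorter monotonicity argument as the main proof.

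The only point needing care is the reading of ``neighbor of any vertex appearing in $P$, other than $i_j$.'' It must be read as excluding only the coordinate $i_j$ itself. Indices of $P$ that are neighbors of one another, such as $j\in N(i)$, are allowed to lie in $B$, and the count $|B|\le \ell d$ covers them. Nothing else is an obstacle.
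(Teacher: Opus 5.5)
Your first argument is exactly the paper's proof: split $I$ into the $k$ subintervals, bound the no-update probability on each by $\exp(-\tfrac{T}{k}\ell d)$ via \Cref{lem:poissonbounds}, and multiply using independence of the clocks on disjoint subintervals.

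The monotonicity route you say you would make primary is a genuinely different argument, and it is also correct. The event that no vertex of $B$ updates anywhere in $I$ is contained in strictness. \Cref{lem:poissonbounds} gives this event probability $e^{-T|B|}\ge e^{-T\ell d}$, so no independence step across subintervals is needed.

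What it gives up is sharpness. When some $i_j\in B$ (e.g.\ $i\sim j$ for $P=(i,i,j,i)$), your smaller event forbids the very updates the pattern requires, so it is disjoint from exhibition. By contrast, the subinterval computation yields the exact strictness probability $\prod_{j}e^{-T|B\setminus\{i_j\}|/k}$.

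This costs nothing here. The paper only ever uses the marginal lower bound on strictness and decouples it from exhibition through \Cref{lem:pattern-independence}. The subinterval argument is the more robust of the two, though.
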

\begin{proof}
    Each index in $P$ has at most $d$ neighbors,
    so we are forbidding at most $\ell d$ neighbors from updating
    in each $I_j$. By \Cref{lem:poissonbounds},
    each $I_j$ avoids its respective indices
    with probability at least $\exp(-T/k\cdot\ell d)$.

    The events in the $k$ subintervals are independent,
    so the probability this holds for all $k$ subintervals
    is at least $\exp(-T\ell d)$.
\end{proof}
\begin{lemma}\label{lem:exhibit}
    Let $I$ be an interval of length $T\le1/3$,
    and let $P$ be a pattern of length $k$.
    Then with probability at least $T^k/(2k^k)$, $I$ exhibits $P$.
\end{lemma}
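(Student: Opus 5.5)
We split the interval $I$ into the $k$ subintervals $I_1,\dots,I_k$ of \Cref{def:pattern}, each of length $T/k$. The Poisson clocks of different coordinates are independent, and a single clock restricted to disjoint intervals gives independent counts. So the events ``$i_j$ updates in $I_j$'' are independent across $j$, even when the same index appears in several positions of $P$: for a repeated index, these are events of one Poisson process on disjoint intervals. Hence
\[
\mathbb{P}[I \text{ exhibits } P]=\prod_{j=1}^k \mathbb{P}[\text{index } i_j \text{ updates in } I_j] = \bigl(1-e^{-T/k}\bigr)^k,
\]
using the single-clock case of \Cref{lem:poissonbounds}, namely $\mathbb{P}[\Pi\cap I_j=\varnothing]=e^{-T/k}$.

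It remains to lower bound $1-e^{-x}$ for $x=T/k\le 1/3$. We use the elementary inequality $1-e^{-x}\ge x-x^2/2$. Since $x\le 1/3$, we have $x/2\le 1/6$, so $1-e^{-x}\ge x(1-x/2)\ge \tfrac{5}{6}x$. This gives
\[
\mathbb{P}[I \text{ exhibits } P]\ge \Bigl(\tfrac{5}{6}\Bigr)^k \frac{T^k}{k^k}.
\]

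The only remaining issue is the constant. The factor $(5/6)^k$ is not at least $1/2$ for every $k$, so the argument must exploit $x=T/k\le 1/(3k)$ more carefully. We sharpen the bound to $1-e^{-x}\ge x(1-x/2)\ge x\bigl(1-\tfrac{1}{6k}\bigr)$. Then
\[
\Bigl(1-\tfrac{1}{6k}\Bigr)^k \ge 1-\tfrac{k}{6k}=\tfrac56\ge \tfrac12
\]
by Bernoulli's inequality. This yields $\mathbb{P}[I\text{ exhibits }P]\ge \tfrac56\,T^k/k^k\ge T^k/(2k^k)$, as claimed. Apart from this constant bookkeeping, the argument is just independence of the clocks plus \Cref{lem:poissonbounds}; the one subtlety worth stating explicitly is the independence for repeated indices, which follows from the independent-increments property of Poisson processes.
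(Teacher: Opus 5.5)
Your proof is correct and follows essentially the same route as the paper: independence of the clocks over the $k$ disjoint subintervals gives $(1-e^{-T/k})^k$, which is then bounded below by an elementary inequality. The only difference is the inequality: the paper uses $1-e^{-x}\ge xe^{-x}$, so the $k$ factors collapse directly to $(T/k)^k e^{-T}\ge (T/k)^k e^{-1/3}$, whereas you needed a Bernoulli step after $1-e^{-x}\ge x-x^2/2$.
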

\begin{proof}
    By \Cref{lem:poissonbounds},
    \begin{align*}
    \mathbb{P}[I\text{ exhibits }P]
    &\ge(1-\exp(-T/k))^k\\
    &\ge\frac{T^k}{k^k}\exp(-T)
    \ge\frac{T^k}{2k^k},
    \end{align*}
    where we use the estimates $1-\exp(-x)\ge x\exp(-x)$ and $T\le1/3$.
\end{proof}
\begin{lemma}\label{lem:pattern-independence}
    For any interval $I$ and pattern $P$, the event that $I$ exhibits $P$ is
    independent of the event that $I$ is strict with respect to $P$.
\end{lemma}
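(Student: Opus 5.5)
The plan is to express both events as functions of the underlying Poisson clocks and to check that they depend on disjoint pieces of this randomness. In the continuous-time dynamics, which coordinates update and when is determined entirely by the independent rate-$1$ Poisson processes $\Pi_1,\dots,\Pi_n$, one per coordinate. It is independent of the Gaussian resampling values. Both "exhibits $P$" and "strict with respect to $P$" are events about update times only. So it suffices to work with the family of restricted point processes $\{\Pi_v|_{I_m} : v\in[n],\ 1\le m\le k\}$.

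First, since the $\Pi_v$ are mutually independent and each Poisson process has independent increments on disjoint sets, and since $I_1,\dots,I_k$ are disjoint, the restrictions $\Pi_v|_{I_m}$ over all pairs $(v,m)$ are mutually independent. Second, I will identify which pairs each event depends on. The event that $I$ exhibits $P$ is $\bigcap_{m=1}^k\{\Pi_{i_m}(I_m)\ge 1\}$, so it is measurable with respect to the pairs
\[
A=\{(i_m,m): 1\le m\le k\}.
\]
Let $N(P)$ denote the set of neighbors of vertices appearing in $P$. The strictness event is $\bigcap_{m=1}^k\bigcap_{v\in N(P)\setminus\{i_m\}}\{\Pi_v(I_m)=0\}$, so it is measurable with respect to the pairs
\[
B=\{(v,m): v\in N(P)\setminus\{i_m\},\ 1\le m\le k\}.
\]

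The one point needing care is that $N(P)$ may contain indices that themselves appear in $P$. For example, in the pattern $iiji$ with $j\in N(i)$, $j$ is forbidden from updating in the first, second and fourth quarters. This causes no overlap, because a pair $(v,m)$ lies in $A$ only if $v=i_m$, and lies in $B$ only if $v\neq i_m$. Hence $A\cap B=\varnothing$. Each event is therefore a function of a separate sub-collection of a mutually independent family, and the two events are independent. I expect no real obstacle beyond stating cleanly the independence of restrictions to disjoint (coordinate, subinterval) pairs, which follows from the standard independent-increments property.
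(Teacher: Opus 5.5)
Your proof is correct and follows the same route as the paper's: in each subinterval $I_m$, exhibition depends only on the clock of $i_m$ and strictness only on the clocks of the other coordinates, and independence then follows because the Poisson clocks are independent across coordinates and across disjoint subintervals. Your explicit check that the pair sets $A$ and $B$ are disjoint, even when vertices of the pattern are neighbors of one another, spells out more carefully what the paper's one-line argument leaves implicit.
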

\begin{proof}
    In each subinterval \(I_j\), exhibition depends only on the clock of
    \(i_j\), while strictness depends only on the clocks of the remaining
    coordinates. Thus the two events are independent in each \(I_j\), and the
    claim follows by independence across the disjoint subintervals.
\end{proof}

\subsection{Corruption and robust estimation}
We use several robust estimators in this paper.
We work with the following corruption model.
\begin{definition}[$\eta$-corruption]
\label{def:corruption-model}
Let $\mathcal{D}$ be a distribution, and let
$X_1,\dots,X_n \sim \mathcal{D}$ be i.i.d.\ samples.
We say that the observed samples
$\widetilde X_1,\ldots,\widetilde X_n$ are
\emph{$\eta$-corrupted} if there exist i.i.d.\ $\text{Bernoulli}(\eta)$ random variables
$\xi_1,\ldots,\xi_n$ such that for each $i=1,\ldots,n$,
\[
\widetilde X_i =
\begin{cases}
X_i, & \text{if } \xi_i = 0 \\
A_i, & \text{if } \xi_i = 1,
\end{cases}
\]
where each $A_i$ is an arbitrary value.
Whenever $\widetilde X_i=A_i$, the corrupted value may be chosen adversarially, may depend on the entire trajectory up to the corrupted sample $\widetilde X_i$, and need not be drawn from any distribution.\footnote{This corruption model is closely related to what some recent work calls \emph{malicious noise} or \emph{strong malicious noise}; see, e.g., \cite{blanc2026nasty}.}
\end{definition}

Note that our corruption model differs from the Huber contamination model, as corrupted entries may be selected adversarially.
It also differs from the strong contamination model
in that each sample is independently corrupted with probability $\eta$.
Further, as Glauber trajectories are sampled sequentially,
corrupted samples in our corruption model may only depend on prior samples.

Under this corruption model,
we use the following robust estimator for variance.

\begin{restatable}[Robust variance estimation]{lemma}{robustvarianceestimator}
\label{lem:robustvarest}
Let $0<\eta\le1/10$ and $0<\delta<1$.
There is a linear-time algorithm that takes
$n$ samples from $\mathcal{N}(0,\sigma^2)$,
an $\eta$-fraction of which are corrupted (as in \Cref{def:corruption-model}),
and outputs $\widehat{\sigma}$ such that
\(\lvert\widehat\sigma-\sigma\rvert<5\eta\sigma\)
with probability $1-\delta$, provided \(n \ge \frac{2\log(4/\delta)}{\eta^2}\).
\end{restatable}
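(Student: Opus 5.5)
The estimator will be the median absolute value, rescaled: output $\widehat\sigma = \mathrm{med}(|\widetilde X_1|,\ldots,|\widetilde X_n|)/q$. Here $q$ is the population median of $|Z|$ for $Z\sim\mathcal N(0,1)$, i.e.\ $q=\Phi^{-1}(3/4)\approx 0.674$. This is clearly linear time, using linear-time selection. The proof reduces to showing that the empirical median of the corrupted absolute values lies in $[q(1-5\eta)\sigma,\, q(1+5\eta)\sigma]$. By scale invariance I will take $\sigma=1$.

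\textbf{Reduction to two counting events.} Let $F(t)=\mathbb P[|Z|\le t]=2\Phi(t)-1$. Set $t_\pm = q(1\pm 5\eta)$. It suffices to show two things. First, fewer than $n/2$ of the $|\widetilde X_k|$ are below $t_-$. Second, fewer than $n/2$ of them are above $t_+$. Corrupted samples can add at most $\#\{k:\xi_k=1\}$ to either count. The uncorrupted samples contribute at most $\#\{k: |X_k|<t_-\}$ to the first and $\#\{k:|X_k|>t_+\}$ to the second, since the $X_k$ are well defined for all $k$ even when they are not observed.

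\textbf{Concentration despite adaptivity.} The corrupted values may depend on the past, but the indicators $\xi_k$ are i.i.d.\ Bernoulli$(\eta)$ and the clean $X_k$ are i.i.d. So the bounding counts are sums of i.i.d.\ indicators, and Hoeffding applies directly; adaptivity only affects $A_k$, which we bound worst-case. Hoeffding with deviation $\eta n/2$ each gives failure probability $\exp(-\eta^2 n/2)$ per event. Three events (corruption count, low count, high count), all needed in one-sided form, fit in $\delta$ once $n\ge 2\log(4/\delta)/\eta^2$, which matches the stated bound. On these events we have $\#\text{corrupt}\le \tfrac32\eta n$ and $\#\{|X_k|<t_-\}\le nF(t_-)+\eta n/2$. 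Thus we need
\[
F(t_-) < \tfrac12 - 2\eta, \qquad 1-F(t_+) < \tfrac12 - 2\eta .
\]

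\textbf{The analytic step (main obstacle).} The remaining work is the Gaussian CDF estimate
\[
F(q)-F(q(1-5\eta)) = 2\int_{q(1-5\eta)}^{q}\phi \ \ge\ 2\cdot 5\eta q\,\phi(q) ,
\]
since $\phi$ is decreasing on the interval. Numerically $2q\phi(q)\approx 2(0.674)(0.318)\approx 0.43$, so $10\eta q\phi(q)\approx 2.14\eta>2\eta$, which suffices. For the upper side, $\phi$ is smallest at $t_+$, giving
\[
F(t_+)-F(q)\ge 10\eta q\,\phi\bigl(q(1+5\eta)\bigr).
\]
With $\eta\le 1/10$ we have $t_+\le 1.5q\approx 1.01$, so $\phi(t_+)\ge 0.239$, and the bound becomes $\approx 1.61\eta$, which is short of $2\eta$. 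This side is where the constant is tight. I would repair it by integrating $\phi$ exactly (or using concavity/a trapezoid lower bound) instead of the endpoint bound. Alternatively I would slightly sharpen the Hoeffding allocation by splitting deviations as $\eta n/4$, if the sample constant permits.

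Failing that, I would use the resulting interval to conclude $|\widehat\sigma-\sigma|<5\eta\sigma$, checking the constants carefully at the upper endpoint.
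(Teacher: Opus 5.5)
Your estimator (the median of the $|\widetilde X_k|$ divided by $q=\Phi^{-1}(3/4)$) and your reduction are the same as the paper's. The reduction counts clean points below $t_-$ and above $t_+$ and charges every corrupted point worst case. However, the argument does not close, and the failure is not caused by the endpoint bound. You control the corruption count $k=\sum_k\xi_k$ by additive Hoeffding, so your total slack is $2\eta$. The upper-side requirement $1-F(t_+)<\tfrac12-2\eta$ is then equivalent to $\Phi\bigl(q(1+5\eta)\bigr)-\tfrac34>\eta$.

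This inequality is false near the top of the allowed range. At $\eta=1/10$, $\Phi(1.5q)=\Phi(1.0117)\approx0.8442$, so $\Phi(1.5q)-\tfrac34\approx0.0942<0.1$; equivalently $1-F(t_+)\approx0.3117>0.3$. Since $\eta\mapsto\Phi(q(1+5\eta))-\tfrac34-\eta$ is concave and vanishes at $0$, it stays negative for every $\eta$ from about $0.06$ up to $0.1$. So integrating $\phi$ exactly, or any other quadrature, cannot repair it.

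Your alternative repair also fails. Deviations of $\eta n/4$ give Hoeffding failure probability $e^{-\eta^2 n/8}$, which for $n=2\log(4/\delta)/\eta^2$ is only $(\delta/4)^{1/4}$.

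The missing idea is that $k$ has mean $\eta n$ and variance at most $\eta n$, so it concentrates at relative scale rather than at Hoeffding's $\sqrt n$ scale. A multiplicative Chernoff bound gives $\mathbb P[k>\tfrac43\eta n]\le\exp\bigl(-(\tfrac43\log\tfrac43-\tfrac13)\eta n\bigr)\le e^{-\eta n/20}$. Since $\eta\le1/10$ and $n\ge2\log(4/\delta)/\eta^2$, we have $\eta n/20\ge\log(4/\delta)/(10\eta)\ge\log(4/\delta)$, so this event costs only $\delta/4$.

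This is what the paper does. It pairs the Chernoff bound with DKW at tolerance $\eta/2$, which plays the role of your two one-sided Hoeffding bounds and fits in $\delta/2$. The slack drops to $\tfrac43\eta+\tfrac12\eta=\tfrac{11}{6}\eta$, so you need $\tfrac34-\Phi(t_-)>\tfrac{11}{12}\eta$ and $\Phi(t_+)-\tfrac34>\tfrac{11}{12}\eta$.

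The first follows from your own bound $\tfrac34-\Phi(t_-)\ge5\eta q\phi(q)\approx1.07\eta$. The second holds at $\eta=1/10$ ($0.0942>0.0917$). It therefore holds on all of $(0,1/10]$, because $\eta\mapsto\Phi(q(1+5\eta))-\tfrac34-\tfrac{11}{12}\eta$ is concave and vanishes at $0$.

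The margin on this side is thin. The endpoint bound $5\eta q\phi(t_+)\approx0.81\eta$ would still fall short of $\tfrac{11}{12}\eta$, so the actual value of $\Phi(1.5q)$ is genuinely needed. The paper encodes the same numerical fact as $\Phi^{-1}(101/120)\approx1.0013<1.5q$.
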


We also use the following mean estimator,
which allows for another source of adversarial control
in the choice of the variances.
\begin{restatable}[Robust mean estimation]{lemma}{robustmeanestimator}
\label{lem:advrobustmeanest}
    Let $0<\eta\le1/10$ and
    $0<\delta<1$.
    Let $\Phi$ denote the cdf of a standard Gaussian.
    We are given a filtration $(\mathcal F_\ell)_{\ell=0}^n$ and samples
    $x^{(1)},\ldots,x^{(n)}$.
    Suppose there exist indicators $\xi^{(\ell)}\in\{0,1\}$ such that
    \[
        \mathbb E[\xi^{(\ell)}\mid \mathcal F_{\ell-1}] \le \eta
        \qquad\text{for all }\ell.
    \]
    Assume moreover that whenever $\xi^{(\ell)}=0$, the clean sample is centered
    at the same value $\mu$ and has Gaussian tails dominated by unit variance in
    the sense that, for every $t\ge 0$,
    \[
        \mathbb P\!\left[
            x^{(\ell)}\ge \mu+t
            \,\middle|\, \mathcal F_{\ell-1},\xi^{(\ell)}=0
        \right]
        \le 1-\Phi(t),
    \]
    \[
        \mathbb P\!\left[
            x^{(\ell)}\le \mu-t
            \,\middle|\, \mathcal F_{\ell-1},\xi^{(\ell)}=0
        \right]
        \le 1-\Phi(t).
    \]
    Then the sample median
    $\widehat\mu$ satisfies
    \(\lvert\widehat\mu-\mu\rvert<5\eta\)
    with probability $1-\delta$, provided \(n\ge\frac{2\log(2/\delta)}{\eta^2}\).
\end{restatable}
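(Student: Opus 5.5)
The estimator is the sample median $\widehat\mu$. I would show that, with high probability, the fraction of samples lying strictly above $\mu+5\eta$ is less than $1/2$, and likewise the fraction strictly below $\mu-5\eta$. This places the median within $5\eta$ of $\mu$. By symmetry I treat only the upper side.

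Define the indicator
\[
B^{(\ell)} \coloneq \mathbb 1\bigl[\xi^{(\ell)}=1\bigr] + \mathbb 1\bigl[\xi^{(\ell)}=0,\ x^{(\ell)}\ge \mu+5\eta\bigr],
\]
which dominates $\mathbb 1[x^{(\ell)}\ge\mu+5\eta]$. Its conditional mean is controlled by the hypotheses:
\[
\mathbb E[B^{(\ell)}\mid\mathcal F_{\ell-1}]\le \eta + (1-\eta)\bigl(1-\Phi(5\eta)\bigr)\le \eta+\tfrac12-\phi(5\eta)\cdot 5\eta\cdot(1-\eta).
\]
Here I use $1-\Phi(t)\le \tfrac12 - t\phi(t)$ for small $t$. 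Since $\eta\le 1/10$, we have $5\eta\le 1/2$ and $\phi(1/2)\approx 0.352$, so $5\phi(5\eta)(1-\eta)\ge 5\cdot0.352\cdot0.9\approx1.58$. Hence
\[
\mathbb E[B^{(\ell)}\mid\mathcal F_{\ell-1}]\le \tfrac12 - 0.58\eta\le\tfrac12-\eta/2.
\]
This is the step where the constant $5$ is chosen. The only care needed is the constant bookkeeping: one must check that $\Phi(t)-\tfrac12\ge t\phi(t)$, which holds since $\phi$ is decreasing on $[0,t]$.

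Next, $M_m=\sum_{\ell\le m}\bigl(B^{(\ell)}-\mathbb E[B^{(\ell)}\mid\mathcal F_{\ell-1}]\bigr)$ is a martingale with increments bounded in $[-1,1]$; more precisely, each increment lies in an interval of length $1$. This requires $x^{(\ell)}$ and $\xi^{(\ell)}$ to be $\mathcal F_\ell$-measurable, which I take as the intended reading of the filtration. Azuma–Hoeffding then gives
\[
\mathbb P\bigl[M_n\ge n\eta/2\bigr]\le \exp\!\bigl(-2(n\eta/2)^2/n\bigr)=\exp(-n\eta^2/2)\le \delta/2
\]
once $n\ge 2\log(2/\delta)/\eta^2$. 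On the complement, $\sum_\ell \mathbb 1[x^{(\ell)}\ge\mu+5\eta]\le \sum B^{(\ell)}<n/2$, so the median is below $\mu+5\eta$. The same argument applies to the lower tail, and a union bound over the two sides gives total failure probability $\delta$.

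The main subtlety is adversarial adaptivity. The corrupted samples, and the conditional laws of the clean samples, may depend on the past. This is why I use conditional expectations and a martingale bound rather than Chernoff bounds for i.i.d.\ samples. The domination of the clean tails by $1-\Phi$ is only needed conditionally, which is exactly what the hypothesis supplies.

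The robust variance estimator in \Cref{lem:robustvarest} follows the same template applied to $|x|$. The clean distribution of $|x|/\sigma$ has median $\Phi^{-1}(3/4)$, and an analogous density lower bound near that point converts the $O(\eta)$ quantile shift into $|\widehat\sigma-\sigma|<5\eta\sigma$. I would prove both lemmas with this single martingale-median argument.
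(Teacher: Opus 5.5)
Your proposal is correct and follows essentially the same argument as the paper. It dominates the count of samples at or above $\mu+5\eta$ by the indicator of ``corrupted or exceeding,'' bounds its conditional mean by $\tfrac12-\tfrac{\eta}{2}$ via $\Phi(5\eta)-\tfrac12\ge 5\eta\,\varphi(5\eta)$ with $5\eta\le\tfrac12$, applies Azuma--Hoeffding (length-one predictable increment ranges) to the centered martingale, and union bounds over the two tails. The differences are only in bookkeeping: you keep the factor $(1-\eta)$, whereas the paper uses the cruder bound $1+\eta-\Phi(5\eta)$ together with $\varphi(5\eta)>3/10$, and you state explicitly the paper's implicit assumption that $x^{(\ell)}$ and $\xi^{(\ell)}$ are $\mathcal F_\ell$-measurable.
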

Note that \Cref{lem:robustvarest} remains valid even when the corrupted samples are chosen adversarially with knowledge of the full trajectory of iterates, whereas \Cref{lem:advrobustmeanest} does not.
In particular, \Cref{lem:advrobustmeanest} applies whenever, conditional on $\mathcal F_{\ell-1}$ and on the sample being uncorrupted, the clean law is a Gaussian scale mixture $\mu+\sigma\varepsilon$ with $|\sigma|<1$: averaging over $\sigma$ preserves the displayed tail bounds.
 
The proofs of these estimators are provided in \Cref{sec:robust-estimates}.

\subsection{Concentration inequalities}
We use the following concentration inequalities in the proofs of
\Cref{lem:robustvarest,lem:advrobustmeanest}.
\begin{fact}[Dvoretzky--Kiefer--Wolfowitz]\label{fact:DKW}
Let \(Y_1,\ldots,Y_n\) be i.i.d.\ real-valued random variables with cdf \(F\), and let
\[
F_n(t):=\frac1n\sum_{i=1}^n \mathbf{1}\{Y_i\le t\},
\]
be the empirical cdf. Then for every \(\varepsilon>0\),
\[
\mathbb{P}\!\left[\sup_{t\in\mathbb R}\bigl|F_n(t)-F(t)\bigr|>\varepsilon\right]
\le 2e^{-2n\varepsilon^2}.
\]
\end{fact}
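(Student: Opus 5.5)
This is the classical Dvoretzky--Kiefer--Wolfowitz inequality with Massart's sharp constant. In the paper it is only used as a black box, so the honest plan is to cite Massart (1990). If I were to reconstruct it, I would first reduce to the uniform case. Let $U_i$ be i.i.d.\ uniform on $[0,1]$ and set $Y_i = F^{-1}(U_i)$ using the generalized inverse. Then $\mathbf 1\{Y_i\le t\}=\mathbf 1\{U_i\le F(t)\}$, so
\[
\sup_t |F_n(t)-F(t)| \;=\; \sup_{u\in F(\mathbb R)} |G_n(u)-u| \;\le\; \sup_{u\in[0,1]}|G_n(u)-u|,
\]
where $G_n$ is the uniform empirical cdf. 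It therefore suffices to prove the bound for uniform samples. A discontinuous or non-surjective $F$ only shrinks the set of $u$ over which we take the supremum.

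Next I would split the two-sided event into the one-sided events $\sup_u (G_n(u)-u)>\varepsilon$ and $\sup_u (u-G_n(u))>\varepsilon$, and aim to bound each by $e^{-2n\varepsilon^2}$. The factor $2$ then comes from a union bound. By the symmetry $u\mapsto 1-u$ it is enough to handle one side.

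For a one-sided tail, the first approach I would try is a crossing decomposition. Let $\tau$ be the first $u$ at which $G_n(u)-u$ exceeds $\varepsilon$. Because $G_n$ jumps by $1/n$ at order statistics, $\tau$ is one of the $U_{(k)}$. This gives the Birnbaum--Tingey exact formula
\[
\mathbb P\Bigl[\sup_u (G_n(u)-u)\ge\varepsilon\Bigr]=\varepsilon\sum_{0\le j\le n(1-\varepsilon)}\binom nj\Bigl(1-\varepsilon-\tfrac jn\Bigr)^{n-j}\Bigl(\varepsilon+\tfrac jn\Bigr)^{j-1}.
\]
The remaining task is to show this sum is at most $e^{-2n\varepsilon^2}$.

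The main obstacle is the sharp constant. A crude route is to bound each binomial-type term by the Hoeffding/Chernoff factor $e^{-2n\varepsilon^2}$ and then sum. That only gives a prefactor polynomial in $n$, or of order $1/\varepsilon$ if one instead discretizes $[0,1]$ into a grid of mesh $\varepsilon/2$ and applies Hoeffding plus a union bound. To remove the prefactor I would follow Massart's strategy. One splits into the regime $\varepsilon\ge$ an appropriate constant over $\sqrt n$, where the sum is controlled by comparing it to a Gaussian (Brownian bridge) tail integral via careful Stirling-type estimates. In the complementary small-$\varepsilon$ regime, $e^{-2n\varepsilon^2}$ is close to $1$ and the bound is nearly trivial; the case $2e^{-2n\varepsilon^2}\ge 1$ is immediate.

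For the purposes of this paper, a weaker constant would also suffice, since any bound of the form $Ce^{-cn\varepsilon^2}$ changes only constants in \Cref{lem:robustvarest,lem:advrobustmeanest}. So if the sharp analysis proved unwieldy, I would fall back to the grid-plus-Hoeffding argument and adjust the sample-size requirements accordingly.
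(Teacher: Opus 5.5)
Your plan matches the paper's treatment: the paper records DKW with Massart's tight constant as a standard Fact and gives no proof, so citing Massart (1990) is exactly right, and your quantile-transform reduction and one-sided union bound are correct. One caveat on your fallback: grid-plus-Hoeffding gives a bound like $(4/\varepsilon)e^{-n\varepsilon^2/2}$, which does not prove the Fact as stated, and it would add a $\log(1/\eta)$ term to the sample size in \Cref{lem:robustvarest} rather than only changing constants (although that lemma essentially needs the empirical cdf at just two quantile points, where pointwise Hoeffding already suffices).
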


\begin{fact}[Azuma--Hoeffding]\label{fact:azuma-hoeffding}
Let \((M_k,\mathcal F_k)_{k=0}^n\) be a martingale, and suppose that for each
\(k=1,\ldots,n\),
\[
M_k-M_{k-1}\in [a_k,b_k] \qquad \text{almost surely}.
\]
Then for every \(t>0\),
\[
\mathbb{P}[M_n-M_0\ge t]
\le
\exp\!\left(
-\frac{2t^2}{\sum_{k=1}^n (b_k-a_k)^2}
\right).
\]
In particular, if each increment lies in an interval of length at most \(1\), then
\[
\mathbb{P}[M_n-M_0\ge t]\le e^{-2t^2/n}.
\]
\end{fact}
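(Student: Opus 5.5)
This is the classical Azuma--Hoeffding inequality, and I would prove it with the standard exponential-moment (Chernoff) argument applied to the martingale differences $D_k := M_k - M_{k-1}$. I treat the interval endpoints $a_k,b_k$ as deterministic, or more generally as $\mathcal F_{k-1}$-measurable with $b_k-a_k$ deterministic. The three steps are:
\begin{enumerate}
\item bound the conditional moment generating function of each increment (Hoeffding's lemma);
\item chain these bounds with the tower property;
\item apply Markov's inequality and optimize the free parameter.
\end{enumerate}

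\textbf{Step 1 (conditional Hoeffding lemma).} Fix $\lambda>0$. By the martingale property, $\mathbb E[D_k \mid \mathcal F_{k-1}]=0$ and $D_k\in[a_k,b_k]$, so in particular $a_k\le 0\le b_k$. By convexity of $x\mapsto e^{\lambda x}$ on $[a_k,b_k]$,
\[
e^{\lambda D_k}\le \frac{b_k-D_k}{b_k-a_k}e^{\lambda a_k}+\frac{D_k-a_k}{b_k-a_k}e^{\lambda b_k}.
\]
Taking conditional expectations and using the zero conditional mean gives
\[
\mathbb E[e^{\lambda D_k}\mid\mathcal F_{k-1}]\le \frac{b_k}{b_k-a_k}e^{\lambda a_k}-\frac{a_k}{b_k-a_k}e^{\lambda b_k}.
\]
Set $h=\lambda(b_k-a_k)$ and $p=-a_k/(b_k-a_k)\in[0,1]$. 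The right-hand side then equals $e^{L(h)}$ with $L(h)=-hp+\log(1-p+pe^{h})$. A direct computation shows $L(0)=0$, $L'(0)=0$, and $L''(h)=q(1-q)\le 1/4$, where $q$ is a number in $[0,1]$. Taylor's theorem therefore gives $L(h)\le h^2/8$, so
\[
\mathbb E[e^{\lambda D_k}\mid\mathcal F_{k-1}]\le \exp\bigl(\lambda^2(b_k-a_k)^2/8\bigr).
\]

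\textbf{Step 2 (chaining).} Write $e^{\lambda(M_n-M_0)}=e^{\lambda(M_{n-1}-M_0)}e^{\lambda D_n}$. Conditioning on $\mathcal F_{n-1}$ and applying Step~1 bounds the last factor by $\exp(\lambda^2(b_n-a_n)^2/8)$. Inducting downward with the tower property yields
\[
\mathbb E\bigl[e^{\lambda(M_n-M_0)}\bigr]\le \exp\Bigl(\tfrac{\lambda^2}{8}\textstyle\sum_{k}(b_k-a_k)^2\Bigr).
\]

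\textbf{Step 3 (Chernoff bound).} Markov's inequality gives
\[
\mathbb P[M_n-M_0\ge t]\le \exp\Bigl(-\lambda t+\tfrac{\lambda^2}{8}S\Bigr), \qquad S=\textstyle\sum_k(b_k-a_k)^2.
\]
Choosing $\lambda=4t/S$ gives $\exp(-2t^2/S)$. The special case follows immediately: if every increment lies in an interval of length at most $1$, then $S\le n$.

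The only step with real content is the conditional Hoeffding lemma, in particular verifying $L''\le 1/4$. To do this I would write $L''(h)=q(1-q)$ with $q=pe^h/(1-p+pe^h)$, which is the variance of a Bernoulli$(q)$ variable and hence at most $1/4$. Everything else is routine iteration of conditional expectations.
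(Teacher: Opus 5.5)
Your proof is correct: it is the standard argument (conditional Hoeffding lemma with $L''=q(1-q)\le 1/4$, tower-property chaining, then Chernoff with $\lambda=4t/S$). The paper gives no proof of its own and simply quotes this as a standard fact. Your remark that $a_k,b_k$ may be $\mathcal F_{k-1}$-measurable, with deterministic length $b_k-a_k$, is worth keeping, because that is the form the paper actually relies on. In the proofs of \Cref{lem:advrobustmeanest} and \Cref{lem:epoch}, each increment is an indicator minus its conditional expectation $p_\ell$. It therefore lies in the predictable interval $[-p_\ell,1-p_\ell]$ of length $1$, not in a fixed interval of length $1$, and this is what justifies the length-$1$ bound $e^{-2t^2/n}$ used there.
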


\subsection{Mixing and coupling}
\begin{definition}[Mixing and mixed chains]
\label{def:mixing-and-mixed-chains}
    A position $X_i$ of the Markov chain $X_0$, $X_1$, $\ldots$ is \emph{$\veps$-mixed}
    if $X_i$ has TV-distance at most $\veps$ from its stationary distribution $\pi$.
    We define the \emph{mixing time} by
    \begin{align*}
    t_{\mathrm{mix}}(\varepsilon)
    &=\min\Big\{t\ge0
    :\max_x
    \TV(\mathcal L(X_t\mid X_0=x),\pi)\le\varepsilon\Big\},
    \end{align*}
    where the maximum is over all starting states $x$.
\end{definition}

The stationary distribution of a Glauber trajectory with
covariance matrix $\Sigma$ is $\mathcal{N}(0,\Sigma)$.
We use the following slightly stronger invariance statement.
\begin{lemma}
\label{lem:stationary}
Fix a coordinate $i$.
If $x\sim \mathcal{N}(0,\Sigma)$, then after a single Glauber update
to the $i$th coordinate,
the resulting position $x'$ still satisfies
$x'\sim \mathcal{N}(0,\Sigma)$.
\end{lemma}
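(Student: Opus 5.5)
The idea is to decompose $x$ into coordinate $i$ and the remaining coordinates $x_{-i}$. We then check that the update resamples $x_i$ from exactly the conditional law that $\mathcal N(0,\Sigma)$ assigns to it given $x_{-i}$, while leaving $x_{-i}$ unchanged.

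First, write the joint density of $x\sim\mathcal N(0,\Sigma)$ as $p(x)\propto\exp(-\tfrac12 x^\top\Theta x)$. Next, factor it as $p(x)=p_{-i}(x_{-i})\,p(x_i\mid x_{-i})$. Completing the square in $x_i$ gives
\[
x^\top\Theta x=\Theta_{ii}x_i^2+2x_i\sum_{k\ne i}\Theta_{ik}x_k+(\text{terms in }x_{-i}).
\]
So the conditional law of $x_i$ given $x_{-i}$ is
\[
\mathcal N\!\Big(-\sum_{k\ne i}\tfrac{\Theta_{ik}}{\Theta_{ii}}x_k,\ \tfrac1{\Theta_{ii}}\Big).
\]
Since $\Theta_{ik}=0$ for $k\notin N(i)$, the sum may be restricted to $k\in N(i)$. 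This is exactly the resampling law in \Cref{def:cont-glauber-dynamics}.

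Now consider the updated state $x'$. By construction $x'_{-i}=x_{-i}$, so $x'_{-i}$ has marginal density $p_{-i}$. Conditionally on $x'_{-i}$, the new coordinate $x'_i$ is drawn from $p(\cdot\mid x'_{-i})$, independently of the old value $x_i$. The joint density of $x'$ is therefore $p_{-i}(x'_{-i})\,p(x'_i\mid x'_{-i})=p(x')$, which gives $x'\sim\mathcal N(0,\Sigma)$.

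Equivalently, the update kernel satisfies detailed balance with respect to $p$, since both $p(x)K(x,x')$ and $p(x')K(x',x)$ equal $p_{-i}(x_{-i})\,p(x_i\mid x_{-i})\,p(x'_i\mid x_{-i})$ whenever $x_{-i}=x'_{-i}$.

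There is no real obstacle here. The only point needing care is the completing-the-square identification of the conditional mean and variance, which is the standard Schur-complement formula for Gaussian conditionals.
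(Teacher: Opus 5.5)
Your argument is correct, but it takes a different route from the paper. The paper never writes down a density. It observes that $x'=(x'_1,x'_{-1})$ is a mean-zero Gaussian vector, since it is an affine function of $x_{-1}$ plus independent noise. It then checks directly that the covariance blocks agree with $\Sigma$: $\operatorname{Cov}(x_1',x_{-1}')=-\Theta_{11}^{-1}\Theta_{1,-1}\Sigma_{-1,-1}=\Sigma_{1,-1}$ and $\operatorname{Var}(x_1')=\Theta_{11}^{-1}+\Theta_{11}^{-2}\Theta_{1,-1}\Sigma_{-1,-1}\Theta_{-1,1}=\Sigma_{11}$. Both identities follow from the block equations of $\Theta\Sigma=I$, which the paper calls Schur complement.

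You instead complete the square to identify the update kernel as the exact full conditional of the target. Invariance then follows from the factorization $p=p_{-i}\,p(\cdot\mid x_{-i})$. This is the generic Gibbs-sampler argument. Gaussianity enters only in computing the conditional law, the invariance step works for any target distribution, and you get reversibility (detailed balance) for free.

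The paper's moment computation avoids densities and kernels altogether. It relies instead on the fact that a Gaussian law is determined by its mean and covariance, together with the block-inverse identities. Either argument is a complete proof of the lemma.
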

\begin{proof}
    Without loss of generality, we update the first coordinate.
    Write precision and covariance in block matrix form as
    \begin{align*}
    \Sigma&=\begin{pmatrix}
        \Sigma_{11}&\Sigma_{1,-1}\\
        \Sigma_{-1,1}&\Sigma_{-1,-1} 
    \end{pmatrix},\\
    \Theta&=\begin{pmatrix}
        \Theta_{11}&\Theta_{1,-1}\\
        \Theta_{-1,1}&\Theta_{-1,-1}
    \end{pmatrix}.
    \end{align*}
    The update rule is
    \[
    x'_{-1}=x_{-1},\qquad
    x_1'=-\frac{\Theta_{1,-1}}{\Theta_{11}}x_{-1}
    +\mathcal{N}\left(0,\frac1{\Theta_{11}}\right).
    \]
    Then $(x_1',x_{-1}')$ is still Gaussian
    with mean $0$. Moreover,
    \begin{align*}
        \operatorname{Cov}(x_1',x_{-1}')
        &=-\frac{\Theta_{1,-1}}{\Theta_{11}}\Sigma_{-1,-1}
        =\Sigma_{1,-1},\\
        \operatorname{Var}(x_1')
        &=\frac1{\Theta_{11}}
        +\frac{\Theta_{1,-1}}{\Theta_{11}}\Sigma_{-1,-1}
        \frac{\Theta_{-1,1}}{\Theta_{11}}
        =\Sigma_{11},
    \end{align*}
    by Schur's complement.
    Since $x'_{-1}=x_{-1}$, this proves $x'\sim\mathcal{N}(0,\Sigma)$.
 \end{proof}

From here, we may use the coupling lemma
to derive a similar statement about mixing.
\begin{fact}[Coupling lemma]\label{fact:truecoupling}
    Let $\mu$ and $\eta$ be distributions over $\mathbb R^n$.
    For any coupling $\omega$ of $\mu$ and $\eta$, if $(X,Y)\sim\omega$, then
    \(\mathbb{P}[X\ne Y]\ge\TV(\mu,\eta)\).
    Moreover, equality is attained for some coupling.
\end{fact}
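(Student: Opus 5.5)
The plan is to prove the two parts separately: the inequality by a one-line set comparison, and the equality by writing down the standard maximal coupling. Throughout I use $\TV(\mu,\eta)=\sup_{A}\,(\mu(A)-\eta(A))$, with the supremum over Borel sets $A\subseteq\mathbb R^n$. The event $\{X\ne Y\}$ is measurable because the diagonal of $\mathbb R^n\times\mathbb R^n$ is closed.

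\emph{Inequality.} Fix any coupling $\omega$, let $(X,Y)\sim\omega$, and take any Borel set $A$. Then
\[
\mu(A)-\eta(A)=\mathbb P[X\in A]-\mathbb P[Y\in A]\le \mathbb P[X\in A,\,Y\notin A]\le \mathbb P[X\ne Y].
\]
The first inequality holds because $\mathbb P[X\in A]\le \mathbb P[X\in A, Y\notin A]+\mathbb P[Y\in A]$. Taking the supremum over $A$ gives $\mathbb P[X\ne Y]\ge \TV(\mu,\eta)$.

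\emph{Equality.} Let $\nu=\mu+\eta$ and let $f=d\mu/d\nu$ and $g=d\eta/d\nu$ be the Radon--Nikodym densities. Set $m=\min(f,g)$ and $p=\int m\,d\nu$. Since $\int (f-g)_+\,d\nu=\mu(\{f>g\})-\eta(\{f>g\})=\TV(\mu,\eta)$, we get $p=1-\TV(\mu,\eta)$. The coupling is defined in two branches.
\begin{itemize}
\item With probability $p$, draw $Z$ from the density $m/p$ and set $X=Y=Z$.
\item With probability $1-p$, draw $X$ from the density $(f-m)/(1-p)$ and, independently, $Y$ from the density $(g-m)/(1-p)$.
\end{itemize}
If $p\in\{0,1\}$, the corresponding branch is simply omitted.

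The marginals are correct: $X$ has density $p\cdot m/p+(1-p)\cdot (f-m)/(1-p)=f$, and symmetrically $Y$ has density $g$.

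For the failure probability, note that $f-m$ is supported on $\{f>g\}$ and $g-m$ on $\{g>f\}$, and these sets are disjoint. So in the second branch $X\ne Y$ almost surely, while in the first branch $X=Y$. Hence $\mathbb P[X\ne Y]=1-p=\TV(\mu,\eta)$.

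The only step needing care is the identity $p=1-\TV(\mu,\eta)$, i.e.\ that the set $\{f>g\}$ attains the supremum defining $\TV$. For any $A$, $\mu(A)-\eta(A)=\int_A (f-g)\,d\nu\le\int (f-g)_+\,d\nu$, with equality at $A=\{f>g\}$. Everything else is routine verification.
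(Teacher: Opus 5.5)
Your proof is correct. The paper states this as a standard fact and gives no proof, so there is nothing to compare against. Your argument is the textbook one: for the inequality you bound $\mu(A)-\eta(A)$ by $\mathbb{P}[X\in A,\,Y\notin A]$, and for equality you build the maximal coupling from $\min(f,g)$. You also justify correctly the one step that needs care, namely $\int (f-g)_+\,d\nu=\TV(\mu,\eta)$.

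A small note: your one-sided definition $\sup_A(\mu(A)-\eta(A))$ coincides with the two-sided $\sup_A|\mu(A)-\eta(A)|$, since passing to complements flips the sign. So the result holds under either convention for $\TV$, which the paper never defines explicitly.
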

\begin{corollary}\label{cor:coupling}
Let $\mu$ and $\eta$ be distributions.
Suppose $f(X)\sim\eta$ whenever $X\sim\mu$.
Then, if $Y$ is sampled from a distribution
with TV-distance $\varepsilon$ from $\mu$,
then $f(Y)$ follows a distribution
with TV-distance at most $\varepsilon$ from $\eta$.
\end{corollary}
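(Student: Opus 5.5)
We prove \Cref{cor:coupling} by a coupling construction built from \Cref{fact:truecoupling}. Let $\nu$ denote the law of $Y$, so that $\TV(\nu,\mu)=\varepsilon$.

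First, by the equality case of \Cref{fact:truecoupling}, we fix a coupling $\omega$ of $\mu$ and $\nu$ that is optimal. Under $\omega$, a pair $(X,Y)$ has $X\sim\mu$, $Y\sim\nu$, and $\mathbb P[X\ne Y]=\varepsilon$.

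Next, we push this pair forward through $f$ and set $(f(X),f(Y))$. By hypothesis $f(X)\sim\eta$, and $f(Y)$ has exactly the law of $f$ applied to a sample of $\nu$. So $(f(X),f(Y))$ is a coupling of $\eta$ with the law of $f(Y)$. Since $f(X)\ne f(Y)$ forces $X\ne Y$,
\[
\mathbb P[f(X)\ne f(Y)]\le \mathbb P[X\ne Y]=\varepsilon .
\]

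Finally, the inequality direction of \Cref{fact:truecoupling}, applied to this coupling, gives $\TV(\mathcal L(f(Y)),\eta)\le \mathbb P[f(X)\ne f(Y)]\le\varepsilon$, which is the claim.

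There is no real obstacle here; the only point to be careful about is measurability of $f$ (implicitly assumed so that $f(Y)$ is a random variable). In later use $f$ may be a randomized map, for example a Glauber update. In that case we apply the same randomness to both coordinates, i.e.\ write $f(x)=g(x,U)$ with $U$ independent of $(X,Y)$, and the same argument goes through verbatim.
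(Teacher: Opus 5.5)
Your proposal is correct and matches the paper's proof: take a maximal coupling of $\mu$ and $\nu$ from the equality case of \Cref{fact:truecoupling}, push it through $f$ to get $\mathbb{P}[f(X)\ne f(Y)]\le\varepsilon$, and apply the inequality direction of the same fact. Your remark on randomized $f$ (sharing the randomness $U$ between both coordinates) is a useful explicit addition. The paper leaves it implicit, yet it applies this corollary to a Glauber update, which is randomized, in \Cref{lem:mixingpersists}.
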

\begin{proof}
    Let $\nu$ denote the distribution of $Y$.
    By \Cref{fact:truecoupling}, there exists a coupling of $X\sim\mu$ and $Y\sim\nu$
    such that $\mathbb{P}[X\ne Y]=\TV(\mu,\nu)\le\varepsilon$.
    Under this coupling, $f(X)\sim\eta$ and
    \(\mathbb{P}[f(X)\ne f(Y)]\le\varepsilon\).
    Another application of \Cref{fact:truecoupling} gives the claim.
\end{proof}
\begin{corollary}
\label{lem:mixingpersists}
    Fix a coordinate $i$.
    If a Glauber trajectory is $\varepsilon$-mixed (as in \Cref{def:mixing-and-mixed-chains}),
    and undergoes a Glauber update to the $i$th coordinate,
    it remains $\varepsilon$-mixed.
\end{corollary}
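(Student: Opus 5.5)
The plan is to combine \Cref{lem:stationary} with \Cref{cor:coupling}. Let $\pi=\mathcal N(0,\Sigma)$ denote the stationary distribution, and let $f_i$ denote the randomized map implementing one Glauber update of coordinate $i$. To put this in the deterministic form required by \Cref{cor:coupling}, I will realize the update as $f_i(x,Z)$, where $Z\sim\mathcal N(0,1)$ is fresh noise independent of $x$, and set
\[
f_i(x,Z)_i=-\sum_{j\in N(i)}\frac{\Theta_{ij}}{\Theta_{ii}}x_j+\Theta_{ii}^{-1/2}Z,
\qquad
f_i(x,Z)_{-i}=x_{-i}.
\]

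With this convention I take $\mu=\pi\otimes\mathcal N(0,1)$ as the law of the pair $(x,Z)$ and $\eta=\pi$. By \Cref{lem:stationary}, if $(X,Z)\sim\mu$ then $f_i(X,Z)\sim\pi$, so the hypothesis of \Cref{cor:coupling} holds.

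Now suppose the current position $Y$ is $\varepsilon$-mixed, so that $\TV(\mathcal L(Y),\pi)\le\varepsilon$. The update noise $Z$ is drawn independently of $Y$. Tensoring both laws with the same factor $\mathcal N(0,1)$ does not change total variation distance, so
\[
\TV\big(\mathcal L(Y)\otimes\mathcal N(0,1),\,\mu\big)=\TV(\mathcal L(Y),\pi)\le\varepsilon.
\]
\Cref{cor:coupling} then gives $\TV(\mathcal L(f_i(Y,Z)),\pi)\le\varepsilon$. This is exactly the statement that the updated position is still $\varepsilon$-mixed.

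The only step that needs care is the independence of the update noise from $Y$, which is what makes the product-measure TV identity valid. This independence holds because the update at coordinate $i$ uses fresh Gaussian noise.

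If the update time and coordinate are themselves random, the same argument applies after conditioning on them, provided they are independent of the state. In the continuous-time chain this holds, since the Poisson clocks are independent of the positions. Apart from this point the argument is routine.
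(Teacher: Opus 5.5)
Your proof is correct and takes the same route as the paper, whose proof is the one-line instruction to combine \Cref{lem:stationary} with \Cref{cor:coupling}. Your step of augmenting the state with the independent noise $Z$, together with the observation that tensoring both laws with $\mathcal N(0,1)$ leaves the total variation distance unchanged, is a useful clarification: \Cref{cor:coupling} is stated for a deterministic map $f$, whereas the Glauber update is randomized.
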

\begin{proof}
    Combine \Cref{lem:stationary} and \Cref{cor:coupling}.
\end{proof}

\section{Reduction to Normalized Gaussian Graphical Models}
\label{sec:normalize_diag}
In this section we design an algorithm that
reduces the problem of learning an arbitrary
sparse Gaussian graphical model
from a Glauber trajectory
to the case where the diagonal entries of the 
precision matrix are 1.

To this end, we first estimate the diagonal entries
up to a multiplicative factor $(1\pm O(\varepsilon))$.
We then scale the Glauber trajectory
$X^{(1)}$, $X^{(2)}$, $\ldots$
by \(X_i'^{(k)}:=\sqrt{\Theta_{ii}}\cdot X_i^{(k)}\) for all \(i,k\),
which we show is itself a Glauber trajectory.
Using our estimates for $\Theta_{ii}$,
we then have a coordinate-wise
$(1\pm O(\varepsilon))$-approximation
of this scaled trajectory,
which also happens to be a Glauber trajectory itself.

Our main result for this section is as follows:
\begin{theorem}[Normalization]\label{thm:normalization}
    Let $0<\alpha,\delta<1$ and $0<\varepsilon\le1/10$,
    and let $n,d\in\mathbb N$.
    Given a Glauber trajectory evolving according to an $(\alpha,d)$-sparse
    GGM with precision matrix $\Theta$ and having length
    \[
        T=\frac{4000d\log(8n/\delta)}{\varepsilon^3}+T_{\mathrm{rest}},
    \]
    where $T_{\mathrm{rest}}\ge0$,
    there is a polynomial-time algorithm
    that outputs
    an estimate $D$
    for $\operatorname{diag}(\Theta)$ such that
    \[
        \left\lvert
        \frac1{\sqrt{D_i}}-\frac1{\sqrt{\Theta_{ii}}}
        \right\rvert
        \le
        \frac{\varepsilon}{\sqrt{\Theta_{ii}}}
        \quad\forall i,
    \]
    and
    a trajectory of length $T_{\mathrm{rest}}$
    evolving according to a GGM
    with precision matrix
    $\widehat\Theta=D^{-1/2}\Theta D^{-1/2}$
    with probability $1-\delta$.
\end{theorem}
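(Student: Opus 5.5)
We split the trajectory into an initial segment of length $T_0 = 4000d\log(8n/\delta)/\varepsilon^3$, used to estimate every $\Theta_{ii}$, and the remaining segment of length $T_{\mathrm{rest}}$. That remaining segment is then rescaled coordinatewise by $\sqrt{D_i}$.

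\textbf{Rescaling.} This step is deterministic. If $X$ is a Glauber trajectory for $\Theta$ and $X'_i=\sqrt{D_i}X_i$, then an update of coordinate $i$ gives
\[
X_i'=-\sum_{j}\frac{\Theta_{ij}}{\Theta_{ii}}\sqrt{\frac{D_i}{D_j}}X_j' + \sqrt{D_i}\,\mathcal N(0,1/\Theta_{ii}).
\]
Write $\widehat\Theta=D^{-1/2}\Theta D^{-1/2}$, so that $\widehat\Theta_{ij}/\widehat\Theta_{ii}=\sqrt{D_i/D_j}\,\Theta_{ij}/\Theta_{ii}$ and $1/\widehat\Theta_{ii}=D_i/\Theta_{ii}$. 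The rescaled update is then exactly the Glauber update for $\widehat\Theta$. The update clocks are untouched, and since $D$ is computed from the first segment only, the strong Markov property makes the second segment, conditioned on $D$, a valid Glauber trajectory for $\widehat\Theta$ started from an arbitrary state. So all the work is in estimating the diagonal.

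\textbf{Diagonal estimation.} Fix $i$ and use the pattern $(i,i)$. Partition $[0,T_0)$ into consecutive windows of length $\tau=c\varepsilon/d$ for a small constant $c$. In each window where $i$ updates in both halves, take the last value of coordinate $i$ in the first half and the last value in the second half, and record their difference.

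If the window is strict, then no neighbor of $i$ updates anywhere in the window. Both recorded updates therefore share the same conditional mean, and by the Markov structure the recorded difference is $\mathcal N(0,2/\Theta_{ii})$ given the past. Whether an exhibiting window is strict does not depend on its recorded value or on the past (\Cref{lem:pattern-independence}). \Cref{lem:strict} gives
\[
\mathbb P[\text{not strict}]\le 1-e^{-\tau d}\le c\varepsilon=:\eta,
\]
so the accepted samples fit the $\eta$-corruption model of \Cref{def:corruption-model}.

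\textbf{Counting windows.} By \Cref{lem:exhibit}, each window is accepted with probability at least $\tau^2/8$. The windows are independent, so a Chernoff bound shows that with probability $1-\delta/(2n)$ at least
\[
\frac{T_0}{\tau}\cdot\frac{\tau^2}{16}=\frac{T_0\tau}{16}
\]
windows are accepted. Taking $T_0\tau/16\ge 2\log(8n/\delta)/\eta^2$ requires $T_0\gtrsim d\log(n/\delta)/\varepsilon^3$, which matches the stated $4000d\log(8n/\delta)/\varepsilon^3$ once the constants are fixed.

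\textbf{Concluding.} \Cref{lem:robustvarest} then returns $\widehat\sigma$ with $|\widehat\sigma-\sqrt{2/\Theta_{ii}}|<5\eta\sqrt{2/\Theta_{ii}}$. Set $1/\sqrt{D_i}=\widehat\sigma/\sqrt2$. This gives the claimed bound once $5\eta\le\varepsilon$, and a union bound over $i$ finishes.

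\textbf{Main obstacle.} The delicate step is justifying that the accepted samples really fit the corruption model. The number of accepted samples is random, and the clean samples must be independent $\mathcal N(0,2/\Theta_{ii})$ draws, independent of the corruption indicators. I would first argue by conditioning on all clock processes. Given the clocks, the Gaussian innovations at the two recorded $i$-updates of a strict window are fresh and independent of everything else. Their difference is therefore exactly $\mathcal N(0,2/\Theta_{ii})$ regardless of the history, and across windows these differences are independent because they use disjoint innovations. Corrupted windows only affect which samples are arbitrary. This corruption is allowed to depend on the trajectory, which \Cref{lem:robustvarest} tolerates.
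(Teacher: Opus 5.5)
Your proposal is correct and follows the paper's proof essentially step for step. It uses $(i,i)$ windows of length $\varepsilon/(5d)$ (your $c=1/5$), treats non-strict windows as corruption at rate at most $\varepsilon/5$, applies a Chernoff bound to the number of exhibiting windows, and then applies \Cref{lem:robustvarest} with a union bound over coordinates. It finishes with the direct check that rescaling the held-out segment by $\sqrt{D_i}$ gives Glauber dynamics for $D^{-1/2}\Theta D^{-1/2}$. Your conditioning-on-clocks paragraph is more careful than the paper about why the accepted samples fit the corruption model. One correction: it is the latent value $\varepsilon_1-\varepsilon_2$, not the recorded value, that is independent of strictness.
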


We proceed in three steps. \Cref{ss:propertystat3} identifies the ``$ii$'' statistic underlying diagonal estimation. \Cref{ss:retrieving-diagonal} uses this statistic together with robust variance estimation to prove \Cref{lem:singlethetaii}. \Cref{ss:normalizing} shows that the rescaled trajectory is again a Glauber trajectory and then completes the proof of \Cref{thm:normalization}.
\begin{lemma}\label{lem:singlethetaii}
    For fixed $i$, given a Glauber trajectory whose length satisfies
    \[T\ge\frac{4000d\log(8/\delta)}{\varepsilon^3},\] we may retrieve in polynomial
    time an estimate $D_i$ for $\Theta_{ii}$ with
    \[\left\lvert
    \frac1{\sqrt{D_i}}-\frac1{\sqrt{\Theta_{ii}}}
    \right\rvert\le\frac\varepsilon{\sqrt{\Theta_{ii}}}\]
    with probability $1-\delta$.
\end{lemma}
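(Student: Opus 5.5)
We would prove \Cref{lem:singlethetaii} by turning the ``$ii$'' pattern into a stream of $\eta$-corrupted samples from $\mathcal N(0,2/\Theta_{ii})$ and then invoking \Cref{lem:robustvarest}.

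\emph{Window statistic.} Fix a window length $\tau\le 1/3$, to be set to $\tau=c\,\varepsilon/d$ for a small constant $c$, and partition the trajectory into $\lfloor T/\tau\rfloor$ disjoint consecutive windows. A window is \emph{accepted} if it exhibits the pattern $P=(i,i)$ in the sense of \Cref{def:pattern}. Within an accepted window we take two $i$-updates, say the last update of $i$ in the first half and the first update of $i$ in the second half. These choices depend only on $i$'s clock. We record their difference $Z=X_i^{(b)}-X_i^{(a)}$.

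\emph{Clean windows.} Call the window clean if no neighbor of $i$ updates anywhere in it. On a clean window both updates use the same conditional mean, so conditional on the past and on all clocks, $Z\sim\mathcal N(0,2/\Theta_{ii})$ exactly. The fresh Gaussian noises at those updates are independent of everything before them.

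\emph{Corruption rate.} By \Cref{lem:poissonbounds}, the window is unclean with probability at most $1-e^{-\tau d}\le \tau d=c\varepsilon$. This event depends only on the neighbors' clocks, so by the argument of \Cref{lem:pattern-independence} it is independent of acceptance and independent across windows. Conditioned on acceptance, the corruption indicators are therefore i.i.d.\ Bernoulli with parameter $\eta\le c\varepsilon$, exactly as required by \Cref{def:corruption-model}. Corrupted values depend on the past trajectory, which the model permits.

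\emph{Sample count.} By \Cref{lem:exhibit}, each window is accepted with probability at least $\tau^2/8$, independently across windows. By a Chernoff bound, with probability $1-\delta/2$ the number of accepted windows is at least $T\tau/16$ once $T\tau^3\gtrsim \log(2/\delta)$. We need at least $2\log(8/\delta)/\eta^2$ samples. Taking $\eta=\varepsilon/5$ (so $\tau\approx \varepsilon/(5d)$), this requires $T\gtrsim d\log(8/\delta)/\varepsilon^3$. The constant $4000$ is meant to absorb the requirements from both the Chernoff bound and \Cref{lem:robustvarest}.

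\emph{Conclusion.} \Cref{lem:robustvarest} then outputs $\widehat\sigma$ with $|\widehat\sigma-\sqrt{2/\Theta_{ii}}|<5\eta\sqrt{2/\Theta_{ii}}\le\varepsilon\sqrt{2/\Theta_{ii}}$. Setting $D_i=2/\widehat\sigma^2$ gives $1/\sqrt{D_i}=\widehat\sigma/\sqrt2$, which is exactly the claimed bound. A union bound over the two failure events gives overall success probability $1-\delta$.

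The main obstacle is the corruption step. The number of samples is random and determined by the clocks, and we need the corruption indicators to be genuinely i.i.d.\ Bernoulli($\eta$), or at least conditionally bounded by $\eta$, given that a window is accepted. The cleanest route is to condition on the entire clock realization. Acceptance, cleanliness, and the choice of update indices are then all fixed functions of independent Poisson processes, while the sample values are driven by Gaussian noise that is fresh relative to the past. Writing this out carefully, including using a fixed rule for selecting the two $i$-updates so that selection does not bias the Gaussian values, is where the care is needed.
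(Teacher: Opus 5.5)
Your proposal is correct and follows the paper's proof essentially step for step: the paper also uses windows of length $\varepsilon/(5d)$, keeps those exhibiting $(i,i)$, treats non-strict windows as corruption at rate at most $\varepsilon/5$, and applies \Cref{lem:robustvarest} to a difference of two $i$-values, arriving at exactly the constant $4000$. The paper takes the values at the midpoint and end of the window rather than your last/first-update choice, which is equivalent on clean windows. One small slip: the Chernoff condition should read $T\tau\gtrsim\log(2/\delta)$ rather than $T\tau^3$, because the expected number of accepted windows is $(T/\tau)\cdot\tau^2/8=T\tau/8$. This condition is implied by your sample-count requirement, so your final bound $T\gtrsim d\log(8/\delta)/\varepsilon^3$ stands.
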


\subsection{Properties of the statistic}
\label{ss:propertystat3}
Fix $i$.
Let $T\le1/3$, and let $I_t$ denote the time interval $[t,t+T)$.
Let $\mathcal A^{(t)}$ denote the event that $I_t$
is strict with respect to $(i,i)$,
and $\mathcal B^{(t)}$ the event $I_t$ exhibits $(i,i)$.

If $\mathcal A^{(t)}$ and $\mathcal B^{(t)}$ both hold, 
let $Y^{(0)}$, $Y^{(1)}$, $Y^{(2)}$
denote the position at times $t$, $t+T/2$, $t+T$
respectively.

\begin{lemma} \label{iiStatisticFormula}
    We have
    \(Y^{(1)}_i-Y^{(2)}_i\mid\mathcal A^{(t)},\mathcal B^{(t)}
    \sim \mathcal{N}\left(0,\frac2{\Theta_{ii}}\right)\).
\end{lemma}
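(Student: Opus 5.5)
Under $\mathcal B^{(t)}$, coordinate $i$ updates at least once in $[t,t+T/2)$ and at least once in $[t+T/2,t+T)$. Under $\mathcal A^{(t)}$, no neighbor of $i$ updates anywhere in $I_t$. Non-neighbors of $i$ may update, but they do not enter the conditional mean of $i$, and they do not change coordinate $i$. The plan is therefore to show that every $i$-update inside the window resamples $X_i$ from one and the same Gaussian, $\mathcal N(m,1/\Theta_{ii})$ with
\[
m=-\sum_{k\in N(i)}\frac{\Theta_{ik}}{\Theta_{ii}}Y^{(0)}_k .
\]
Here $m$ is frozen at its value at time $t$, because no neighbor coordinate changes during $I_t$.

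First, I condition on the full clock configuration (all update times and indices) in $I_t$, restricted to clock outcomes satisfying $\mathcal A^{(t)}\cap\mathcal B^{(t)}$, and on the state $Y^{(0)}$ at time $t$. By the strong Markov property, the Gaussian noises used at update times after $t$ are independent of $Y^{(0)}$ and of the clocks. The clocks are Poisson processes independent of the noise, so conditioning on events that depend only on clocks leaves these noises i.i.d. standard.

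Given this conditioning, $Y^{(1)}_i$ is the value set at the last $i$-update in the first half. That value is $m+\Theta_{ii}^{-1/2}Z_1$, where $Z_1$ is the fresh noise of that update. Likewise $Y^{(2)}_i=m+\Theta_{ii}^{-1/2}Z_2$, where $Z_2$ is the noise of the last $i$-update in the second half. (If $Y^{(2)}$ means the state at time $t+T$ exclusive, it is still the value after the last $i$-update in the window.) These are two distinct updates, so $Z_1$ and $Z_2$ are independent standard normals, and both are independent of $m$. The mean $m$ cancels, giving
\[
Y^{(1)}_i-Y^{(2)}_i=\Theta_{ii}^{-1/2}(Z_1-Z_2)\sim\mathcal N\!\left(0,\tfrac{2}{\Theta_{ii}}\right).
\]
This conditional law does not depend on $Y^{(0)}$ or on the particular clock realization. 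Integrating over them, conditioned on $\mathcal A^{(t)}\cap\mathcal B^{(t)}$, therefore gives the claim.

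The main obstacle is justifying that conditioning on $\mathcal A^{(t)}$ and $\mathcal B^{(t)}$ does not bias the noise variables or the starting state. I would handle this by writing the dynamics as a deterministic function of three independent ingredients: the Poisson clocks, an i.i.d. noise sequence attached to update events, and the initial condition. $\mathcal A^{(t)}$ and $\mathcal B^{(t)}$ are measurable with respect to the clocks in $I_t$. These clocks are independent of the pre-$t$ history (independent increments) and of the noises attached to updates inside $I_t$. So the conditioning only fixes which noise indices are used, which is enough to complete the argument.
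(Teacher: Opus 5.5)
Your proposal is correct and takes the same route as the paper. Both write $Y^{(1)}_i$ and $Y^{(2)}_i$ as a common conditional mean plus the fresh noises from the last $i$-update in each half. The mean is frozen at its time-$t$ value because no neighbor of $i$ updates, so it cancels, leaving $\mathcal N(0,2/\Theta_{ii})$ independently of $Y^{(0)}$. Your version is slightly more careful than the paper's, which asserts $Y^{(1)}_j=Y^{(2)}_j$ for all $j\neq i$ even though strictness only freezes the neighbors of $i$, and which leaves implicit why conditioning on clock events does not bias the noises.
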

\begin{proof}
    By \Cref{def:cont-glauber-dynamics}, we have
    \begin{align*}
    Y^{(1)}_i&=-\sum_{j\ne i}-\frac{\Theta_{ij}}{\Theta_{ii}}Y^{(0)}_j
    +\varepsilon_1,\\
    \text{and}\quad
        Y^{(2)}_i&=-\sum_{j\ne i}-\frac{\Theta_{ij}}{\Theta_{ii}}Y^{(1)}_j
    +\varepsilon_2,
    \end{align*}
    where $\varepsilon_1,\varepsilon_2\sim \mathcal{N}(0,\frac1{\Theta_{ii}})$ are i.i.d.
    
    If $\mathcal A^{(t)}$ holds,
    $Y^{(1)}_j=Y^{(2)}_j$ for all $j\ne i$,
    so 
    \begin{align*}
    &Y_i^{(1)}-Y_i^{(2)}\mid Y^{(0)},\mathcal A^{(t)},\mathcal B^{(t)}
    =\varepsilon_1-\varepsilon_2\sim \mathcal{N}\left(0,\frac2{\Theta_{ii}}\right),
    \end{align*}
    and the dependence on $Y^{(0)}$ may be dropped.
\end{proof}

\subsection{Retrieving the estimator}
\label{ss:retrieving-diagonal}
We divide our sample into $M$ intervals of length $T$,
discarding intervals that do not satisfy $\mathcal B^{(t)}$,
sampling $Y_i^{(1)}-Y_i^{(2)}$ from the rest,
and treating intervals that fail to satisfy $\mathcal A^{(t)}$
as corruption
in order to predict $1/\Theta_{ii}$ with robust one-dimensional
variance estimation.

By \Cref{lem:strict,lem:pattern-independence}, the corruption rate among the
retained intervals is bounded by
\[
q_i:=\mathbb P[\neg\mathcal A^{(t)}\mid\mathcal B^{(t)}]
\le 1-e^{-dT}.
\]

\begin{lemma}\label{lem:chernoff1}
    In a Glauber dynamic of length $MT$,
    in at least $\frac1{16}MT^2$ intervals
    $\mathcal B^{(t)}$ holds
    with probability at least $1-\exp(-\frac1{64}MT^2)$.
\end{lemma}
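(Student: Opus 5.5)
The plan is to show that the events $\mathcal B^{(t)}$ for the $M$ disjoint intervals are independent, each with probability at least $T^2/8$, and then apply a multiplicative Chernoff bound. Split the trajectory into $M$ consecutive disjoint intervals $I_{t_1},\ldots,I_{t_M}$ of length $T\le 1/3$. The event $\mathcal B^{(t)}$ that $I_t$ exhibits $(i,i)$ depends only on the restriction of coordinate $i$'s rate-$1$ Poisson clock to $I_t$. Its two halves $[t,t+T/2)$ and $[t+T/2,t+T)$ must each contain an update of $i$. By the independent-increments property of the Poisson process, the indicators $Z_m:=\mathbf 1\{\mathcal B^{(t_m)}\}$ for $m=1,\ldots,M$ are mutually independent. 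They are also independent of all Gaussian resampling noise, though that is not needed here.

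For the per-interval probability, apply \Cref{lem:exhibit} with $k=2$ and $T\le1/3$. This gives $p:=\mathbb P[\mathcal B^{(t)}]\ge T^2/(2\cdot 4)=T^2/8$, so $\mathbb E\bigl[\sum_m Z_m\bigr]=Mp\ge MT^2/8$. The target threshold $\frac1{16}MT^2$ is at most half of this mean. Since the $Z_m$ are i.i.d.\ Bernoulli$(p)$, the multiplicative Chernoff lower tail $\mathbb P[S\le(1-\gamma)Mp]\le\exp(-\gamma^2Mp/2)$ applies. Take $\gamma=1/2$, or more precisely the $\gamma$ with $(1-\gamma)Mp = MT^2/16$, which satisfies $\gamma\ge 1/2$, so the bound only improves. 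This yields
\[
\mathbb P\Bigl[\textstyle\sum_m Z_m<\tfrac1{16}MT^2\Bigr]\le \exp\!\left(-\tfrac18 Mp\right)\le\exp\!\left(-\tfrac1{64}MT^2\right).
\]

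The only step needing care is the constant bookkeeping. The event is monotone in $p$: a larger $p$ only makes the shortfall below a fixed threshold less likely. So we may reduce to $p=T^2/8$ exactly via stochastic domination of Binomial$(M,p)$ over Binomial$(M,T^2/8)$, and then run Chernoff with $\gamma=1/2$. Everything else is routine.
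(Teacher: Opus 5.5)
Your proposal is correct and follows the paper's proof: the events $\mathcal B^{(t)}$ are independent across disjoint intervals, each has probability at least $T^2/8$ by \Cref{lem:exhibit} with $k=2$, and a multiplicative Chernoff lower tail with $\gamma=1/2$ gives $\exp(-MT^2/64)$. Your stochastic-domination remark, reducing to $p=T^2/8$ exactly, is a step the paper leaves implicit.
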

\begin{proof}
By \Cref{lem:exhibit}, each interval satisfies $\mathcal B^{(t)}$ with
probability at least $T^2/8$. These events are independent across disjoint
intervals.
A Chernoff lower-tail bound therefore gives
\[
\mathbb P\left[
\sum_{r=1}^M \mathbf 1\{\mathcal B^{(r)}\}\le\frac12\frac{MT^2}{8}
\right]
\le
\exp\left(-\frac18\frac{MT^2}{8}\right),
\]
which is exactly the claimed bound.
\end{proof}

\begin{proof}[Proof of \Cref{lem:singlethetaii}]
    Choose
    \[
    T=\frac{\varepsilon}{5d}
    \qquad\text{and}\qquad
    MT^2=\frac{800\log(8/\delta)}{\varepsilon^2}.
    \]
    Then the actual corruption rate satisfies
    \[
    q_i
    \le 1-e^{-dT}
    \le 1-e^{-\varepsilon/5}
    \le \frac{\varepsilon}{5}.
    \]
    Since
    \[
    \exp\!\left(-\frac{MT^2}{64}\right)
    =
    \exp\!\left(-\frac{25\log(8/\delta)}{2\varepsilon^2}\right)
    \le \frac{\delta}{8},
    \]
    by \Cref{lem:chernoff1},
    with probability $1-\delta/2$ we see
    \[
    \frac{MT^2}{16}
    =
    \frac{50\log(8/\delta)}{\varepsilon^2}
    =
    \frac{2\log(8/\delta)}{(\varepsilon/5)^2}.
    \]

    Then, by \Cref{lem:robustvarest} with corruption parameter
    \(q_i\le\varepsilon/5\) and failure probability \(\delta/2\), we
    can estimate \(\sqrt{2/\Theta_{ii}}\) within additive error
    \[
    5\cdot\frac{\varepsilon}{5}\sqrt{\frac{2}{\Theta_{ii}}}
    =
    \varepsilon\sqrt{\frac{2}{\Theta_{ii}}},
    \]
    and hence within a factor \(1\pm\varepsilon\), with success probability
    \(1-\delta/2\).
    Taking a union bound gives the desired success rate.

    Finally, the required length of the trajectory is
    \[
    MT=\frac{4000d\log(8/\delta)}{\varepsilon^3}. \qedhere
    \]
\end{proof}

Finally,
we derive estimates for all $n$
diagonal entries $\Theta_{ii}$.
\begin{corollary}\label{coro:thetaii}
    Given a Glauber dynamic of length
    \[T=\frac{4000d\log(8n/\delta)}{\varepsilon^3},\] we may retrieve in polynomial
    time estimates $D_i$ for each $i$ such that, with probability at least
    $1-\delta$,
    \[\left\lvert
    \frac1{\sqrt{D_i}}-\frac1{\sqrt{\Theta_{ii}}}
    \right\rvert\le\frac\varepsilon{\sqrt{\Theta_{ii}}}\]
    for all $i=1,\ldots,n$.
\end{corollary}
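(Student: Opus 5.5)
The corollary follows from \Cref{lem:singlethetaii} together with a union bound over the $n$ coordinates. I would run the per-coordinate estimator of \Cref{lem:singlethetaii} separately for each $i\in[n]$, with failure parameter $\delta/n$ in place of $\delta$. For this choice the lemma needs a trajectory of length
\[
\frac{4000d\log(8/(\delta/n))}{\varepsilon^3}=\frac{4000d\log(8n/\delta)}{\varepsilon^3},
\]
which is exactly the length $T$ in the statement. So each individual invocation is supplied with enough data.

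The key observation is that the $n$ invocations can share the same trajectory. \Cref{lem:singlethetaii} is a statement about a single fixed $i$. Its proof uses only the Poisson clocks and the Gaussian update noise along one Glauber trajectory, and it never requires that the data be unused by any other procedure. Reusing the trajectory therefore does not invalidate the marginal guarantee for any single $i$, although the estimates $D_1,\dots,D_n$ may be dependent. The union bound needs only these marginal guarantees, not independence. Let $F_i$ be the event that the estimate $D_i$ fails the bound $\bigl\lvert 1/\sqrt{D_i}-1/\sqrt{\Theta_{ii}}\bigr\rvert\le\varepsilon/\sqrt{\Theta_{ii}}$. Then
\[
\mathbb P\Bigl[\bigcup_{i=1}^n F_i\Bigr]\le\sum_{i=1}^n\mathbb P[F_i]\le n\cdot\frac{\delta}{n}=\delta.
\]

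The running time is $n$ calls to a polynomial-time procedure, so it is polynomial. The only point needing care is that reuse check: the lemma's interval partition and its robust variance estimator, \Cref{lem:robustvarest}, are applied coordinate by coordinate to the same windows, and each application is valid on its own.
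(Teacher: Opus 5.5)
Your proposal is correct and matches the paper's proof: apply \Cref{lem:singlethetaii} to each coordinate with failure probability $\delta/n$, note that $\log(8/(\delta/n))=\log(8n/\delta)$, and take a union bound over the $n$ coordinates. Your remark that all $n$ estimates may share one trajectory, because the union bound needs only the marginal guarantees, is left implicit in the paper; its Algorithm~\ref{alg:estdiag} does share the same windows across all $i$.
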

\begin{proof}
Apply \Cref{lem:singlethetaii}
with error $\delta/n$.
Then, the probability of any error 
among the $n$ estimates is $\delta$
by union bound.
\end{proof}

\begin{algorithm}[H]
    \caption{$D=\text{EstimateDiagonal}(n,d,\alpha,\varepsilon,\delta)$}
    \label{alg:estdiag}
    \begin{algorithmic}[1]
        \STATE Set $T=\frac{\varepsilon}{5d}$ and $M=\frac{20000d^2\log(8n/\delta)}{\varepsilon^4}$.
        \STATE Observe Glauber trajectory of length $MT$,
        split into $M$ intervals $I_1$, $\ldots$, $I_M$ of length $T$.
        \FOR{$i=1,\ldots,n$}
            \STATE Let $S=\{1\le t\le M:\mathcal B^{(t)}\}$, where $\mathcal B^{(t)}$ is defined in \Cref{ss:propertystat3}.
            \IF{$|S|<\frac1{16}{MT^2}$}
                \STATE Return $\bot$.
            \ELSE
                \STATE By \Cref{lem:robustvarest}, estimate $\widehat\sigma^2$ from $\{Y^{(1)}_i-Y^{(2)}_i:t\in S\}$, where $Y_i^{(1)}$ and $Y_i^{(2)}$ are defined in \Cref{ss:propertystat3}.
                \STATE Set $D_i=2/\widehat\sigma^2$.
            \ENDIF
        \ENDFOR
        \STATE Return $D$.
    \end{algorithmic}
\end{algorithm}

\subsection{Normalizing the Gaussian}

\label{ss:normalizing}
Let $X^{(1)}$, $X^{(2)}$, $\ldots$
denote the updates from the Glauber dynamic.
As earlier,
define $X'^{(1)}$, $X'^{(2)}$, $\ldots$
so that \(X_i'^{(k)}:=\sqrt{\Theta_{ii}}\cdot X_i^{(k)}\) for all \(i,k\).
\begin{lemma}
\label{lem:thetaprime}
    The above $X'^{(1)}$, $X'^{(2)}$, $\ldots$
    is a Glauber trajectory 
    with precision matrix
    $\Theta'=\Theta_{\mathrm{diag}}^{-1/2}
    \Theta\Theta_{\mathrm{diag}}^{-1/2}$.
\end{lemma}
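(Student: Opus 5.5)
Write $\Lambda:=\Theta_{\mathrm{diag}}^{1/2}$, so that $X'^{(k)}=\Lambda X^{(k)}$ and $\Theta'=\Lambda^{-1}\Theta\Lambda^{-1}$. The plan is to check the defining ingredients of \Cref{def:cont-glauber-dynamics} one at a time. The update times $S^{(\ell)}$ and the chosen indices $I^{(\ell)}$ are unchanged by a deterministic coordinatewise rescaling, so they keep their laws ($\mathrm{Exp}(n)$ gaps, uniform indices) and their independence from everything else. The initialization $X'^{(0)}=\Lambda X^{(0)}$ is still arbitrary, which the definition allows. It remains to show that each single-site update of $X'$ has the form prescribed by $\Theta'$.

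\textbf{Computing the entries of $\Theta'$.} We have
\[
\Theta'_{ij}=\frac{\Theta_{ij}}{\sqrt{\Theta_{ii}\Theta_{jj}}}.
\]
In particular $\Theta'_{ii}=1$. Also $\Theta'$ is positive definite, being congruent to $\Theta$, and it has the same support as $\Theta$. Hence $N(i)$ is the same for both matrices, so the conditional-independence graph is unchanged.

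\textbf{The update step.} Suppose coordinate $i$ is updated at step $\ell$. By definition,
\[
X_i^{(\ell)}=-\sum_{j\in N(i)}\frac{\Theta_{ij}}{\Theta_{ii}}X_j^{(\ell-1)}+\xi,\qquad \xi\sim\mathcal N(0,1/\Theta_{ii}),
\]
where $\xi$ is independent of the past. Multiply both sides by $\sqrt{\Theta_{ii}}$ and substitute $X_j^{(\ell-1)}=X_j'^{(\ell-1)}/\sqrt{\Theta_{jj}}$. This gives
\[
X_i'^{(\ell)}=-\sum_{j\in N(i)}\frac{\Theta_{ij}}{\sqrt{\Theta_{ii}\Theta_{jj}}}X_j'^{(\ell-1)}+\sqrt{\Theta_{ii}}\,\xi
=-\sum_{j\in N(i)}\frac{\Theta'_{ij}}{\Theta'_{ii}}X_j'^{(\ell-1)}+\xi'.
\]
Here $\xi'=\sqrt{\Theta_{ii}}\,\xi\sim\mathcal N(0,1)=\mathcal N(0,1/\Theta'_{ii})$, and $\xi'$ is still independent of the past. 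The coordinates $j\neq i$ are unchanged, and their rescaled versions are therefore unchanged as well.

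\textbf{Conclusion.} The Markov transition kernel of $X'$ is exactly the Glauber kernel for $\Theta'$. As a consistency check, $\Lambda X\sim\mathcal N(0,\Lambda\Sigma\Lambda)$ and $(\Lambda\Sigma\Lambda)^{-1}=\Theta'$, so the stationary law matches. There is no real obstacle here. The only point needing care is to argue at the level of conditional laws given the entire past, rather than only marginals, so that the noise $\xi'$ stays independent of the history and of the clock and index randomness.
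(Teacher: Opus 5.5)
Your proposal is correct and takes essentially the same approach as the paper: compute $\Theta'_{ij}=\Theta_{ij}/\sqrt{\Theta_{ii}\Theta_{jj}}$, then multiply the single-site update by $\sqrt{\Theta_{ii}}$ and substitute the rescaled coordinates to recover the Glauber update for $\Theta'$ with $\mathcal N(0,1)$ noise. Your extra remarks (unchanged clocks and indices, same support, independence of the rescaled noise from the past) spell out details the paper leaves implicit, and they are all sound.
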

\begin{proof}
    Note that \(\Theta'_{ij}=\frac{\Theta_{ij}}
    {\sqrt{\Theta_{ii}\Theta_{jj}}}\).
    
    We may directly check
    that \Cref{def:cont-glauber-dynamics} is preserved.
    We have
    \begin{align*}
        x'_i&\mapsto
        \sqrt{\Theta_{ii}}
        \left(
        -\sum_{j=1}^n\frac{\Theta_{ij}}{\Theta_{ii}}
        \frac{x'_j}{\sqrt{\Theta_{jj}}}
        +\mathcal{N}\left(0,\frac1{\Theta_{ii}}\right)
        \right)\\
        &=-\sum_{j=1}^n\frac{\Theta_{ij}}
        {\sqrt{\Theta_{ii}\Theta_{jj}}}x_j'
        +\mathcal{N}(0,1).\qedhere
    \end{align*}
\end{proof}

To prove our main theorem,
we consider the trajectory
$\widehat X^{(1)}$, $\widehat X^{(2)}$, $\ldots$
defined by
\(\widehat X^{(k)}_i=\sqrt{D_i}\cdot X_i^{(k)}\) for all \(i,k\).
\paragraph{Putting the normalization together.}
\begin{proof}[Proof of \Cref{thm:normalization}]
    It suffices to show $\widehat X$
    is a Glauber dynamic with precision
    matrix $\widehat\Theta
    =D^{-1/2}\Theta D^{-1/2}$,
    i.e. \(\widehat\Theta_{ij}=\frac{\Theta_{ij}}{\sqrt{D_iD_j}}\).
    Again, we check \Cref{def:cont-glauber-dynamics}
    is preserved.
    We have
    \begin{align*}
    \widehat x_i
    &\mapsto\sqrt{D_i}
        \left(
        -\sum_{j=1}^n\frac{\Theta_{ij}}{\Theta_{ii}}
        \frac{\widehat x_j}{\sqrt{D_j}}
        +\mathcal{N}\left(0,\frac1{\Theta_{ii}}\right)
        \right)\\
    &=-\sum_{j=1}^n\frac{\widehat\Theta_{ij}}
    {\widehat\Theta_{ii}}\widehat x_j
    +\mathcal{N}\left(0,\frac1{\widehat\Theta_{ii}}\right).
    \end{align*}
    Finally, for independence reasons, we use only the first
    \[
    T_{\mathrm{diag}}
    :=
    \frac{4000d\log(8n/\delta)}{\varepsilon^3}
    \]
    updates of the trajectory to estimate $D$, discard that initial segment, and
    keep the remaining trajectory of length $T_{\mathrm{rest}}$ to produce the
    trajectory with the desired properties.
\end{proof}

In particular, note that by construction,
$\widehat\Theta$ is coordinate-wise
a $(1\pm O(\varepsilon))$-approximation for $\Theta'$
(defined in \Cref{lem:thetaprime}).

Concretely,
\begin{corollary}\label{coro:coordwise}
    There exist $c_1,\ldots,c_n\in1\pm\varepsilon$
    so that
    for each $i$ and $k$,
    we have $\widehat X_i^{(k)}=\frac1{c_i}X_i'^{(k)}$;
    moreover 
    $\widehat\Theta_{ij}=c_ic_j\Theta_{ij}'$
    for each $i$ and $j$.
\end{corollary}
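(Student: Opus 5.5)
Set $c_i:=\sqrt{\Theta_{ii}}/\sqrt{D_i}$. Both claimed identities then follow by algebra from the definitions: the substance is only in showing $c_i\in 1\pm\varepsilon$, and that comes straight from the accuracy guarantee of \Cref{thm:normalization}.

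First, the trajectory relation. By definition $X_i'^{(k)}=\sqrt{\Theta_{ii}}\,X_i^{(k)}$ and $\widehat X_i^{(k)}=\sqrt{D_i}\,X_i^{(k)}$. Dividing the two gives
\[
\widehat X_i^{(k)}=\frac{\sqrt{D_i}}{\sqrt{\Theta_{ii}}}\,X_i'^{(k)}=\frac1{c_i}X_i'^{(k)}.
\]

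Second, the precision-matrix relation. From \Cref{thm:normalization} we have $\widehat\Theta_{ij}=\Theta_{ij}/\sqrt{D_iD_j}$, and from \Cref{lem:thetaprime} we have $\Theta'_{ij}=\Theta_{ij}/\sqrt{\Theta_{ii}\Theta_{jj}}$. Taking the ratio,
\[
\frac{\widehat\Theta_{ij}}{\Theta'_{ij}}=\frac{\sqrt{\Theta_{ii}}\sqrt{\Theta_{jj}}}{\sqrt{D_i}\sqrt{D_j}}=c_ic_j,
\]
when $\Theta'_{ij}\neq0$. When $\Theta'_{ij}=0$, both sides vanish, so $\widehat\Theta_{ij}=c_ic_j\Theta'_{ij}$ holds for all $i,j$.

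Third, the range of $c_i$. The guarantee of \Cref{thm:normalization} reads
\[
\left\lvert\frac{1}{\sqrt{D_i}}-\frac{1}{\sqrt{\Theta_{ii}}}\right\rvert\le\frac{\varepsilon}{\sqrt{\Theta_{ii}}}.
\]
Multiplying through by $\sqrt{\Theta_{ii}}>0$ gives exactly $\lvert c_i-1\rvert\le\varepsilon$, so $c_i\in 1\pm\varepsilon$.

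The only real obstacle is making sure $c_i$ is oriented correctly. It must be defined as $\sqrt{\Theta_{ii}/D_i}$ rather than its reciprocal. With this choice the guarantee, which is stated on $1/\sqrt{D_i}$, translates directly into $c_i\in1\pm\varepsilon$. With the reciprocal, one would obtain only $1/(1\pm\varepsilon)$ and would have to weaken the constant. With the correct orientation, both identities follow with exactly the stated form.
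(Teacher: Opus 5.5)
Your proof is correct and matches the paper's argument. The paper also sets $c_i=\sqrt{\Theta_{ii}/D_i}$ and reads off both identities by taking ratios, and your version is slightly more careful: you avoid dividing by a possibly vanishing $X_i^{(k)}$ or $\Theta'_{ij}$, and you spell out why the normalization guarantee gives $\lvert c_i-1\rvert\le\varepsilon$.
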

\begin{proof}
    Indeed,
    \[\frac
    {X_i'^{(k)}}
    {\widehat X_i^{(k)}}
    =\sqrt{\frac{\Theta_{ii}}{D_i}}
    =:c_i
\in1\pm\varepsilon,\]
    and further
    \[\frac{\widehat\Theta_{ij}}{\Theta_{ij}'}
    =\frac{\sqrt{\Theta_{ii}\Theta_{jj}}}
    {\sqrt{D_iD_j}}
=c_ic_j.\qedhere\]
\end{proof}
Thus, the trajectory $\widehat X$ we constructed
has two properties:
\begin{itemize}
    \item it is itself a Glauber trajectory
        with diagonal entries $1\pm O(\varepsilon)$
        (and precision matrix $\widehat\Theta$);
    \item it is coordinate-wise a
        $(1\pm\varepsilon)$-approximation
        for a Glauber trajectory with diagonal entries 1
        (and precision matrix $\Theta'$).
\end{itemize}

\section{Main Algorithm}
\label{sec:main-algorithm}
We now move to structure-learning.
Our main result is as follows:
\begin{theorem}[\Cref{thm:main-theorem-structure-learning}]
\label{thm:structure2}
    Let $0<\alpha,\delta<1$ and $n,d\in\mathbb N$.
    Given a Glauber trajectory evolving according to an $(\alpha,d)$-sparse GGM
    with precision matrix $\Theta$ and whose length satisfies
    \[
        T\ge\frac{256\cdot257^5\, d^3\log(4n/\delta)}{\alpha^5},
    \]
    there is a polynomial-time algorithm
    that correctly outputs whether $i\sim j$ for each $i$ and $j$
    with probability $1-\delta$.
\end{theorem}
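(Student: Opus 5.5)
We first spend a constant-$\varepsilon$ portion of the trajectory on \Cref{thm:normalization}, say with $\varepsilon=\alpha/100$. This costs length $O(d\log(n/\delta)/\alpha^3)$, which is dominated by the target bound. It returns a trajectory $\widehat X$ that is a Glauber trajectory with precision $\widehat\Theta$. By \Cref{coro:coordwise}, $\widehat X$ is coordinatewise a $(1\pm\varepsilon)$-rescaling of a unit-diagonal trajectory $X'$ with precision $\Theta'$, where $|\Theta'_{ij}|\ge\alpha$ on edges. It therefore suffices to estimate each $\Theta'_{ij}$ to additive error $\alpha/3$: the constants $c_i\in 1\pm\varepsilon$ perturb the estimate by only $O(\varepsilon)$, and thresholding at $\alpha/2$ then decides $i\sim j$. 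We take a union bound over the $n^2$ pairs, allotting failure probability $\delta/(4n^2)$ to each, which yields the $\log(4n/\delta)$ term.

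For a fixed pair $(i,j)$ we use the ``$iiji$'' pattern $P=(i,i,j,i)$ from \Cref{sec:technical-overview}. We split the remaining trajectory into $M$ windows of length $T_0=c\alpha/d$ and keep a window only if it exhibits $P$ with no off-pattern $i/j$ updates (an observable event). We also require $|X_j^{(3)}-X_j^{(0)}|>1$. On a kept window we form the statistic
\[
Z=\frac{X_i^{(4)}-X_i^{(1)}}{X_j^{(3)}-X_j^{(0)}}.
\]
On a strict window (\Cref{def:pattern}) we verify the identity
\[
X_i^{(4)}-X_i^{(1)}=-\Theta'_{ij}\bigl(X_j^{(3)}-X_j^{(0)}\bigr)+\varepsilon^{(4)}-\varepsilon^{(1)}.
\]
The key point is that the second $i$-update refreshes coordinate $i$. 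Conditional on the past $\mathcal F$ and on $X^{(0)}$, the value $X_j^{(3)}$ depends on $X_i^{(2)}$ but not on $\varepsilon^{(1)}$. Hence $\varepsilon^{(4)}-\varepsilon^{(1)}\sim\mathcal N(0,2)$ is independent of the denominator, including after we condition on the denominator exceeding $1$ in absolute value. Consequently $Z+\Theta'_{ij}$ is a Gaussian scale mixture with scale at most $\sqrt2$, and after dividing by $\sqrt2$ it satisfies the tail hypothesis of \Cref{lem:advrobustmeanest}.

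Next we check the hypotheses of \Cref{lem:advrobustmeanest}, with the filtration generated by the trajectory up to each window's start and with $\xi$ the indicator that the window is non-strict. By \Cref{lem:strict} and \Cref{lem:pattern-independence}, the conditional corruption rate given the past and given that the window is kept is at most $1-e^{-2dT_0}\le 2c\alpha$. We take $\eta=\Theta(\alpha)$ so that the median error $5\sqrt2\,\eta$ is below $\alpha/3$. One concern is that conditioning on $|X_j^{(3)}-X_j^{(0)}|>1$ might interact with strictness. It does not, because strictness depends only on clock events, which are independent of the Gaussian noise.

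The main obstacle is lower-bounding, uniformly over the past $\mathcal F$, the probability that the denominator exceeds $1$. The starting state is arbitrary and unmixed, so we cannot use stationarity. On a strict window we write $X_j^{(3)}-X_j^{(0)}=(\text{mean shift})+g$, where $g\sim\mathcal N(0,1)$ is the fresh $j$-noise, independent of everything before it. Gaussian anti-concentration for a shifted normal gives $\mathbb P[|\mu+g|>1]\ge\mathbb P[|g|>1]$, which is an absolute constant for every shift $\mu$. Non-strict windows are counted as corrupted anyway, so they need no such bound. The accepted-window count then follows from a martingale (Azuma--Hoeffding) argument. The kept-probability per window is at least $T_0^4/(2\cdot4^4)$ by \Cref{lem:exhibit}, times a constant for the denominator event and for the absence of extra $i/j$ updates. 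We need $\Omega(\log(n/\delta)/\alpha^2)$ accepted windows, which requires
\[
M\gtrsim \frac{\log(n/\delta)}{\alpha^2 T_0^4}
\quad\text{and hence total length}\quad
MT_0\gtrsim \frac{\log(n/\delta)}{\alpha^2}\cdot\frac{d^3}{\alpha^3}=\frac{d^3\log(n/\delta)}{\alpha^5}.
\]
This matches the stated bound. The constants $256\cdot257^5$ come from $T_0=\alpha/(257d)$ together with the $4^4$ and $2$ factors. A final union bound over the normalization step and all pairs completes the proof.
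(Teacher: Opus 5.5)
Your outline is the paper's own: normalize, select $iiji$ windows, condition on a large $j$-increment, take a median via \Cref{lem:advrobustmeanest}, and union bound over pairs. However, two steps do not go through as written, and the first one breaks the claimed $d^3/\alpha^5$ length.

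\textbf{Counting accepted windows.} You propose Azuma--Hoeffding for this count. The per-window acceptance indicators $A_t$ have conditional mean only $p_0=\Theta(T_0^4)$, while the martingale increments have range $1$. Azuma therefore gives only $\mathbb P\bigl[\sum_t A_t<Mp_0/2\bigr]\le e^{-Mp_0^2/2}$. Making this at most $\delta/n^2$ forces $M\gtrsim\log(n/\delta)/T_0^8$, i.e.\ a trajectory of length of order $d^7\log(n/\delta)/\alpha^7$, not $d^3\log(n/\delta)/\alpha^5$. The requirement $M\gtrsim\log(n/\delta)/(\alpha^2T_0^4)$ that you wrote down holds only at the level of expectations; Azuma does not certify it. You need a multiplicative bound whose exponent is linear in $Mp_0$. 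The paper gets this from the exponential supermartingale $\exp\bigl(-s\sum_{t\le r}A_t+(1-e^{-s})p_0r\bigr)$ with $s=\log 2$, which yields $e^{-Mp_0/8}$ (\Cref{lem:chernoff2}). Alternatively, exhibition events depend only on clocks and are independent across disjoint windows. Ordinary Chernoff then counts them, and Azuma is adequate for the constant-probability denominator events among those windows.

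\textbf{Corruption rate.} The event $\mathcal E=\{\lvert\Delta_j\rvert>1\}$ is not independent of strictness. $\Delta_j$ depends on the clocks, i.e.\ which neighbors of $j$ updated before $j$'s designated update, and on the arbitrary start state, not only on fresh Gaussian noise. On a non-strict window, a hidden neighbor update can push the conditional mean of $j$'s update far from $0$, making $\mathcal E$ nearly certain. On a strict window, $\mathcal E$ may have probability only about a third. Selecting on $\mathcal E$ can therefore enrich corrupted windows by a constant factor, so $1-e^{-2dT_0}$ is not a valid bound on the corruption rate among kept windows. The repair is Bayes' rule, $\mathbb P[\neg\mathcal C\mid\mathcal D,\mathcal E,s]\le\mathbb P[\neg\mathcal C\mid\mathcal D,s]/\mathbb P[\mathcal E\mid\mathcal D,s]$. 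This needs a constant lower bound on $\mathbb P[\mathcal E\mid\mathcal D,s]$ that holds on \emph{all} windows, strict or not. Your anti-concentration argument supplies it, because the last $j$-update of the third quarter always injects fresh unit-variance noise. This is exactly \Cref{lem:e} and the factor $4$ in the paper's bound $\eta\le4(1-e^{-2Td})$. So your remark that non-strict windows need no such bound is mistaken.

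Both fixes cost only constants. Note, however, that the specific constant $256\cdot257^5$ is asserted rather than derived. With your threshold $1$, the extra off-pattern filter, and $\varepsilon=\alpha/100$, you would have to redo that arithmetic.
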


We also derive the following parameter-learning guarantees:
\begin{corollary}[\Cref{thm:main-theorem-parameter-learning}]
\label{coro:paramlearn}
    Let $0<\alpha,\delta,\varepsilon<1$,
    and let $n,d\in\mathbb N$.
    Given a Glauber trajectory evolving according to an $(\alpha,d)$-sparse GGM
    with precision matrix $\Theta$ and whose length satisfies
    \[
        T\ge\frac{256\cdot257^5\, d^3\log(4n/\delta)}{\alpha^5\varepsilon^5},
    \]
    there is a polynomial-time algorithm that outputs an estimate $\widehat\Theta$
    for $\Theta$ such that
    \(\left\lvert\widehat\Theta_{ij}-\Theta_{ij}\right\rvert
    \le\varepsilon\left\lvert\Theta_{ij}\right\rvert\)
    for each $i$ and $j$.
\end{corollary}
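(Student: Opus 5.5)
We will prove \Cref{coro:paramlearn} by running the same pipeline as \Cref{thm:structure2}, with the accuracy target tightened from $\Theta(\alpha)$ to $\Theta(\alpha\varepsilon)$. The pipeline has three stages: normalize, estimate each normalized off-diagonal entry by a robust median, then undo the normalization.

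\textbf{Step 1: normalize.} We first apply \Cref{thm:normalization} with accuracy parameter $\varepsilon' = c\varepsilon$ for a small constant $c$ and failure probability $\delta/2$. This returns diagonal estimates $D_i$ with $\sqrt{\Theta_{ii}/D_i}=c_i\in 1\pm c\varepsilon$. It also returns a trajectory $\widehat X$ that is a Glauber trajectory for $\widehat\Theta=D^{-1/2}\Theta D^{-1/2}$, where $\widehat\Theta_{ij}=c_ic_j\Theta'_{ij}$ by \Cref{coro:coordwise}. The cost is $O(d\log(n/\delta)/\varepsilon^3)$, which is dominated by the stated budget. For the diagonal we output $D_i$ itself. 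Since $D_i=\Theta_{ii}/c_i^2$ and $c_i\in 1\pm c\varepsilon$, this is already a $(1\pm\varepsilon)$-multiplicative estimate once $c$ is small.

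\textbf{Step 2: estimate off-diagonal entries.} For each pair $(i,j)$ we run the ``$iiji$'' window statistic on $\widehat X$. We take windows of length $T_w\asymp \alpha\varepsilon/d$, keep those exhibiting $(i,i,j,i)$, and form the ratio $(\widehat X_i^{(4)}-\widehat X_i^{(1)})/(\widehat X_j^{(3)}-\widehat X_j^{(0)})$, conditioned on a denominator larger than a constant. On strict windows this ratio is centered at $-\widehat\Theta_{ij}/\widehat\Theta_{ii}$, and its noise is a Gaussian scale mixture with bounded scale. Rescaling by a constant puts it in the form required by \Cref{lem:advrobustmeanest}.

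By \Cref{lem:strict} and \Cref{lem:pattern-independence}, the conditional corruption rate is $\eta\le 1-e^{-O(dT_w)}=O(\alpha\varepsilon)$. \Cref{lem:advrobustmeanest} then gives additive error $O(\alpha\varepsilon)$ provided the number of accepted windows is $\Omega(\log(n^2/\delta)/(\alpha\varepsilon)^2)$. By \Cref{lem:exhibit} a window is accepted with probability $\Omega(T_w^4)$, and a constant fraction of the accepted windows has a large denominator. A Chernoff bound as in \Cref{lem:chernoff1} therefore shows the required count is met once the number of windows $M$ satisfies $MT_w^4\gtrsim\log(n/\delta)/(\alpha\varepsilon)^2$. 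This gives trajectory length
\[
MT_w\gtrsim \frac{\log(n/\delta)}{(\alpha\varepsilon)^2\,T_w^3}=\frac{d^3\log(n/\delta)}{\alpha^5\varepsilon^5}.
\]
This is exactly the bound in \Cref{thm:structure2} with $\alpha$ replaced by $\alpha\varepsilon$, so the stated constant carries over. We union bound over the $n^2$ pairs and the normalization step.

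\textbf{Step 3: convert back, which is the main obstacle.} We must turn additive error $O(\alpha\varepsilon)$ on $\widehat\Theta_{ij}$ into multiplicative error $\varepsilon|\Theta_{ij}|$. On non-edges we threshold: output $0$ whenever the estimate has magnitude below $\alpha/2$. For non-edges the estimate has magnitude $O(\alpha\varepsilon)<\alpha/2$, so they are output exactly as $0$. For edges the estimate has magnitude at least $\alpha(1-O(\varepsilon))>\alpha/2$, so edges survive the threshold.

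On an edge, the non-degeneracy condition gives $|\Theta'_{ij}|\ge\alpha$, so additive error $O(\alpha\varepsilon)$ is multiplicative error $O(\varepsilon)$ relative to $\widehat\Theta_{ij}$. We set $\widehat\Theta^{\mathrm{out}}_{ij}=\sqrt{D_iD_j}\cdot(\text{estimate of }\widehat\Theta_{ij})$. This compounds the factors $c_i,c_j\in1\pm c\varepsilon$ with the $1\pm O(\varepsilon)$ estimation error, so choosing the constants suitably yields total relative error at most $\varepsilon$.

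The care needed here is in the $\widehat\Theta_{ii}\neq1$ correction. The statistic in Step 2 estimates $\widehat\Theta_{ij}/\widehat\Theta_{ii}$, and $\widehat\Theta_{ii}=c_i^2$ also carries a $1\pm O(\varepsilon)$ factor. The residual noise variance is $1/\widehat\Theta_{ii}$ rather than $1$, which slightly changes the scale-mixture bound needed by \Cref{lem:advrobustmeanest}. Both effects must be absorbed into the constants, by multiplying the median by $D_i$ (equivalently $\widehat\Theta_{ii}$) and checking that the tail-domination hypothesis still holds after rescaling by $1+O(\varepsilon)$.
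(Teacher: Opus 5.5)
Your proposal is correct and follows essentially the same route as the paper: rerun the proof of \Cref{lem:thetaij} with $\alpha$ replaced by $\alpha\varepsilon$, tighten the diagonal accuracy to $1\pm O(\varepsilon)$, report zero on non-edges, and on edges output $\sqrt{D_iD_j}$ times the median. The factor $c_j/c_i\in 1\pm O(\varepsilon)$ is absorbed into the error budget; the paper does this explicitly via $|c_j/c_i-1|\le 2\varepsilon/19$. One slip in your last paragraph: do not multiply the median by $D_i$. $D_i\approx\Theta_{ii}$ is on an arbitrary scale and is not $\widehat\Theta_{ii}=c_i^2=\Theta_{ii}/D_i$ (which is unknown), so no correction is needed beyond the absorption your Step~3 formula already performs.
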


To this end,
we fix $i$ and $j$,
and evaluate the existence of each edge $i\sim j$
individually.
\begin{lemma}\label{lem:thetaij}
    For fixed $i$ and $j$,
    given a Glauber trajectory whose length satisfies
    \[
        T\ge\frac{128\cdot257^5\, d^3\log(16/\delta)}{\alpha^5},
    \]
    we may determine whether $i\sim j$ in polynomial time 
    with probability $1-\delta$.
\end{lemma}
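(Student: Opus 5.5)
We first normalize and then run the ``iiji'' median test. Apply \Cref{thm:normalization} with a constant $\varepsilon_0$, say $\varepsilon_0=\alpha/100$ or an absolute constant small enough. This spends an initial segment of length $O(d\log(1/\delta)/\varepsilon_0^3)$, which is dominated by the target bound. The result is a trajectory $\widehat X$ for $\widehat\Theta$ with $\widehat\Theta_{ii}\in 1\pm 3\varepsilon_0$, and, by \Cref{coro:coordwise}, $\widehat\Theta_{ij}=c_ic_j\Theta'_{ij}$ with $c_i,c_j\in 1\pm\varepsilon_0$. It therefore suffices to estimate $\widehat\Theta_{ij}/\widehat\Theta_{ii}$ to additive error $\alpha/4$. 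We then threshold at $\alpha/2$: non-edges have value $0$, and edges have normalized value at least $\alpha(1-O(\varepsilon_0))$.

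Next, split the remaining trajectory into $M$ windows of length $\tau$ and consider the pattern $P=(i,i,j,i)$. Keep a window if it exhibits $P$ and has no off-pattern $i$ or $j$ updates in any quarter; this condition is observable. Also require $|\widehat X_j^{(3)}-\widehat X_j^{(0)}|>c$ for a constant $c$. Here the states $X^{(0)},\dots,X^{(4)}$ are read at the quarter boundaries. A kept window is \emph{clean} if no vertex of $(N(i)\cup N(j))\setminus\{i,j\}$ updates in it. By \Cref{lem:strict,lem:pattern-independence}, the clean events are independent of the visible events, and the contamination rate is at most $1-e^{-2d\tau}\le 2d\tau$.

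On a clean window, the computation of the technical overview, done with diagonal $\widehat\Theta_{ii}$, gives
\[
\frac{\widehat X_i^{(4)}-\widehat X_i^{(1)}}{\widehat X_j^{(3)}-\widehat X_j^{(0)}}=-\frac{\widehat\Theta_{ij}}{\widehat\Theta_{ii}}+\frac{\varepsilon^{(4)}-\varepsilon^{(1)}}{\widehat X_j^{(3)}-\widehat X_j^{(0)}}.
\]
Conditionally on $X^{(0)}$, the refresh update at step $(1)$ makes $\varepsilon^{(1)}$ independent of the $j$-increment. The $j$-update at step $(3)$ depends on $X^{(2)}_i$, not on $X^{(1)}_i$. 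The noise $\varepsilon^{(4)}$ is fresh. Conditional on the past and on the denominator, the error is therefore Gaussian with variance at most $2(1+O(\varepsilon_0))/c^2$. Rescaling by a constant $s=\Theta(1/c)$ makes it a scale mixture with $|\sigma|<1$.

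\textbf{The main obstacle} is that the conditioning event $|\Delta_j|>c$ must be reached with non-negligible probability without any control on $\Sigma$. The step that rescues this is the $j$-update: given $X^{(2)}$, $\widehat X_j^{(3)}-\widehat X_j^{(0)}$ is Gaussian with variance $1/\widehat\Theta_{jj}\approx 1$ and an arbitrary mean. Gaussian anticoncentration then gives $\mathbb P[|\Delta_j|>c]\ge 1/2$ for a small constant $c$, uniformly in the mean. This event is measurable with respect to the window, and the conditional law of the ratio given it stays centered, because $\varepsilon^{(1)},\varepsilon^{(4)}$ are independent of $\Delta_j$.

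We then apply \Cref{lem:advrobustmeanest} to the rescaled ratios. We use the filtration generated by the trajectory up to each window's end, and $\xi^{(\ell)}$ is the non-clean indicator. Its conditional mean is at most $\eta=2d\tau$ by the Poisson independence of future clocks, which holds even given the past. The median then estimates $-\widehat\Theta_{ij}/\widehat\Theta_{ii}$ within $5\eta s$. To get error $\le\alpha/4$ we choose $\eta=\Theta(\alpha)$, i.e.\ $\tau=\Theta(\alpha/d)$.

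It remains to count samples. By \Cref{lem:exhibit} and the same Chernoff argument as \Cref{lem:chernoff1}, a window is kept with probability $\Omega(\tau^4)$, including the constant probabilities of no extra $i/j$ updates and $|\Delta_j|>c$. We need $2\log(2/\delta)/\eta^2=O(\log(1/\delta)/\alpha^2)$ kept windows. This gives $M\tau^4\gtrsim\log(1/\delta)/\alpha^2$, hence a total length
\[
M\tau\gtrsim\frac{\log(1/\delta)}{\alpha^2\tau^3}=\frac{d^3\log(1/\delta)}{\alpha^5},
\]
matching the bound in the statement up to the stated constants. The remaining delicate point is verifying the martingale tail hypothesis uniformly over arbitrary (non-stationary) past states, which is exactly why the ``iiji'' refresh is needed.
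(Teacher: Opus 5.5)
Your outline matches the paper's proof. You estimate the diagonal on a prefix, use $iiji$ windows of length $\tau=\Theta(\alpha/d)$, condition on a large $j$-increment via anticoncentration of the last $j$-update, treat non-strict windows as corruption, take a martingale median, and count $M\tau^4\gtrsim\log(1/\delta)/\alpha^2$. However, your corruption bound is not justified as written. The median runs over \emph{accepted} windows, and acceptance includes $\mathcal E=\{|\widehat\Delta_j|>c\}$, which is correlated with cleanliness. A hidden update of a neighbor of $j$ shifts the mean of the final $j$-update, and from an arbitrary start state this can make $\mathcal E$ more likely than on a clean window. Independence of future clocks gives only $\mathbb P[\neg\text{clean}\mid\text{visible},s]\le 1-e^{-2d\tau}$ for each start state $s$. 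It does not give $\mathbb E[\xi^{(\ell)}\mid\mathcal F_{\ell-1}]\le 2d\tau$ for the $\ell$-th accepted window. The missing step is the Bayes bound
\[
\mathbb P[\neg\text{clean}\mid\text{visible},\mathcal E,s]\le\frac{\mathbb P[\neg\text{clean}\mid\text{visible},s]}{\mathbb P[\mathcal E\mid\text{visible},s]}\le 2\bigl(1-e^{-2d\tau}\bigr).
\]
This uses your uniform-in-the-mean anticoncentration bound, which holds whether or not the window is clean, followed by averaging over the skipped windows before the next accepted one. It is exactly the paper's bound $\eta\le 4(1-e^{-2Td})$, with $1/4$ in place of your $1/2$. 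It costs only a constant factor, but without it the hypothesis of \Cref{lem:advrobustmeanest} is not verified.

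Similarly, the sample count cannot reuse the Chernoff argument of \Cref{lem:chernoff1}. That argument relies on independence across disjoint windows, which holds for clock-only events but fails once the state-dependent event $\mathcal E$ is part of acceptance. What you actually have is a lower bound $p_0=\Omega(\tau^4)$ on the acceptance probability conditional on the past. You then need a multiplicative tail bound for such adapted indicators, such as the exponential supermartingale of \Cref{lem:chernoff2}. Alternatively, apply a genuine Chernoff bound to the clock-only visible windows and then Azuma--Hoeffding for $\mathcal E$ among them. Plain Azuma--Hoeffding over all $M$ windows gives only $e^{-\Omega(Mp_0^2)}$, which would force length of order $d^7\log(1/\delta)/\alpha^7$. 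Two smaller points remain. First, \Cref{thm:normalization} costs $\log(8n/\delta)$, so for a fixed pair you should apply \Cref{lem:singlethetaii} to $i$ and $j$ only, as the paper does, to stay within $\log(16/\delta)$. Second, you do not check that your constants fit under $128\cdot 257^5$. With these repairs your argument coincides with the paper's.
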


\Cref{ss:propertystat} analyzes the statistic used to test whether $i\sim j$: first in the ideal normalized trajectory, then for the observable approximate trajectory, and finally after conditioning on a large denominator. \Cref{ss:retrieving-estimator} uses these ingredients to build the estimator and complete the proofs of \Cref{lem:thetaij}, \Cref{thm:structure2}, and \Cref{coro:paramlearn}.

A naive analysis based on the shorter ``$iji$'' pattern fails because the middle $j$-update depends on the earlier $i$-update, creating a dependence in the resulting ratio statistic. For this reason we instead study the ``$iiji$'' pattern, whose extra $i$-update breaks this dependence and leads to the statistic analyzed below. The observations remain dependent and can be corrupted by hidden neighbor updates, so after conditioning on $|\Delta_j|>c$ we estimate the resulting location parameter using \Cref{lem:advrobustmeanest}.

\subsection{Properties of the statistic}
\label{ss:propertystat}
Fix $i$ and $j$.
Let $T\le1/3$, let $I_t$ denote the time interval $[t,t+T)$,
and let \(P_{iiji}:=(i,i,j,i)\).
Let $\mathcal D^{(t)}$ denote the observable event that $I_t$ exhibits
$P_{iiji}$, and let $\mathcal C^{(t)}$ denote the event that $I_t$ is strict
with respect to $P_{iiji}$. By \Cref{lem:pattern-independence},
$\mathcal D^{(t)}$ and $\mathcal C^{(t)}$ are independent.

Normalize the Glauber trajectory
by \Cref{thm:normalization},
and let $\widehat X$ consist of
$(1\pm1/10)$ coordinate-wise approximations of $X'$.
We first analyze the normalized
Glauber dynamic $X'$.

\paragraph{The ideal normalized statistic.}

Conditioned on $\mathcal C^{(t)}$ and 
$\mathcal D^{(t)}$,
let $Y'^{(k)}$ denote the value of $X'$
at time $t+Tk/4$, for $k=0,1,2,3,4$.
Denote
\[
    \Delta'^{(t)}_j
    :=Y'^{(3)}_j-Y'^{(0)}_j
    \quad\text{and}\quad
    \Delta'^{(t)}_i
    :=Y'^{(4)}_i-Y'^{(1)}_i.
\]

\begin{lemma}\label{iijiStatisticFormula}
    We have
    \(\Delta'^{(t)}_i
    \mid \mathcal C^{(t)},\mathcal D^{(t)},Y'^{(0)},Y'^{(3)}
    \sim-\Theta_{ij}'\Delta'^{(t)}_j+\mathcal{N}(0,2)\).
\end{lemma}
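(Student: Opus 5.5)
We prove the formula by writing out the three $i$-updates and the one $j$-update of the normalized trajectory $X'$, which by \Cref{lem:thetaprime} is a Glauber trajectory with unit-diagonal precision matrix $\Theta'$. On the event $\mathcal D^{(t)}$, index $i$ updates in the first, second and fourth quarters of $I_t$ and $j$ in the third. On $\mathcal C^{(t)}$, no neighbor of $i$ or $j$ other than the scheduled index updates in each quarter. In particular, no coordinate in $(N(i)\cup N(j))\setminus\{i,j\}$ changes anywhere in $I_t$. Strictness also forbids a second $i$ in the third quarter or a $j$ elsewhere when $i\sim j$. If $i\not\sim j$, extra off-pattern $j$/$i$ updates are harmless, because $\Theta'_{ij}=0$ and we take the last update in each quarter. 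We also interpret the statistic using the values at the quarter boundaries.

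\textbf{Step 1 (conditional means).} By \Cref{def:cont-glauber-dynamics} with unit diagonal, write $m^{(k)}:=-\sum_{k'\in N(i)}\Theta'_{ik'}Y'^{(k-1)}_{k'}$. Then
\[
Y'^{(1)}_i=m^{(1)}+\varepsilon^{(1)},\qquad Y'^{(2)}_i=m^{(2)}+\varepsilon^{(2)},\qquad Y'^{(4)}_i=m^{(4)}+\varepsilon^{(4)},
\]
where the $\varepsilon$'s are fresh $\mathcal N(0,1)$ variables. The conditional mean $m^{(k)}$ depends only on the coordinates in $N(i)$ just before the $k$-th update. Those coordinates other than $j$ are frozen throughout $I_t$, and $Y'_j$ is unchanged before the $j$-update. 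Hence $m^{(4)}-m^{(1)}=-\Theta'_{ij}(Y'^{(3)}_j-Y'^{(0)}_j)$, and so
\[
\Delta'^{(t)}_i=-\Theta'_{ij}\Delta'^{(t)}_j+\varepsilon^{(4)}-\varepsilon^{(1)}.
\]

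\textbf{Step 2 (independence of the noise).} It remains to show that, conditioned on $Y'^{(0)},Y'^{(3)}$ together with $\mathcal C^{(t)},\mathcal D^{(t)}$, the pair $(\varepsilon^{(1)},\varepsilon^{(4)})$ is i.i.d.\ $\mathcal N(0,1)$. This is the main obstacle and the reason for the extra $i$-update.

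The event $\mathcal C^{(t)}\cap\mathcal D^{(t)}$ depends only on the Poisson clocks, which are independent of all Gaussian innovations, so conditioning on it does not bias the noise. The variable $\varepsilon^{(4)}$ is generated after time $t+3T/4$, so it is independent of $(Y'^{(0)},Y'^{(3)},\varepsilon^{(1)})$.

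For $\varepsilon^{(1)}$, note that $Y'^{(3)}$ is a function of $Y'^{(0)}$, $\varepsilon^{(2)}$ and the $j$-innovation. The $j$-update has mean $-\sum_{k'\in N(j)}\Theta'_{jk'}Y'^{(2)}_{k'}$, which involves $Y'^{(2)}_i$ but not $Y'^{(1)}_i$, because the second $i$-update overwrites coordinate $i$. Similarly $Y'^{(3)}_i=Y'^{(2)}_i$, and all other coordinates of $Y'^{(3)}$ are frozen. Thus $Y'^{(3)}$ is independent of $\varepsilon^{(1)}$ given $Y'^{(0)}$, which is exactly the refresh effect.

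The care needed here is in tracking precisely which innovations enter $Y'^{(3)}$. One must check that $\varepsilon^{(1)}$ enters neither the $j$-mean nor any frozen coordinate. This holds even when several updates of $i$ fall in the first quarter, since the clock schedule is conditioned on and only the last update in that quarter defines $Y'^{(1)}$.

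\textbf{Step 3 (conclusion).} Conditioning on $(Y'^{(0)},Y'^{(3)})$ fixes $\Delta'^{(t)}_j$, while $\varepsilon^{(4)}-\varepsilon^{(1)}\sim\mathcal N(0,2)$ independently of it. This gives the stated law.
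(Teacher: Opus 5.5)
Your proof is correct and follows essentially the same route as the paper: expand the $i$-updates in the first and fourth quarters to get $\Delta'^{(t)}_i=-\Theta'_{ij}\Delta'^{(t)}_j+\varepsilon^{(4)}-\varepsilon^{(1)}$, then use the overwriting second $i$-update to see that $\varepsilon^{(1)}$ never enters $Y'^{(3)}$. The one slip is your claim that all other coordinates of $Y'^{(3)}$ are frozen, since coordinates outside $N(i)\cup N(j)\cup\{i,j\}$ may update freely on $I_t$; but no neighbor of $i$ updates while coordinate $i$ holds the value carrying $\varepsilon^{(1)}$, so those updates cannot depend on it and your independence conclusion still holds.
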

\begin{proof}
    Throughout this proof we condition on $\mathcal C^{(t)}$ and $\mathcal D^{(t)}$.
    Let $\varepsilon_1,\varepsilon_2,\varepsilon_3,\varepsilon_4\sim\mathcal N(0,1)$
    denote the fresh noises coming from the last designated updates of $i$ in the
    first, second, and fourth quarters, and of $j$ in the third quarter,
    respectively.
    By the definition of $\mathcal D^{(t)}$ and $\mathcal C^{(t)}$, the interval
    exhibits $P_{iiji}$ and is strict with respect to it.

    First,
    \[
        Y_i'^{(1)}=-\sum_{k\ne i}\Theta_{ik}'Y_k'^{(0)}+\varepsilon_1,
    \]
    because no neighbor of $i$ changes during the first quarter. Likewise,
    \[
        Y_i'^{(2)}=-\sum_{k\ne i}\Theta_{ik}'Y_k'^{(0)}+\varepsilon_2,
    \]
    because the second quarter again contains an $i$-update and no neighbor of
    $i$ updates.

    Before the designated $j$-update in the third quarter, the only neighbor of
    $j$ that may have changed is $i$, so
    \[
        Y_j'^{(3)}
        =-\sum_{k\ne j}\Theta_{jk}'Y_k'^{(2)}+\varepsilon_3.
    \]
    Thus $Y'^{(3)}$ depends on $\varepsilon_2$ and $\varepsilon_3$, but not on
    $\varepsilon_1$.

    Finally, before the designated $i$-update in the fourth quarter, the only
    neighbor of $i$ that may have changed is $j$, so
    \[
        Y_i'^{(4)}
        =-\sum_{k\ne i,j}\Theta_{ik}'Y_k'^{(0)}
        -\Theta_{ij}'Y_j'^{(3)}+\varepsilon_4.
    \]
    Subtracting the expression for $Y_i'^{(1)}$ gives
    \[
        Y_i'^{(4)}-Y_i'^{(1)}
        =-\Theta_{ij}'\bigl(Y_j'^{(3)}-Y_j'^{(0)}\bigr)
        +\varepsilon_4-\varepsilon_1.
    \]
    Then $\varepsilon_4-\varepsilon_1\sim\mathcal N(0,2)$ is independent of
    $Y'^{(0)}$ and $Y'^{(3)}$. This proves the claim.
\end{proof}

\paragraph{Passing to the observable statistic.}
We now turn to the observable data from $\widehat X$.
Let $\widehat Y^{(k)}$ denote the value of $\widehat X$
at time $t+Tk/4$, for $k=0,1,2,3,4$.
Denote
\[
    \widehat\Delta_j^{(t)}
        =\widehat Y^{(3)}_j-\widehat Y^{(0)}_j
        \quad\text{and}\quad
        \widehat\Delta_i^{(t)}
        =\widehat Y^{(4)}_i-\widehat Y^{(1)}_i.
\]

\begin{lemma}\label{lem:hatsadness}
    For constants $c_i,c_j\in1\pm1/10$ as in Corollary~\ref{coro:coordwise},
    we have for all $t$ with
    $\mathcal C^{(t)}$ and $\mathcal D^{(t)}$ that
    \begin{align*}
        &\widehat\Delta_i^{(t)}
        \mid\mathcal C^{(t)},\mathcal D^{(t)},\widehat Y^{(0)},\widehat Y^{(3)}
        \sim-\frac{c_j}{c_i}\Theta_{ij}'\widehat\Delta_j^{(t)}
        +\mathcal{N}\left(0,\frac2{c_i^2}\right).
    \end{align*}
\end{lemma}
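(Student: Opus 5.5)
The plan is to transfer \Cref{iijiStatisticFormula} from the ideal normalized trajectory $X'$ to the observable trajectory $\widehat X$ using the deterministic coordinatewise rescaling of \Cref{coro:coordwise}. That corollary gives constants $c_i,c_j\in 1\pm 1/10$ with $\widehat X_k^{(\ell)}=X_k'^{(\ell)}/c_k$ for every coordinate $k$ and every time. Consequently, at the sampled times we have $\widehat Y^{(k)}=C^{-1}Y'^{(k)}$ with $C=\operatorname{diag}(c_1,\ldots,c_n)$. The constants are fixed once $D$ is fixed. The diagonal estimate $D$ is computed from the initial discarded segment, and the remaining trajectory is used for this step. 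So I will condition on $D$ throughout and treat the $c$'s as deterministic.

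The first step is to translate the increments. We get
\[
\widehat\Delta_i^{(t)}=\frac{\Delta_i'^{(t)}}{c_i}\quad\text{and}\quad \widehat\Delta_j^{(t)}=\frac{\Delta_j'^{(t)}}{c_j}.
\]
The second step is to rewrite the conditioning. The map $y\mapsto C^{-1}y$ is a bijection, so conditioning on $(\widehat Y^{(0)},\widehat Y^{(3)})$ is the same as conditioning on $(Y'^{(0)},Y'^{(3)})$; they generate the same $\sigma$-algebra. The events $\mathcal C^{(t)}$ and $\mathcal D^{(t)}$ depend only on the Poisson clocks, so they are unchanged.

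Next I apply \Cref{iijiStatisticFormula}. Conditionally on these quantities,
\[
\Delta_i'^{(t)}\sim -\Theta'_{ij}\Delta_j'^{(t)}+\mathcal N(0,2).
\]
Dividing by $c_i$ and substituting $\Delta_j'^{(t)}=c_j\widehat\Delta_j^{(t)}$ gives
\[
\widehat\Delta_i^{(t)}\sim -\frac{c_j}{c_i}\Theta'_{ij}\widehat\Delta_j^{(t)}+\frac{1}{c_i}\mathcal N(0,2),
\]
and the noise term is exactly $\mathcal N(0,2/c_i^2)$.

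The only point that needs care is the conditioning on $D$. The constants $c_k$ depend on the earlier data, so I must argue that, given $D$, the later trajectory is still a Glauber trajectory for $\Theta$. This follows because the used segment is disjoint from the estimation segment, and by the Markov property the dynamics after the split depend on the past only through the state at the split time. The initialization of \Cref{def:cont-glauber-dynamics} is arbitrary, and \Cref{iijiStatisticFormula} holds for any initialization because it conditions on $Y'^{(0)}$. So the statement holds conditionally on $D$, and hence for the fixed constants $c_i,c_j$.
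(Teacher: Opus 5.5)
Your proposal is correct and follows the paper's proof: rewrite the increments via \Cref{coro:coordwise} as $\widehat\Delta_i^{(t)}=\Delta_i'^{(t)}/c_i$ and $\widehat\Delta_j^{(t)}=\Delta_j'^{(t)}/c_j$, then transfer \Cref{iijiStatisticFormula}. Your extra remarks make explicit what the paper's one-line proof leaves implicit. These are that conditioning on $(\widehat Y^{(0)},\widehat Y^{(3)})$ is the same as conditioning on $(Y'^{(0)},Y'^{(3)})$, and that $D$ may be treated as fixed because it comes from a disjoint earlier segment and the Markov property applies.
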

\begin{proof}
    By Corollary~\ref{coro:coordwise},
    we have $\widehat\Delta_i^{(t)}=\frac1{c_i}\Delta'^{(t)}_i$
    and $\widehat\Delta_j^{(t)}=\frac1{c_j}\Delta'^{(t)}_j$,
	    from which the claim follows.
\end{proof}

\paragraph{Conditioning on a large denominator.}
Let $\mathcal E^{(t)}$
denote the condition that 
\(\lvert\widehat\Delta_j^{(t)}\rvert\ge0.6\).
We will restrict our attention
to samples where $\mathcal E^{(t)}$ holds.

\begin{lemma}\label{lem:e}
We have \(\mathbb{P}[\mathcal E^{(t)} \mid \mathcal D^{(t)}, \widehat Y^{(0)}]>1/4\).
\end{lemma}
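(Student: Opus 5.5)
The plan is to isolate the fresh Gaussian noise of the last $j$-update in the third quarter and use it to anti-concentrate $\widehat\Delta_j^{(t)}$. By \Cref{coro:coordwise}, $\widehat\Delta_j^{(t)}=\Delta'^{(t)}_j/c_j$ with $c_j\in 1\pm 1/10$. It therefore suffices to show
\[
\mathbb P\bigl[\,|\Delta'^{(t)}_j|\ge 0.66 \,\big|\, \mathcal D^{(t)},\widehat Y^{(0)}\bigr]>1/4 .
\]
Since $\widehat Y^{(0)}$ and $Y'^{(0)}$ determine each other, conditioning on either is the same.

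\textbf{Step 1: identify the last $j$-update.} On $\mathcal D^{(t)}$, coordinate $j$ updates at least once in the third quarter $[t+T/2,t+3T/4)$. Let $s$ be the time of the last such update. Then $Y'^{(3)}_j$ equals the value produced at time $s$. By \Cref{def:cont-glauber-dynamics} and the normalization of \Cref{lem:thetaprime}, this value is
\[
Y'^{(3)}_j = m_s + \varepsilon,\qquad m_s:=-\sum_{k\ne j}\Theta'_{jk}X'_k(s^-),
\]
where $\varepsilon\sim\mathcal N(0,1)$ is fresh noise.

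\textbf{Step 2: justify the independence of $\varepsilon$.} The clocks are independent of the Gaussian noises. Hence, conditionally on all clock realizations (which determine $\mathcal D^{(t)}$ and $s$), on $Y'^{(0)}$, and on all noises used strictly before $s$, the noise $\varepsilon$ is still $\mathcal N(0,1)$. Moreover, $a:=m_s-Y'^{(0)}_j$ is measurable with respect to this conditioning. Note that $\mathcal D^{(t)}$ also constrains updates in the fourth quarter, but only through clocks, so it does not affect the law of $\varepsilon$. Consequently, conditionally on this information, $\Delta'^{(t)}_j\sim \mathcal N(a,1)$. We do \emph{not} need strictness $\mathcal C^{(t)}$ here; arbitrary other updates are absorbed into the shift $a$.

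\textbf{Step 3: Anderson-type bound.} For $Z\sim\mathcal N(0,1)$ and any $a\in\mathbb R$, the mass of the symmetric interval $[-0.66,0.66]$ under $\mathcal N(a,1)$ is maximized at $a=0$, because the density is symmetric and unimodal. Hence
\[
\mathbb P[|a+Z|\ge 0.66]\ \ge\ \mathbb P[|Z|\ge 0.66]=2(1-\Phi(0.66))\approx 0.509>1/4 .
\]
Averaging over the remaining conditioning, via the tower property, gives the claim.

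The only delicate point is Step 2. One must make sure that conditioning on the observable event $\mathcal D^{(t)}$, and on where exactly the last $j$-update falls, does not bias $\varepsilon$. This holds because clocks and noises are independent, and $s$ is a stopping-type time determined by clocks alone. Everything else is a routine Gaussian computation.
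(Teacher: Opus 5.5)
Your proof is correct and follows the paper's argument. Use $c_j<1.1$ to reduce to $|\Delta'^{(t)}_j|\ge 0.66$; then, conditional on the state just before the last third-quarter $j$-update (and on the start state), $\Delta'^{(t)}_j\sim\mathcal N(m,1)$ and anti-concentrates. The differences are minor: the paper uses the weaker one-sided bound $\mathbb P[|x|\ge 0.66]\ge 1-\Phi(0.66)>1/4$ where you use the symmetric-unimodal bound $2(1-\Phi(0.66))\approx 0.51$, and you make explicit the clock/noise independence that the paper leaves implicit.
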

\begin{proof}
    As $c_j<1.1$,
    we have that \(\lvert\widehat\Delta_j^{(t)}\rvert\ge0.6\)
    is implied by \(\lvert\Delta'^{(t)}_j\rvert\ge0.66\),
    so it suffices to show \(\lvert\Delta'^{(t)}_j\rvert\ge0.66\)
    with probability more than $1/4$.

    Let $Z$ be the value of $X'$ right before
    the last $j$-update in the third quarter.
    Then, by \Cref{def:cont-glauber-dynamics},
    $Y'^{(3)}_j$ given $Z$ is a Gaussian with variance 1,
    i.e.
    \(\Delta'^{(t)}_j \mid Z\sim \mathcal{N}(m,1)\)
    for some $m$ dependent on $Z$.

    Therefore, for any fixed $Z$, we have
    \begin{align*}
    \mathbb{P}[\lvert\Delta_j'^{(t)}\rvert>0.66 \mid Z]
        &=\mathbb{P}_{x\sim \mathcal{N}(m,1)}[\lvert x\rvert\ge0.66 \mid Z]\\
        &\ge\mathbb{P}_{x\sim \mathcal{N}(\lvert m\rvert,1)}[x\ge0.66 \mid Z]\\
        &\ge\mathbb{P}_{x\sim \mathcal{N}(0,1)}[x\ge0.66]
        >1/4,
    \end{align*}
    implying that
    \[
        \mathbb{P}[\lvert\Delta_j'^{(t)}\rvert>0.66\mid \mathcal D^{(t)},\widehat Y^{(0)}]
        >\frac14,
    \]
    regardless of the realized value of $Z$.
\end{proof}
\begin{lemma}\label{lem:nomixing-ratio}
For some $\sigma<2.62$,
    we have 
    \[
        -\frac{\widehat\Delta^{(t)}_i}{\widehat\Delta^{(t)}_j}
        \;\Bigm|\;\widehat Y^{(0)},\widehat Y^{(3)},
        \mathcal C^{(t)},\mathcal D^{(t)},\mathcal E^{(t)}
        \sim\frac{c_j}{c_i}\Theta_{ij}'
        +\mathcal{N}(0,\sigma^2).
    \]
\end{lemma}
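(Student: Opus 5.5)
The plan is to start from \Cref{lem:hatsadness}, which, conditioned on $\mathcal C^{(t)},\mathcal D^{(t)},\widehat Y^{(0)},\widehat Y^{(3)}$, writes
\[
\widehat\Delta_i^{(t)}=-\frac{c_j}{c_i}\Theta_{ij}'\widehat\Delta_j^{(t)}+\frac{1}{c_i}(\varepsilon_4-\varepsilon_1),
\]
with $\varepsilon_4-\varepsilon_1\sim\mathcal N(0,2)$ independent of $\widehat Y^{(0)},\widehat Y^{(3)}$. The first key observation is that $\widehat\Delta_j^{(t)}=\widehat Y^{(3)}_j-\widehat Y^{(0)}_j$ is a deterministic function of the conditioning variables $\widehat Y^{(0)},\widehat Y^{(3)}$. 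Consequently the event $\mathcal E^{(t)}=\{\lvert\widehat\Delta_j^{(t)}\rvert\ge0.6\}$ is measurable with respect to the conditioning $\sigma$-algebra. Adding it to the conditioning therefore does not change the conditional law of the independent noise $\varepsilon_4-\varepsilon_1$. Concretely, on $\mathcal E^{(t)}$ the conditional distribution given $\widehat Y^{(0)},\widehat Y^{(3)},\mathcal C^{(t)},\mathcal D^{(t)},\mathcal E^{(t)}$ coincides with the one given in \Cref{lem:hatsadness}.

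Next, I would divide by $-\widehat\Delta_j^{(t)}$, which is a fixed nonzero number under this conditioning because $\mathcal E^{(t)}$ forces $\lvert\widehat\Delta_j^{(t)}\rvert\ge 0.6$. This gives
\[
-\frac{\widehat\Delta_i^{(t)}}{\widehat\Delta_j^{(t)}}=\frac{c_j}{c_i}\Theta_{ij}'-\frac{\varepsilon_4-\varepsilon_1}{c_i\widehat\Delta_j^{(t)}}.
\]
Since a centered Gaussian is symmetric, the second term is $\mathcal N(0,\sigma^2)$ with
\[
\sigma^2=\frac{2}{c_i^2(\widehat\Delta_j^{(t)})^2}.
\]
Here $\sigma$ depends on the conditioning, which is why the statement only asserts ``for some $\sigma$''.

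Finally I would bound $\sigma$ using $c_i\ge 0.9$ (from $c_i\in1\pm1/10$) and $\lvert\widehat\Delta_j^{(t)}\rvert\ge0.6$:
\[
\sigma\le\frac{\sqrt2}{0.9\cdot0.6}=\frac{\sqrt2}{0.54}\approx 2.619<2.62.
\]

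The only delicate point is the measurability and independence step: $\mathcal E^{(t)}$ must carry no information about $\varepsilon_1,\varepsilon_4$. This holds because $\varepsilon_1$ enters neither $\widehat Y^{(0)}$ nor $\widehat Y^{(3)}$, as shown in the proof of \Cref{iijiStatisticFormula}, and $\varepsilon_4$ is generated after time $t+3T/4$.
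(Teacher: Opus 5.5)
Your proposal is correct and matches the paper's proof: the paper likewise notes that $\mathcal E^{(t)}$ is determined by $\widehat Y^{(0)},\widehat Y^{(3)}$, applies \Cref{lem:hatsadness} to get conditional variance $2/(c_i^2(\widehat\Delta_j^{(t)})^2)$, and bounds $\sqrt2/(c_i\lvert\widehat\Delta_j^{(t)}\rvert)\le\sqrt2/0.54<2.62$. Your extra justification that $\varepsilon_1,\varepsilon_4$ are independent of the conditioning variables spells out a step the paper leaves implicit.
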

\begin{proof}
    Since $\mathcal E^{(t)}$
    is determined by $\widehat Y^{(0)}$ and $\widehat Y^{(3)}$,
    \Cref{lem:hatsadness} gives
    \begin{align*}
        &\frac{\widehat\Delta^{(t)}_i}{\widehat\Delta^{(t)}_j}
        \;\Bigm|\;\widehat Y^{(0)},\widehat Y^{(3)},
        \mathcal C^{(t)},\mathcal D^{(t)},\mathcal E^{(t)}
        \sim-\frac{c_j}{c_i}\Theta_{ij}'
        +\mathcal{N}\left(0,\frac2
            {c_i^2\left(\widehat\Delta_j^{(t)}\right)^2}
        \right),
    \end{align*}
    Since $c_i>0.9$ and \(\lvert\widehat\Delta_j^{(t)}\rvert>0.6\),
    the upper bound follows from
    \(\frac{\sqrt2}{c_i\lvert\widehat\Delta_j^{(t)}\rvert}<2.62\). \qedhere
\end{proof}

\subsection{Retrieving the estimator}
\label{ss:retrieving-estimator}
We divide our Glauber trajectory into $M$ intervals of length $T$,
for some $M$ and $T$ we will decide later
(so the trajectory has length $MT$).
We discard intervals that do not satisfy
$\mathcal D^{(t)}$ and $\mathcal E^{(t)}$,
sampling $-\widehat\Delta_i^{(t)}/\widehat\Delta_j^{(t)}$
from the rest,
and treating intervals that fail to satisfy $\mathcal C^{(t)}$
as corruption.

\begin{algorithm}[htbp]
    \caption{$\widetilde\Theta=\text{LearnGGM}(n,d,\alpha,\varepsilon,\delta)$}
    \label{alg:mainalgo}
    \begin{algorithmic}[1]
        \STATE Set $\phi=\alpha/257$, $T=\phi/d$, and $M=256d^4\log(4n/\delta)/\phi^6$.
        \STATE Let $\operatorname{diag}(\widetilde\Theta)=\text{EstimateDiagonal}(n,d,\alpha,1/10,\delta/2)$ from Algorithm~\ref{alg:estdiag}.
        \STATE Use the remaining trajectory.
        \STATE Let $E=\varnothing$.
        \STATE Observe Glauber trajectory of length $MT$, split into $M$ intervals $I_1$, $\ldots$, $I_M$ of length $T$.
        \FOR{$i=1,\ldots,n$}
            \FOR{$j=i+1,\ldots,n$}
                \STATE Let $S=\{1\le t\le M:\mathcal D^{(t)},\mathcal E^{(t)}\}$, where $\mathcal D^{(t)}$ and $\mathcal E^{(t)}$ are defined in \Cref{ss:propertystat}.
                \IF{$|S|<\frac1{4096}{MT^4}$}
                    \STATE Return $\bot$.
                \ELSE
                    \STATE By \Cref{lem:advrobustmeanest}, estimate the median $\widehat\nu$ from $\{-\widehat\Delta^{(t)}_i/(2.62\widehat\Delta^{(t)}_j):t\in S\}$, where $\widehat\Delta_i^{(t)}$ and $\widehat\Delta_j^{(t)}$ are defined in \Cref{ss:propertystat}, and set $\widehat\mu=-2.62\widehat\nu$.
                    \IF{$|\widehat\mu|>9\alpha/22$}
                        \STATE Set $E=E\sqcup\{(i,j)\}$.
                    \ENDIF
                \ENDIF
            \ENDFOR
        \ENDFOR
        \STATE Return $E$.
    \end{algorithmic}
\end{algorithm}

Let $\eta$ be an upper bound on the conditional corruption probability of an
accepted interval, i.e. on
\(\mathbb P[\neg\mathcal C^{(t)}\mid \mathcal D^{(t)},\mathcal E^{(t)},s]\)
for a realized start state $s$.
We first bound $\eta$.
\begin{lemma}
    We have
    \(\eta\le4(1-\exp(-2Td))\).
\end{lemma}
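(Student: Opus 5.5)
The plan is to write $\eta$ as a ratio of two conditional probabilities given the realized start state $s$ and the observable event $\mathcal D^{(t)}$:
\[
\mathbb P[\neg\mathcal C^{(t)}\mid \mathcal D^{(t)},\mathcal E^{(t)},s]
=\frac{\mathbb P[\neg\mathcal C^{(t)}\cap\mathcal E^{(t)}\mid \mathcal D^{(t)},s]}{\mathbb P[\mathcal E^{(t)}\mid \mathcal D^{(t)},s]}
\le\frac{\mathbb P[\neg\mathcal C^{(t)}\mid \mathcal D^{(t)},s]}{\mathbb P[\mathcal E^{(t)}\mid \mathcal D^{(t)},s]}.
\]
The idea is to drop $\mathcal E^{(t)}$ from the numerator, since it only makes the numerator smaller, and then bound the numerator and the denominator separately.

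For the denominator, \Cref{lem:e} gives $\mathbb P[\mathcal E^{(t)}\mid\mathcal D^{(t)},\widehat Y^{(0)}]>1/4$, and $\widehat Y^{(0)}$ is a deterministic rescaling of the start state $s$. Its proof only used that, conditioned on everything before the designated $j$-update, $\Delta'^{(t)}_j$ is $\mathcal N(m,1)$. That argument holds whether or not the window is strict, so this factor contributes the $4$.

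For the numerator, \Cref{lem:pattern-independence} says strictness is independent of exhibition. Both events are functions of the Poisson clocks on $I_t$, and those clocks are independent of the state $s$ at time $t$. Hence
\[
\mathbb P[\neg\mathcal C^{(t)}\mid\mathcal D^{(t)},s]=\mathbb P[\neg\mathcal C^{(t)}]\le 1-\exp(-2Td)
\]
by \Cref{lem:strict}, because $P_{iiji}$ has $\ell=2$ distinct indices. Combining the two bounds gives $\eta\le 4(1-\exp(-2Td))$.

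The step I expect to need the most care is justifying that conditioning on the start state $s$ does not affect the clock events. This follows from the Markov property of the continuous-time construction: the clocks on $[t,t+T)$ are independent of $\mathcal F_t$. A second point to check is that \Cref{lem:e} remains valid conditionally on $s$ even when extra $i$-updates, or neighbor updates on the non-strict event, occur after the designated $j$-update. To handle this, I would define $\Delta'_j$ using the state at time $t+3T/4$ and note that no $j$-update occurs after the last designated $j$-update within the third quarter. The conditional law of $Y'^{(3)}_j$ is then exactly Gaussian with variance $1$ given the pre-update state, and the anti-concentration bound applies uniformly.
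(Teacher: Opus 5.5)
Your proposal is correct and is the paper's own argument. The paper also writes the conditional corruption probability via Bayes' rule as $\mathbb P[\mathcal E^{(t)}\mid\neg\mathcal C^{(t)},\mathcal D^{(t)},s]\,\mathbb P[\neg\mathcal C^{(t)}\mid\mathcal D^{(t)},s]/\mathbb P[\mathcal E^{(t)}\mid\mathcal D^{(t)},s]$, bounds the first factor by $1$ (your dropping of $\mathcal E^{(t)}$ from the numerator), bounds the second by $1-e^{-2Td}$ via \Cref{lem:strict,lem:pattern-independence} and the independence of future clocks from the start state, and bounds the denominator below by $1/4$ via \Cref{lem:e}. Your remark that \Cref{lem:e} does not need strictness, because $Y'^{(3)}_j$ is a fresh unit-variance draw given the state just before the last $j$-update, correctly justifies a point the paper uses only implicitly.
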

\begin{proof}
Let $s$ denote the start state of the interval. Since $P_{iiji}$ contains two
distinct indices, \Cref{lem:strict} gives
\[
\mathbb P[\neg\mathcal C^{(t)}]\le 1-e^{-2Td}.
\]
Also, by \Cref{lem:pattern-independence} and independence of the future clocks
from the start state,
\[
\mathbb P[\neg\mathcal C^{(t)}\mid \mathcal D^{(t)},s]\le 1-e^{-2Td}.
\]
Also \Cref{lem:e} gives
\[
\mathbb P[\mathcal E^{(t)}\mid \mathcal D^{(t)},s]>1/4.
\]
Therefore, for every start state $s$,
\begin{align*}
    \mathbb P[\neg\mathcal C^{(t)}\mid \mathcal D^{(t)},\mathcal E^{(t)},s]
    &=
    \frac{\mathbb P[\mathcal E^{(t)}\mid \neg\mathcal C^{(t)},\mathcal D^{(t)},s]\,
    \mathbb P[\neg\mathcal C^{(t)}\mid \mathcal D^{(t)},s]}
    {\mathbb P[\mathcal E^{(t)}\mid \mathcal D^{(t)},s]}
    \le 4(1-e^{-2Td}).
\end{align*}
Averaging over the start state yields the claim.
\end{proof}

We next show that we can collect
sufficiently many samples in which
$\mathcal D^{(t)}$ and $\mathcal E^{(t)}$ hold.
\begin{lemma}
\label{lem:chernoff2}
    For a Glauber dynamic of length $MT$
    (split into $M$ intervals of length $T$ as above),
    in at least $MT^4/4096$
    intervals do $\mathcal D^{(t)}$ and $\mathcal E^{(t)}$
    hold with probability at least
    $1-\exp(-MT^4/16384)$.
\end{lemma}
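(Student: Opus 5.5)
We follow the template of \Cref{lem:chernoff1}: lower-bound the per-interval success probability, then use a lower-tail concentration bound over the $M$ disjoint intervals. One extra wrinkle appears here. The event $\mathcal E^{(t)}$ depends on the state of the chain at the start of $I_t$, so the indicators $\mathbf 1\{\mathcal D^{(t)}\cap\mathcal E^{(t)}\}$ are not independent across intervals. We therefore replace Chernoff with a martingale argument, using \Cref{fact:azuma-hoeffding}.

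\textbf{Per-interval probability.} Let $\mathcal F_{t-1}$ be the $\sigma$-algebra generated by the trajectory up to the start of $I_t$. The Poisson clocks on $I_t$ are independent of $\mathcal F_{t-1}$. The pattern $P_{iiji}$ has length $4$ and $T\le 1/3$, so \Cref{lem:exhibit} gives
\[
\mathbb P[\mathcal D^{(t)}\mid\mathcal F_{t-1}]\ge \frac{T^4}{2\cdot 4^4}=\frac{T^4}{512}.
\]
By the Markov property, conditioning on $\mathcal F_{t-1}$ amounts to conditioning on the start state $\widehat Y^{(0)}$. \Cref{lem:e} then gives $\mathbb P[\mathcal E^{(t)}\mid\mathcal D^{(t)},\mathcal F_{t-1}]>1/4$. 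Multiplying,
\[
p_t:=\mathbb P[\mathcal D^{(t)}\cap\mathcal E^{(t)}\mid\mathcal F_{t-1}]\ge \frac{T^4}{2048}=:p
\]
almost surely.

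\textbf{Concentration.} Set $Z_t=\mathbf 1\{\mathcal D^{(t)}\cap\mathcal E^{(t)}\}$. Then $M_k=\sum_{t\le k}(Z_t-p_t)$ is a martingale with respect to $(\mathcal F_k)$, and its increments lie in intervals of length $1$. We need the lower tail
\[
\sum_t Z_t\le \frac{Mp}{2}=\frac{MT^4}{4096}.
\]
This event implies $\sum_t Z_t-\sum_t p_t\le -Mp/2$. Plain Azuma gives only $\exp(-Mp^2/2)$, which has the wrong dependence: $p^2\propto T^8$, whereas the target exponent is $MT^4/16384=Mp/8$. I therefore expect this step to be the main obstacle.

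The fix I would try is to mimic the Chernoff proof in the martingale setting via the exponential supermartingale. For $\lambda>0$,
\[
\mathbb E[e^{-\lambda Z_t}\mid\mathcal F_{t-1}]=1-p_t(1-e^{-\lambda})\le \exp\bigl(-p(1-e^{-\lambda})\bigr).
\]
Iterating the tower property gives
\[
\mathbb E\bigl[e^{-\lambda\sum_t Z_t}\bigr]\le \exp\bigl(-Mp(1-e^{-\lambda})\bigr).
\]
Markov's inequality then yields
\[
\mathbb P\Bigl[\sum_t Z_t\le \tfrac{Mp}{2}\Bigr]\le \exp\bigl(-Mp(1-e^{-\lambda}-\lambda/2)\bigr).
\]
Choosing $\lambda=\ln 2$ makes the exponent $Mp(1/2-\ln 2/2)\ge 0.15\,Mp\ge Mp/8$. (The optimal choice reproduces the standard Chernoff bound $e^{-Mp/8}$.) This gives failure probability at most $\exp(-MT^4/16384)$, as claimed.

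The only care needed is the justification that $p_t\ge p$ holds conditionally on the entire past. This follows because \Cref{lem:e} holds for every start state, and because the clocks in $I_t$ are fresh and independent of $\mathcal F_{t-1}$.
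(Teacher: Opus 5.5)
Your proposal is correct and is essentially the paper's proof: the same conditional bound $\mathbb E[\mathbf 1\{\mathcal D^{(t)}\cap\mathcal E^{(t)}\}\mid\mathcal F_{t-1}]\ge T^4/2048$ from \Cref{lem:exhibit} and \Cref{lem:e}, then the same exponential supermartingale with $\lambda=\log 2$ and Markov's inequality in place of Azuma, for exactly the $p$ versus $p^2$ reason you identify. One cosmetic correction to your parenthetical: $\lambda=\ln 2$ is already the maximizer of $1-e^{-\lambda}-\lambda/2$, giving exponent about $0.153\,Mp$, and the familiar $Mp/8$ is a relaxation of that optimum rather than what the optimal $\lambda$ produces.
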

\begin{proof}
    Let
    \[
        A_t:=\mathbf 1\{\mathcal D^{(t)}\wedge \mathcal E^{(t)}\},
    \]
    and let $\mathcal F_t$ be the sigma-field generated by the first $t$
    intervals. By \Cref{lem:exhibit} applied to the pattern $P_{iiji}$,
    \[
        \mathbb P[\mathcal D^{(t)}]\ge \frac{T^4}{512}.
    \]
    Together with \Cref{lem:e}, this yields
    \[
        \mathbb E[A_t\mid \mathcal F_{t-1}]
        \ge \frac{T^4}{2048}
        =:p_0.
    \]
    Fix \(s>0\). Since \(0\le A_t\le 1\),
    \begin{align*}
        \mathbb E[e^{-sA_t}\mid \mathcal F_{t-1}]
        &=
        1-(1-e^{-s})\mathbb E[A_t\mid \mathcal F_{t-1}]
        \le
        e^{-(1-e^{-s})p_0}.
    \end{align*}
    Therefore
    \[
        Z_r
        :=
        \exp\!\left(
            -s\sum_{t=1}^r A_t+(1-e^{-s})p_0r
        \right)
    \]
    is a supermartingale. Using Markov's inequality with \(s=\log 2\),
    \begin{align*}
        \mathbb P\!\left[\sum_{t=1}^M A_t<\frac{Mp_0}{2}\right]
        &\le
        \exp\!\left(
            \frac{\log 2}{2}Mp_0-\frac12 Mp_0
        \right)
        <
        e^{-Mp_0/8}.
    \end{align*}
    Since \(Mp_0/2=MT^4/4096\) and \(Mp_0/8=MT^4/16384\), the claim follows.
\end{proof}

\paragraph{Putting the estimator together.}
\begin{proof}[Proof of \Cref{lem:thetaij}]
    First isolate the first
    \[
        T_{\mathrm{diag}}
        :=
        4{,}000{,}000d\log(32/\delta)
    \]
    of the trajectory. Applying \Cref{lem:singlethetaii} with
    \(\varepsilon=1/10\) and failure probability \(\delta/4\) to coordinates
    \(i\) and \(j\) recovers approximations \(D_{ii}\) and \(D_{jj}\) of the
    diagonal up to multiplicative factor \(1\pm 1/10\), and hence all necessary
    readings of \(\widehat X_i\) and \(\widehat X_j\), with probability at least
    \(1-\delta/2\). We then focus on the remainder of the trajectory.

    Set
    \[
        \phi:=\frac\alpha{257},
        \qquad
        T:=\frac\phi d,
        \qquad
        MT^4:=\frac{128\log(8/\delta)}{\phi^2}.
    \]
    The previous lemma gives the uniform conditional corruption bound
    \begin{align*}
        \mathbb P[\neg\mathcal C^{(t)}\mid \mathcal D^{(t)},\mathcal E^{(t)},s]
        &\le 4(1-e^{-2Td})
        =4(1-e^{-2\phi})
        <8\phi
    \end{align*}
    for every realized start state $s$.
    Set $\eta:=8\phi<1/10$.

    Since
    \[
        \exp\!\left(-\frac{MT^4}{16384}\right)
        =
        \exp\!\left(-\frac{\log(8/\delta)}{128\phi^2}\right)
        <\frac{\delta}{4},
    \]
    \Cref{lem:chernoff2} implies that with probability $1-\delta/4$ the number of
    accepted intervals is at least
    \[
        \frac{MT^4}{4096}
        =\frac{\log(8/\delta)}{32\phi^2}
        =\frac{2\log(8/\delta)}{\eta^2}.
    \]

    Condition on this sample-count event, and enumerate the accepted intervals
    in chronological order as $t_1<\cdots<t_N$. For $\ell=1,\dots,N$, let
    \[
        R_\ell:=-\frac{\widehat\Delta_i^{(t_\ell)}}{\widehat\Delta_j^{(t_\ell)}}
        \qquad\text{and}\qquad
        \xi^{(\ell)}:=\mathbf 1_{\neg\mathcal C^{(t_\ell)}}.
    \]
    Let $\mathcal F_\ell$ be the sigma-field generated by the diagonal-estimation
    phase together with the trajectory up to the end of interval $I_{t_\ell}$.
    By averaging the uniform bound above over the skipped intervals before the
    next accepted one, we obtain
    \[
        \mathbb E[\xi^{(\ell)}\mid \mathcal F_{\ell-1}]\le \eta
        \qquad\text{for all }\ell.
    \]
    Moreover, on the event $\xi^{(\ell)}=0$, \Cref{lem:nomixing-ratio} shows
    that, conditional on the full accepted interval, the clean sample
    $R_\ell/2.62$ has Gaussian tails dominated by
    \[
        \mathcal N\!\left(
            -\frac{1}{2.62}\frac{c_j}{c_i}\Theta_{ij}',1
        \right).
    \]
    Averaging over the accepted-interval randomness preserves these one-sided
    tail bounds conditional on $\mathcal F_{\ell-1}$.
    Therefore \Cref{lem:advrobustmeanest} applies to the rescaled samples
    $R_\ell/2.62$, and with success probability $1-\delta/4$ their sample median
    \(\widehat\nu\) satisfies
    \[
        \left|\widehat\nu+\frac{1}{2.62}\frac{c_j}{c_i}\Theta_{ij}'\right|
        <5\eta.
    \]
    Set \(\mu':=-2.62\widehat\nu\). Then
    \[
        \left|\mu'-\frac{c_j}{c_i}\Theta_{ij}'\right|
        <5\cdot 2.62\cdot\eta
        <
        \frac{9\alpha}{22}.
    \]

    However,
    \begin{itemize}
        \item If $i\nsim j$, then $\Theta_{ij}'=0$,
        so $|\mu'|<9\alpha/22$.
        \item If $i\sim j$, then $|\Theta_{ij}'|\ge \alpha$, so
        \[
            \left|\frac{c_j}{c_i}\Theta_{ij}'\right|
            \ge \frac{1-1/10}{1+1/10}\,\alpha
            >\frac{9\alpha}{11},
        \]
        and therefore $|\mu'|>9\alpha/22$.
    \end{itemize}
    Thus thresholding at $9\alpha/22$ determines whether $i\sim j$.
    A union bound over the diagonal-estimation phase, the sample-count event,
    and the robust-median step gives success probability at least $1-\delta$.

    Finally,
    \begin{align*}
        T_{\mathrm{diag}}+MT
        &\le
        4{,}000{,}000d\log(32/\delta)
        +\frac{128 d^3\log(8/\delta)}{\phi^5}
        <\frac{128\cdot257^5\, d^3\log(16/\delta)}{\alpha^5},
    \end{align*}
    where the diagonal-estimation prefix is absorbed by the second term since
    $d\ge1$ and $\alpha<1$. \qedhere
\end{proof}

\begin{proof}[Proof of \Cref{thm:structure2}]
It suffices to apply \Cref{lem:thetaij} with error $2\delta/n^2$
for each pair $(i,j)$.
Since
\[
    \log\!\left(\frac{16}{2\delta/n^2}\right)
    =\log\!\left(\frac{8n^2}{\delta}\right)
    \le 2\log\!\left(\frac{4n}{\delta}\right),
\]
the required trajectory length is bounded by the display in
\Cref{thm:structure2}. Then, the probability of any error among the
$\binom n2$ candidate edges is less than $\delta$ by union bound.
\end{proof}

\begin{proof}[Proof of Corollary~\ref{coro:paramlearn}]
    We follow the proof of \Cref{lem:thetaij}, but estimate each diagonal entry
    up to multiplicative factor \(1\pm\varepsilon/20\) and replace \(\alpha\) by
    \(\alpha\varepsilon\) in the off-diagonal estimator. The resulting trajectory
    length is exactly the displayed bound. Report the diagonal estimates as
    \(\widehat\Theta_{ii}\), and report \(\widehat\Theta_{ij}=0\) whenever the
    structure-learning step outputs \(i\nsim j\).

    For each pair \((i,j)\) with \(i\sim j\), the proof of \Cref{lem:thetaij}
    produces an estimate \(\widehat\mu_{ij}\) for
    \(\frac{c_j}{c_i}\Theta_{ij}'\) such that
    \[
        \left\lvert
        \widehat\mu_{ij}-\frac{c_j}{c_i}\Theta_{ij}'
        \right\rvert
        <
        \frac{9\alpha\varepsilon}{22}
        \le
        \frac{9}{22}\varepsilon\lvert\Theta_{ij}'\rvert,
    \]
    since \(\lvert\Theta_{ij}'\rvert\ge\alpha\) on edges. Also, because
    \(c_i,c_j\in1\pm\varepsilon/20\),
    \[
        \left\lvert\frac{c_j}{c_i}-1\right\rvert
        \le
        \frac{2(\varepsilon/20)}{1-\varepsilon/20}
        \le
        \frac{2}{19}\varepsilon.
    \]
    Therefore
    \[
        \left\lvert\widehat\mu_{ij}-\Theta_{ij}'\right\rvert
        \le
        \left(\frac{9}{22}+\frac{2}{19}\right)\varepsilon\lvert\Theta_{ij}'\rvert
        =
        \frac{215}{418}\varepsilon\lvert\Theta_{ij}'\rvert.
    \]

    Set
    \[
        \widehat\Theta_{ij}:=\widehat\mu_{ij}\sqrt{\widehat\Theta_{ii}\widehat\Theta_{jj}}.
    \]
    Since the diagonal estimates are within a factor \(1\pm\varepsilon/20\), we
    have
    \[
        \sqrt{\widehat\Theta_{ii}\widehat\Theta_{jj}}
        =
        (1\pm\varepsilon/20)\sqrt{\Theta_{ii}\Theta_{jj}}.
    \]
    Hence
    \begin{align*}
        \left\lvert\widehat\Theta_{ij}-\Theta_{ij}\right\rvert
        &\le
        \left(
            \frac{215}{418}\left(1+\frac1{20}\right)+\frac1{20}
        \right)
        \varepsilon\lvert\Theta_{ij}\rvert
        <
        0.591\,\varepsilon\lvert\Theta_{ij}\rvert
        <
        \varepsilon\lvert\Theta_{ij}\rvert.
    \end{align*}
    This proves the claim.
\end{proof}

\begin{remark}[On large constants]
    The constants in this section preclude practicality.
    The main source is the following tradeoff.
    To separate the cases $\Theta_{ij}'=0$ and $|\Theta_{ij}'|\ge\alpha$, the
    robust-median error after rescaling must be $O(\alpha)$. Since the hidden
    update corruption rate is $O(dT)$ for small window length $T$, one must take
    $T=O(\alpha/d)$. On the other hand, the visible pattern $(i,i,j,i)$ appears
    with probability $\Theta(T^4)$, so collecting the required
    $O(\alpha^{-2}\log(1/\delta))$ accepted windows forces a total trajectory
    length of order
    \[
        \Theta\!\left(\frac{d^3\log(1/\delta)}{\alpha^5}\right)
    \]
    up to constants. The concrete choice $T=\alpha/(257d)$ is simply a
    conservative instantiation of this tradeoff.
\end{remark}

\section{Learning With Mixing}
\label{sec:mixed_ub}

In this section, we give a more sample-efficient structure-learning algorithm
when the mixing time is known. Unlike the mixing-free algorithm of
\Cref{sec:main-algorithm}, we may now work with the shorter ``$iji$'' pattern.
The key point is that on a clean stationary $iji$ window, the $j$-increment and
the later $i$-increment form an exact bivariate Gaussian pair whose covariance
is $-\Theta'_{ij}$.

Our main result in this section is as follows.
\begin{theorem}\label{thm:mixlearn}
    Let $0<\alpha,\delta<1$ and $n,d\in\mathbb N$.
    Suppose we are given a Glauber trajectory evolving according to an
    $(\alpha,d)$-sparse GGM with precision matrix $\Theta$, and that
    $t_{\mathrm{mix}}(\varepsilon)$ is known. If the trajectory length satisfies
    \[
        T\ge
        \frac{80{,}000\log(2n/\delta)}{\alpha^2}
        \left(
            t_{\mathrm{mix}}\!\left(\frac{\alpha}{4000}\right)
            +\frac{8{,}000{,}000\,d^2}{\alpha^2}
        \right),
    \]
    then there is a polynomial-time algorithm that correctly outputs whether
    $i\sim j$ for each $i$ and $j$ with probability $1-\delta$.
\end{theorem}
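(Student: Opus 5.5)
The plan is to reduce to the unit-diagonal case exactly as in \Cref{sec:main-algorithm}, then split the remaining trajectory into epochs, and run the robust median of \Cref{lem:advrobustmeanest} over one ``$iji$'' statistic per epoch. Each epoch first waits $t_{\mathrm{mix}}(\alpha/4000)$ and then searches forward for a short $iji$ occurrence.

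\textbf{Normalization and the target bound.} First apply \Cref{thm:normalization} with $\varepsilon=1/10$ to a prefix of length $O(d\log(n/\delta))$, which is absorbed into the stated bound. The rest of the trajectory is then a coordinatewise $(1\pm 1/10)$-approximation $\widehat X$ of the normalized trajectory $X'$ (\Cref{coro:coordwise}). Fix $i,j$ and set $\tau=c\alpha/d$ for a small constant $c$.

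\textbf{Epoch structure and why it gives the right length.} An epoch consists of a gap of length $t_{\mathrm{mix}}(\alpha/4000)$, after which we scan forward. We look for the first $i$-update that is followed within time $\tau$ by a $j$-update, which in turn is followed within time $\tau$ by an $i$-update. These three update times are the first such occurrences, and there are no other $i/j$ updates in between.

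\begin{itemize}
\item \emph{Cost per epoch.} The rate of such an occurrence is order $\tau^2$ per unit time, so the expected scanning time is $O(d^2/\alpha^2)$, with exponential tails. This is the source of the $d^2/\alpha^2$ term.
\item \emph{Number of epochs.} We need $O(\log(n/\delta)/\alpha^2)$ epochs. A supermartingale count as in \Cref{lem:chernoff2} shows that the total length suffices with probability $1-\delta/2n^2$.
\item \emph{Stationarity is preserved.} The selected window is determined by the Poisson clocks, which are independent of the spatial noise. By \Cref{lem:mixingpersists}, applied to every update during the scan, the state at the start of the window is $\alpha/4000$-mixed conditional on the clock history.
\end{itemize}

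\textbf{Per-epoch statistic.} Call a window \emph{clean} if no vertex of $(N(i)\cup N(j))\setminus\{i,j\}$ updates in it. The window has length at most $2\tau$, so by \Cref{lem:poissonbounds} it is unclean with probability $O(d\tau)=O(c\alpha)$.

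On a clean window started exactly at stationarity, label the states $X'^{(0)}\to X'^{(1)}\to X'^{(2)}\to X'^{(3)}$. We then verify, using \Cref{lem:stationary} and the block formulas, that $(\Delta'_j,\Delta'_i)=(X'^{(2)}_j-X'^{(0)}_j,\;X'^{(3)}_i-X'^{(1)}_i)$ is centered Gaussian with covariance $\begin{pmatrix}2&-\Theta'_{ij}\\-\Theta'_{ij}&2\end{pmatrix}$. Consequently, conditional on $\Delta'_j$,
\[
\Delta'_i \;\sim\; -\tfrac{\Theta'_{ij}}{2}\,\Delta'_j+\mathcal N\!\left(0,\;2-\tfrac{\Theta'^2_{ij}}{2}\right).
\]
Passing to $\widehat X$ as in \Cref{lem:hatsadness}, and conditioning on $|\widehat\Delta_j|\ge 0.6$, the ratio $-2\widehat\Delta_i/\widehat\Delta_j$ is $\tfrac{c_j}{c_i}\Theta'_{ij}$ plus a Gaussian of variance bounded by an absolute constant. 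This conditioning event has probability at least a constant, by the argument of \Cref{lem:e}.

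\textbf{Corruption and aggregation.} The coupling of \Cref{cor:coupling} replaces the nearly stationary start by an exactly stationary one, except on an event of probability $\alpha/4000$. That event, together with the unclean event, is treated as corruption. After dividing by the constant lower bound on the probability of the large-denominator event, the conditional corruption rate is still $\eta=O(\alpha)$ given $\mathcal F_{\ell-1}$, the sigma-field of all previous epochs. \Cref{lem:advrobustmeanest}, applied to the suitably rescaled ratios, then gives additive error $O(\eta)$, which is below $\alpha/3$ for the stated constants. Thresholding as in the proof of \Cref{lem:thetaij} decides whether $i\sim j$, and a union bound over pairs completes the argument.

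\textbf{Expected main obstacle.} The hardest step is justifying that, conditional on $\mathcal F_{\ell-1}$, the clean sample has the displayed tails. Two facts must be checked. First, the stopping-time selection of the window uses only clock information, so it does not bias the Gaussian noises. Second, the joint covariance computation really requires stationarity of $X'^{(0)}$ and not merely approximate mixing. The TV slack is charged to $\eta$; I would handle this through an explicit coupling of the window's start state with a stationary draw, done independently of the clocks.
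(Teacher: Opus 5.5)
Your proposal is correct and follows essentially the same argument as the paper. The shared steps are a mixing gap of length $t_{\mathrm{mix}}(\alpha/4000)$ before each epoch, and the $iji$ pattern with the exact stationary covariance (variances $2$, off-diagonal $-\Theta'_{ij}$). Both also use the regression ratio $-2\widehat\Delta_i/\widehat\Delta_j$ conditioned on a large denominator. Corruption is counted the same way, as non-strictness plus TV slack inflated by $1/\mathbb P[\mathcal E^{(t)}]$, and is handled by the martingale median of \Cref{lem:advrobustmeanest}.

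The differences are minor, and the first two change only constants. You normalize to $1\pm 1/10$ rather than $1\pm\alpha/10$, and you threshold the denominator at $0.6$ rather than $1.1$.

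You also select the first consecutive $i,j,i$ triple with gaps at most $\tau$. This gives random epoch lengths, so you need an extra concentration bound on the total scan time. The paper instead takes the first of $N=50/T_0^3$ fixed candidate intervals split into thirds, so its epoch boundaries are deterministic.

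Finally, your phrase ``conditional on the clock history'' must mean only the post-gap clocks. Conditioning on the clocks inside the mixing gap would void the mixing guarantee.
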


Again, we fix $i$ and $j$, and evaluate the existence of the edge $i\sim j$
individually.
\begin{theorem}\label{thm:mixij}
    For fixed $i$ and $j$, given a Glauber trajectory whose length satisfies
    \[
        T\ge
        \frac{40{,}000\log(8/\delta)}{\alpha^2}
        \left(
            t_{\mathrm{mix}}\!\left(\frac{\alpha}{4000}\right)
            +\frac{8{,}000{,}000\,d^2}{\alpha^2}
        \right),
    \]
    we may determine whether $i\sim j$ with probability $1-\delta$.
\end{theorem}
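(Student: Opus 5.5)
Fix $i,j$. First normalize exactly as in the proof of \Cref{lem:thetaij}: spend a prefix of length $O(d\log(1/\delta))$ running \Cref{lem:singlethetaii} with $\varepsilon=1/10$ on coordinates $i$ and $j$. This gives the rescaled observable trajectory $\widehat X$ with $\widehat X_k=X'_k/c_k$ and $c_k\in 1\pm 1/10$ (\Cref{coro:coordwise}).

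Next partition the remaining trajectory into epochs. Each epoch is a waiting period of length $t_{\mathrm{mix}}(\alpha/4000)$ followed by a test window of length $T_0=\phi/d$, where $\phi=c\alpha$ for a small constant $c$. By definition of the mixing time, the state at the start of each window is $(\alpha/4000)$-close in TV to $\mathcal N(0,\Sigma)$, conditionally on everything before the waiting period. This holds because the mixing time is a worst case over starting states.

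In a window we look for the visible pattern $(i,j,i)$ and work with $\Delta'_j=Y'^{(2)}_j-Y'^{(0)}_j$ and $\Delta'_i=Y'^{(3)}_i-Y'^{(1)}_i$.

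\textbf{Ideal clean computation.} Consider the ideal situation: the start state is exactly stationary and the window is strict. By \Cref{lem:stationary}, every intermediate state is stationary.
\begin{itemize}
\item $Y'^{(1)}_i=m_i(Y'^{(0)})+\varepsilon_1$, where $m_i$ is the conditional mean of $i$.
\item $Y'^{(3)}_i-Y'^{(1)}_i=-\Theta'_{ij}\Delta'_j+\varepsilon_3-\varepsilon_1$.
\end{itemize}
I would compute the covariance of $(\Delta'_j,\Delta'_i)$ directly. Each increment is a difference of a coordinate before and after a Glauber resample of that coordinate under stationarity, so each has variance $2$. The claimed cross-covariance is $-\Theta'_{ij}$. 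This uses $\mathrm{Var}(Y_k\mid Y_{-k})=1$, the identity $\mathrm{Cov}(Y_j-m_j,\cdot)$, and the fact that $\varepsilon_1$ enters $\Delta'_j$ through $m_j$ with coefficient $-\Theta'_{ji}$.

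Thus the pair is exactly bivariate Gaussian with covariance $\begin{pmatrix}2&-\Theta'_{ij}\\-\Theta'_{ij}&2\end{pmatrix}$. Consequently $\Delta'_i\mid\Delta'_j\sim -\tfrac{\Theta'_{ij}}{2}\Delta'_j+\mathcal N(0,2-\Theta'^2_{ij}/2)$. Conditioning on $|\widehat\Delta_j|\ge 0.6$ and rescaling by the $c$'s as in \Cref{lem:hatsadness,lem:nomixing-ratio}, the statistic $-2\widehat\Delta_i/\widehat\Delta_j$ is $\frac{c_j}{c_i}\Theta'_{ij}$ plus a Gaussian of standard deviation bounded by an absolute constant.

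The probability that $|\widehat\Delta_j|\ge 0.6$ is a constant, since $\Delta'_j$ has variance $2$.

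\textbf{Treating bad epochs as corruption.} An epoch is corrupted if either the window is not strict, or the maximal coupling between the start state and a stationary start fails.
\begin{itemize}
\item The strictness failure has probability $\le 1-e^{-2T_0d}\le 2\phi$ given the visible events, by \Cref{lem:strict,lem:pattern-independence}. Dividing by the constant probability of $\mathcal E$ costs only a constant factor.
\item For the coupling failure, use \Cref{fact:truecoupling}: couple the true start state with a stationary one, so that with probability $\ge 1-\alpha/4000$ the entire window coincides with the ideal run. The visible pattern is independent of the start state, so conditioning on it does not inflate this probability. Conditioning on $\mathcal E$ inflates it by a constant factor only.
\end{itemize}
This gives $\mathbb E[\xi^{(\ell)}\mid\mathcal F_{\ell-1}]\le\eta=O(\alpha)$ with small constant. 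On clean epochs the sample has the ideal conditional law, so it has the required Gaussian tails.

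\textbf{Main obstacle.} Making this coupling argument rigorous in the filtration framework of \Cref{lem:advrobustmeanest} is the step I expect to be hardest. The clean law must be the ideal one conditional on $\mathcal F_{\ell-1}$. I would define $\xi^{(\ell)}$ via an auxiliary coupled stationary copy built at the start of each window, so that on $\xi=0$ the observed window equals the ideal one.

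\textbf{Counting and final bound.} Each epoch exhibits $(i,j,i)$ and satisfies $\mathcal E$ with probability $\Omega(T_0^3)$ by \Cref{lem:exhibit}. This costs more, so instead I would take test windows of length $\Theta(1)$-scaled… (tradeoff). To reach the stated bound $t_{\mathrm{mix}}+d^2/\alpha^2$ per sample, I would accept any occurrence of $i,j,i$ in a window of length $T_0$ rather than fixed thirds. Such an occurrence has probability $\Omega(T_0^2)=\Omega(\alpha^2/d^2)$ per window. I would then repeat windows of length $T_0$ after a mixing gap, waiting only until success. The expected cost per accepted sample is then $t_{\mathrm{mix}}+O(d^2/\alpha^2)$, since a failed window need not reset mixing. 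This works because \Cref{lem:mixingpersists} keeps the chain mixed through subsequent updates.

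A supermartingale Chernoff bound as in \Cref{lem:chernoff2} yields $N=2\log(8/\delta)/\eta^2=O(\log(1/\delta)/\alpha^2)$ accepted samples. \Cref{lem:advrobustmeanest} then gives error $5\eta\cdot O(1)<\alpha/3$. Thresholding at $\alpha/2$ finishes the argument, as in the proof of \Cref{lem:thetaij}.
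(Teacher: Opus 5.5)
Your overall architecture matches the paper's: normalization, epochs made of a mixing gap followed by short candidate windows, the fixed-thirds $(i,j,i)$ statistic with the exact bivariate Gaussian computation, conditioning on a large $j$-increment, corruption from non-strictness plus coupling failure, and the robust median. The gap is in the counting paragraph, which is exactly where the $d^2/\alpha^2$ term comes from. Seeing $i,j,i$ in a window of length $T_0$ needs at least two rings of the $i$-clock and one of the $j$-clock. That has probability $\Theta(T_0^3)$ whether or not you insist on fixed thirds, so your claimed $\Omega(T_0^2)$ is false. Even granting it, $T_0^{-2}$ windows of length $T_0$ cost time $T_0^{-1}=\Theta(d/\alpha)$, not $\Theta(d^2/\alpha^2)$. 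So your per-sample cost is right, but only because the two errors cancel.

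The correct accounting keeps the fixed-thirds pattern your clean computation is built on. Switching to arbitrary occurrences gains nothing and would force you to redo strictness and the clean computation. By \Cref{lem:exhibit}, each window exhibits $(i,j,i)$ with probability at least $T_0^3/54$. So $N=\Theta(T_0^{-3})$ consecutive windows after the mixing gap (the paper uses $N=50/T_0^3$) contain one with probability above $3/5$, at time cost $NT_0=\Theta(T_0^{-2})=\Theta(d^2/\alpha^2)$ per epoch. Also cap each epoch at $N$ windows rather than ``waiting until success.'' An expected-cost bound alone does not yield enough samples with high probability in a trajectory of fixed length. With the cap, each epoch records a sample with conditional probability bounded below given the past, and Azuma--Hoeffding gives $\Omega(M)$ samples as in \Cref{lem:epoch}.

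Three smaller points. First, the lower bound on $\mathbb P[\mathcal E^{(t)}]$ controls the inflation of the corruption probability. It therefore must hold given the past, for every start state, and whether or not the window is strict. The remark ``$\Delta'_j$ has variance $2$'' covers only clean stationary windows. Use instead that, given the state just before the last $j$-update of the middle third, $\Delta'_j$ is $\mathcal N(m,1)$, as in \Cref{lem:mixed-denominator}.

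Second, with $c_i,c_j\in 1\pm 1/10$ an edge only guarantees $\lvert\frac{c_j}{c_i}\Theta'_{ij}\rvert\ge 9\alpha/11$. Since $9\alpha/11-\alpha/3<\alpha/2$, error $\alpha/3$ with threshold $\alpha/2$ does not separate the cases; threshold at, e.g., $9\alpha/22$.

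Third, on the obstacle you flag: conditioning on success of the maximal coupling distorts the law of the stationary copy, so on $\xi^{(\ell)}=0$ the sample does not literally have the ideal law. This is harmless. The proof of \Cref{lem:advrobustmeanest} only uses the joint probability $\mathbb P[x^{(\ell)}>\mu+t,\ \xi^{(\ell)}=0\mid\mathcal F_{\ell-1}]$. On $\xi^{(\ell)}=0$ the observed and ideal runs coincide, so this probability is at most the ideal tail plus an additive $O(\alpha/4000)$ from conditioning on the sample being recorded, which you can charge to $\eta$. The paper treats this step at the same level of detail as your sketch.
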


\Cref{ss:spurproperties} analyzes the statistic in the normalized model and
then transfers it to the observable trajectory. \Cref{ss:edgedetection} uses
these ingredients together with an epoch decomposition and mixing to build the
edge test and complete the proofs of \Cref{thm:mixij,thm:mixlearn}.

\subsection{Properties of the statistic}
\label{ss:spurproperties}

Fix $i$ and $j$.
Let $T_0\le 1/3$, let $I_t$ denote the time interval $[t,t+T_0)$,
and let \(P_{iji}:=(i,j,i)\).
Let $\mathcal D^{(t)}$ denote the observable event that $I_t$ exhibits
$P_{iji}$, and let $\mathcal C^{(t)}$ denote the event that $I_t$ is strict
with respect to $P_{iji}$. By \Cref{lem:pattern-independence},
$\mathcal D^{(t)}$ and $\mathcal C^{(t)}$ are independent.

Normalize the Glauber trajectory by \Cref{thm:normalization}, and let
$\widehat X$ consist of $(1\pm\alpha/10)$ coordinate-wise approximations of
$X'$. We first analyze the normalized Glauber dynamic $X'$.

\paragraph{The ideal normalized statistic.}

Conditioned on $\mathcal C^{(t)}$ and $\mathcal D^{(t)}$, let $Y'^{(k)}$ denote
the value of $X'$ at time $t+kT_0/3$, for $k=0,1,2,3$. Denote
\[
\Delta'^{(t)}_j:=Y'^{(2)}_j-Y'^{(0)}_j,
\qquad
\Delta'^{(t)}_i:=Y'^{(3)}_i-Y'^{(1)}_i.
\]

\begin{lemma}\label{lem:iji-joint}
    Let $\theta:=\Theta'_{ij}$.
    If the interval begins from stationarity, so that
    $Y'^{(0)}\sim\mathcal{N}(0,\Sigma')$, then conditioned on
    $\mathcal C^{(t)}$ and $\mathcal D^{(t)}$ we have
    \[
        \begin{pmatrix}
            \Delta'^{(t)}_j\\[1mm]
            \Delta'^{(t)}_i
        \end{pmatrix}
        \sim
        \mathcal{N}\!\left(
            0,\;
            \begin{pmatrix}
                2 & -\theta\\
                -\theta & 2
            \end{pmatrix}
        \right).
    \]
\end{lemma}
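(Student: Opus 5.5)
The plan is to write both increments as explicit linear functions of jointly Gaussian quantities, and then compute the three second moments directly. Stationarity and strictness are what make those moments clean.

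\emph{Setup.} First I would observe that the clock events $\mathcal C^{(t)},\mathcal D^{(t)}$ are functions of the Poisson clocks alone. The clocks are independent of the Gaussian noises and of $Y'^{(0)}$. So conditioning on them only fixes the sequence of coordinates updated, and every update is still a genuine Glauber update. By \Cref{lem:stationary}, applied once per update, the state stays distributed as $\mathcal N(0,\Sigma')$ throughout the window. In particular $Y'^{(1)}\sim\mathcal N(0,\Sigma')$.

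As in the proof of \Cref{iijiStatisticFormula}, let $\varepsilon_1,\varepsilon_2,\varepsilon_3\sim\mathcal N(0,1)$ be the fresh noises of the last designated updates of $i$ (first third), $j$ (second third) and $i$ (third third). By strictness:
\begin{itemize}
\item No neighbor of $i$ or $j$ other than the designated coordinate moves in each third, so the conditional means entering these updates change only through $i$ and $j$.
\item Hence $Y'^{(1)}_i = m_i + \varepsilon_1$, where $m_i := -\sum_{k\ne i}\Theta'_{ik}Y'^{(0)}_k$.
\item Also $Y'^{(2)}_j = -\sum_{k\ne j}\Theta'_{jk}Y'^{(1)}_k+\varepsilon_2$.
\item Finally $Y'^{(3)}_i = m_i - \theta\,(Y'^{(2)}_j - Y'^{(0)}_j) + \varepsilon_3$.
\end{itemize}
Subtracting gives
\[
\Delta'^{(t)}_i = -\theta\,\Delta'^{(t)}_j + \varepsilon_3-\varepsilon_1 .
\]
Next I would write $\Delta'^{(t)}_j = \varepsilon_2 - r_j$, where
\[
r_j := Y'^{(1)}_j+\sum_{k\ne j}\Theta'_{jk}Y'^{(1)}_k .
\]
This uses that $j$ does not move in the first third, so $Y'^{(1)}_j=Y'^{(0)}_j$. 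If $\theta=0$ and $j$ happens to update in the first third, I would absorb that case by the same computation, using $Y'^{(1)}$ as the reference state.

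\emph{Moments.} Everything is a linear function of $(Y'^{(0)},\varepsilon_1,\varepsilon_2,\varepsilon_3)$ together with possible intermediate off-pattern noises, all jointly Gaussian. So the pair is centered Gaussian, and it suffices to compute its covariance.
\begin{itemize}
\item Since $Y'^{(1)}$ is stationary with unit diagonal precision, the residual $r_j$ is $\mathcal N(0,1)$ and independent of $Y'^{(1)}_{-j}$. Also $\varepsilon_2$ is independent of $Y'^{(1)}$. Hence $\operatorname{Var}(\Delta'^{(t)}_j)=2$.
\item The crucial cross term: $r_j$ contains $\theta Y'^{(1)}_i=\theta(m_i+\varepsilon_1)$, and $\varepsilon_1$ is independent of $Y'^{(0)}$. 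Therefore $\operatorname{Cov}(r_j,\varepsilon_1)=\theta$, and so $\operatorname{Cov}(\Delta'^{(t)}_j,\varepsilon_1)=-\theta$.
\item Using $\varepsilon_3$ independent of everything earlier,
\[
\operatorname{Cov}(\Delta'^{(t)}_j,\Delta'^{(t)}_i) = -2\theta+\theta = -\theta .
\]
\item Similarly,
\[
\operatorname{Var}(\Delta'^{(t)}_i)=2\theta^2+2+2\theta\operatorname{Cov}(\Delta'^{(t)}_j,\varepsilon_1)=2 .
\]
\end{itemize}

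\emph{Main obstacle.} The main point needing care is the dependence between $\varepsilon_1$ and the $j$-update, which is exactly what breaks the naive heuristic. The statement needs the covariance with $\varepsilon_1$ computed exactly, not discarded. For this I would use the unit-diagonal conditional-residual identity at the stationary state $Y'^{(1)}$. The other step needing justification is that conditioning on the clock events does not distort stationarity. That follows from the independence of the clocks and the noises, together with \Cref{lem:stationary} applied to every intermediate update.
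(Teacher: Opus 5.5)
Your proof is correct and follows essentially the paper's route: you use the same linear representation $\Delta'^{(t)}_i=-\theta\Delta'^{(t)}_j+\varepsilon_3-\varepsilon_1$ and the same direct covariance computation, with the key term $\operatorname{Cov}(\Delta'^{(t)}_j,\varepsilon_1)=-\theta$; the only cosmetic difference is that you obtain $\operatorname{Var}(\Delta'^{(t)}_j)=2$ from stationarity of $Y'^{(1)}$ via $r_j=(\Theta'Y'^{(1)})_j$, whereas the paper expands in $R=\Theta'Y'^{(0)}$ and adds $(1-\theta^2)+(1+\theta^2)$. One small correction: in the degenerate case $\theta=0$ with $j$ updating in the first third, the right reference state is $Y'^{(0)}$, not $Y'^{(1)}$, i.e.\ $\Delta'^{(t)}_j=\varepsilon_2-(\Theta'Y'^{(0)})_j$ because the neighbors of $j$ are frozen in the first third, and the paper's expansion in $R$ covers this case without change.
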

\begin{proof}
    Let $R:=\Theta'Y'^{(0)}$.
    Since $Y'^{(0)}\sim\mathcal{N}(0,\Sigma')$ and
    $\Sigma'=(\Theta')^{-1}$, we have
        $R\sim\mathcal{N}(0,\Theta')$, with
    \begin{align*}
        \operatorname{Var}(R_i)=\operatorname{Var}(R_j)&=1\\
        \text{and}\quad\operatorname{Cov}(R_i,R_j)&=\theta.
    \end{align*}

    Let $\varepsilon_1,\varepsilon_2,\varepsilon_3\sim\mathcal{N}(0,1)$ be the
    fresh Gaussian noises corresponding to the last update of the designated
    coordinate in each of the three thirds. Because $I_t$ exhibits $P_{iji}$
    and is strict with respect to it, the endpoint of each prescribed coordinate
    is the fresh draw from the last update in that third. Hence
    \[
        Y'^{(1)}_i=Y'^{(0)}_i-R_i+\varepsilon_1.
    \]

    Before the middle $j$-update, the only neighbor of $j$ that may have
    changed is $i$, so
    \[
        \Delta'^{(t)}_j
        =
        Y'^{(2)}_j-Y'^{(0)}_j
        =
        -R_j+\theta R_i-\theta\varepsilon_1+\varepsilon_2.
    \]

    Similarly, before the final $i$-update, the only neighbor of $i$ that may
    have changed is $j$, so
    \[
        Y'^{(3)}_i
        =
        Y'^{(0)}_i-R_i-\theta\Delta'^{(t)}_j+\varepsilon_3,
    \]
    and therefore
    \[
        \Delta'^{(t)}_i
        =
        -\theta\Delta'^{(t)}_j+\varepsilon_3-\varepsilon_1.
    \]

    Everything is jointly Gaussian, so it remains only to compute the covariance
    matrix. First,
    \begin{align*}
        \operatorname{Var}(\Delta'^{(t)}_j)
        &=
        \operatorname{Var}(-R_j+\theta R_i)
        +\operatorname{Var}(-\theta\varepsilon_1+\varepsilon_2)\\
        &=
        (1-\theta^2)+(1+\theta^2)=2.
    \end{align*}
    Next,
    \begin{align*}
        \operatorname{Cov}(\Delta'^{(t)}_i,\Delta'^{(t)}_j)
        &=
        -\theta\operatorname{Var}(\Delta'^{(t)}_j)
        +\operatorname{Cov}(\varepsilon_3-\varepsilon_1,\Delta'^{(t)}_j)\\
        &=-2\theta+\theta=-\theta.
    \end{align*}
    Finally,
    \begin{align*}
        \operatorname{Var}(\Delta'^{(t)}_i)
        &=
        \theta^2\operatorname{Var}(\Delta'^{(t)}_j)
        +\operatorname{Var}(\varepsilon_3-\varepsilon_1)
        +2\operatorname{Cov}(-\theta\Delta'^{(t)}_j,\varepsilon_3-\varepsilon_1)\\
        &=-2\theta^2+2+2\theta^2
        =2.
    \end{align*}
    This proves the claim.
\end{proof}

\begin{corollary}\label{coro:iji-regression}
    On the same clean stationary interval,
    \[
        \Delta'^{(t)}_i
        =
        -\frac{\Theta'_{ij}}{2}\Delta'^{(t)}_j+\zeta^{(t)},
    \]
    where
    \[
        \zeta^{(t)}
        \sim
        \mathcal{N}\!\left(0,\,2-\frac{(\Theta'_{ij})^2}{2}\right)
    \]
    and $\zeta^{(t)}$ is independent of $\Delta'^{(t)}_j$.
\end{corollary}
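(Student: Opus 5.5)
This follows from \Cref{lem:iji-joint} by Gaussian regression. By that lemma, on a clean stationary interval the pair $(\Delta'^{(t)}_j,\Delta'^{(t)}_i)$ is jointly centered Gaussian with covariance
\[
\begin{pmatrix} 2 & -\theta\\ -\theta & 2\end{pmatrix},\qquad \theta=\Theta'_{ij}.
\]
We define the residual $\zeta^{(t)}:=\Delta'^{(t)}_i+\frac{\theta}{2}\Delta'^{(t)}_j$. The coefficient $-\theta/2$ is the population regression coefficient $\operatorname{Cov}(\Delta'_i,\Delta'_j)/\operatorname{Var}(\Delta'_j)$.

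The steps are as follows.

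\textbf{Step 1: joint Gaussianity.} The vector $(\Delta'^{(t)}_j,\zeta^{(t)})$ is a linear image of a jointly Gaussian vector, so it is itself jointly Gaussian and centered.

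\textbf{Step 2: uncorrelatedness.} We have
\[
\operatorname{Cov}(\zeta^{(t)},\Delta'^{(t)}_j)=-\theta+\tfrac{\theta}{2}\cdot 2=0.
\]
Since the pair is jointly Gaussian, zero covariance gives independence of $\zeta^{(t)}$ and $\Delta'^{(t)}_j$.

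\textbf{Step 3: variance of the residual.} Expanding,
\[
\operatorname{Var}(\zeta^{(t)})=2+\tfrac{\theta^2}{4}\cdot 2+2\cdot\tfrac{\theta}{2}\cdot(-\theta)=2-\tfrac{\theta^2}{2}.
\]
This matches the claimed law $\mathcal N\bigl(0,2-(\Theta'_{ij})^2/2\bigr)$, and rearranging the definition of $\zeta^{(t)}$ gives the displayed identity.

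The only point needing care is that everything holds conditionally on $\mathcal C^{(t)}$ and $\mathcal D^{(t)}$. This is already built into \Cref{lem:iji-joint}, since those events depend only on the clocks, which are independent of $Y'^{(0)}$ and of the fresh noises. No step is a genuine obstacle.
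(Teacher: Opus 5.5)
Your proposal is correct and takes the same route as the paper, which simply invokes the Gaussian regression formula from \Cref{lem:iji-joint} with coefficient $\operatorname{Cov}(\Delta'^{(t)}_i,\Delta'^{(t)}_j)/\operatorname{Var}(\Delta'^{(t)}_j)=-\Theta'_{ij}/2$. You write out the details the paper leaves implicit: the residual is uncorrelated with $\Delta'^{(t)}_j$, hence independent of it by joint Gaussianity, and its variance is $2-(\Theta'_{ij})^2/2$.
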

\begin{proof}
    This is the Gaussian regression formula from \Cref{lem:iji-joint}, since
    \[
        \frac{\operatorname{Cov}(\Delta'^{(t)}_i,\Delta'^{(t)}_j)}
        {\operatorname{Var}(\Delta'^{(t)}_j)}
        =
        -\frac{\Theta'_{ij}}{2}.\qedhere
    \]
\end{proof}

\paragraph{Passing to the observable statistic.}

Let $\widehat Y^{(k)}$ denote the value of $\widehat X$ at time $t+kT_0/3$, for
$k=0,1,2,3$. Denote
\[
\widehat\Delta^{(t)}_j:=\widehat Y^{(2)}_j-\widehat Y^{(0)}_j,
\qquad
\widehat\Delta^{(t)}_i:=\widehat Y^{(3)}_i-\widehat Y^{(1)}_i.
\]

By \Cref{coro:coordwise}, there exist constants $c_i,c_j\in1\pm\alpha/10$ such
that
\[
    \widehat\Delta^{(t)}_j=\frac1{c_j}\Delta'^{(t)}_j
    \qquad\text{and}\qquad
    \widehat\Delta^{(t)}_i=\frac1{c_i}\Delta'^{(t)}_i.
\]
Hence \Cref{coro:iji-regression} yields
\[
    \widehat\Delta^{(t)}_i
    =
    -\frac{c_j}{2c_i}\Theta'_{ij}\,\widehat\Delta^{(t)}_j+\frac1{c_i}\zeta^{(t)},
\]
where $\zeta^{(t)}$ is independent of $\widehat\Delta^{(t)}_j$ and
\[
    \zeta^{(t)}
    \sim
    \mathcal{N}\!\left(0,\,2-\frac{(\Theta'_{ij})^2}{2}\right).
\]

\paragraph{Conditioning on a large denominator.}

Let $\mathcal E^{(t)}$ denote the event
\[
\mathcal E^{(t)}:=\{\lvert\widehat\Delta^{(t)}_j\rvert\ge 1.1\}.
\]

\begin{lemma}\label{lem:mixed-denominator}
    We have
    \[
        \mathbb{P}[\mathcal E^{(t)}\mid \mathcal D^{(t)},\widehat Y^{(0)}]
        >\frac29.
    \]
\end{lemma}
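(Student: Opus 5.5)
The plan is to follow the proof of \Cref{lem:e} closely, with one change. The threshold $1.1$ is larger here, so the one-sided Gaussian tail bound used there is too weak. Instead I will use the \emph{two-sided} tail of a unit-variance Gaussian, together with the fact that the two-sided tail is smallest when the mean is zero.

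\textbf{Step 1: reduce to the normalized increment.} By \Cref{coro:coordwise}, $\widehat\Delta^{(t)}_j=\Delta'^{(t)}_j/c_j$ with $c_j\in 1\pm\alpha/10$. Since $\alpha<1$, we have $c_j<1.1$. Hence the event $\mathcal E^{(t)}$ is implied by $\lvert\Delta'^{(t)}_j\rvert\ge 1.1\cdot 1.1=1.21$. It therefore suffices to show
\[
\mathbb P\bigl[\lvert\Delta'^{(t)}_j\rvert\ge 1.21 \,\big|\, \mathcal D^{(t)},\widehat Y^{(0)}\bigr]>2/9.
\]

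\textbf{Step 2: condition on the state before the last $j$-update.} On $\mathcal D^{(t)}$, coordinate $j$ updates at least once in the middle third. Let $Z$ be the state of $X'$ just before the last such $j$-update.

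After this update, $j$ is not updated again before time $t+2T_0/3$, so $Y'^{(2)}_j$ is exactly that fresh draw. By \Cref{def:cont-glauber-dynamics} applied to the normalized chain (\Cref{lem:thetaprime}), $Y'^{(2)}_j\mid Z\sim\mathcal N(\mu(Z),1)$.

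The fresh noise is independent of $Z$, of the clock process, and of $Y'^{(0)}$; the latter is determined by $\widehat Y^{(0)}$. Therefore
\[
\Delta'^{(t)}_j \,\big|\, Z,\mathcal D^{(t)},\widehat Y^{(0)}\;\sim\;\mathcal N(m,1)
\]
for some $m$ depending on $Z$ and $Y'^{(0)}_j$. No strictness event is conditioned on here, so hidden neighbor updates do not matter.

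\textbf{Step 3: the Gaussian estimate.} For any $m$,
\[
\mathbb P_{x\sim\mathcal N(m,1)}[\lvert x\rvert\ge a]\;\ge\;\mathbb P_{x\sim\mathcal N(0,1)}[\lvert x\rvert\ge a].
\]
This holds because the probability of the symmetric interval $(-a,a)$ under $\mathcal N(m,1)$ is maximized at $m=0$, by symmetry and unimodality of the Gaussian density.

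With $a=1.21$, the right-hand side is $2(1-\Phi(1.21))\approx 2\cdot 0.1131=0.2262>2/9\approx0.2222$. Averaging over $Z$ then gives the claim.

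\textbf{Main obstacle.} The only delicate point is numerical. The margin between $0.2262$ and $2/9$ is thin, and a one-sided bound would give only about $0.113$. So the proof must use the symmetric two-sided comparison, and it must use $c_j\le 1+\alpha/10<1.1$ rather than any looser bound on $c_j$.
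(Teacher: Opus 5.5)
Your proposal is correct and follows essentially the same route as the paper. The paper likewise reduces to $\lvert\Delta'^{(t)}_j\rvert\ge 1.21$ via $c_j<1.1$ and conditions on the state just before the last middle-third $j$-update to get an $\mathcal N(m,1)$ law. It then shows that the two-sided tail is minimized at $m=0$ (via $f'(m)=\varphi(1.21+m)-\varphi(1.21-m)\le 0$ where you cite symmetry and unimodality), and concludes with $\mathbb P_{x\sim\mathcal N(0,1)}[\lvert x\rvert\ge 1.21]>2/9$.
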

\begin{proof}
    Since $c_j<1.1$, the event $\lvert\widehat\Delta^{(t)}_j\rvert\ge1.1$ is
    implied by $\lvert\Delta'^{(t)}_j\rvert\ge1.21$.
    Let $Z$ be the value of $X'$ right before the last update of $j$ inside the
    middle third. Conditioned on $Z$, the endpoint $\Delta'^{(t)}_j$ is
    Gaussian with variance $1$, say
    $\Delta'^{(t)}_j\mid Z\sim\mathcal{N}(m,1)$ for some $m$.

    Let
    \[
        f(m):=\mathbb{P}_{x\sim\mathcal{N}(m,1)}[\lvert x\rvert\le1.21].
    \]
    Then $f$ is even, and for $m\ge0$,
    \[
        f'(m)=\varphi(1.21+m)-\varphi(1.21-m)\le0.
    \]
    Therefore $f$ is maximized at $m=0$, so
    \[
        \mathbb{P}[\lvert\Delta'^{(t)}_j\rvert\ge1.21\mid Z]
        \ge
        \mathbb{P}_{x\sim\mathcal{N}(0,1)}[\lvert x\rvert\ge1.21]
        >\frac29.
    \]
    Averaging over $Z$ proves the claim.
\end{proof}

\begin{lemma}\label{lem:mixed-ratio}
    Condition on a clean stationary interval and on $\mathcal E^{(t)}$.
    Then
    \[
        -2\frac{\widehat\Delta^{(t)}_i}{\widehat\Delta^{(t)}_j}
        =
        \frac{c_j}{c_i}\Theta'_{ij}+\sigma^{(t)}\varepsilon^{(t)},
    \]
    where $\varepsilon^{(t)}\sim\mathcal{N}(0,1)$,
    $\sigma^{(t)}$ is measurable with respect to
    $\widehat\Delta^{(t)}_j$, and
    \[
        \lvert\sigma^{(t)}\rvert<2.86.
    \]
\end{lemma}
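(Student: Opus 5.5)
The plan is to start from the observable regression identity displayed just before \Cref{lem:mixed-denominator}:
\[
\widehat\Delta^{(t)}_i=-\frac{c_j}{2c_i}\Theta'_{ij}\,\widehat\Delta^{(t)}_j+\frac1{c_i}\zeta^{(t)}.
\]
Here $\zeta^{(t)}\sim\mathcal N\bigl(0,2-(\Theta'_{ij})^2/2\bigr)$ is independent of $\widehat\Delta^{(t)}_j$. This identity comes from \Cref{coro:iji-regression} together with \Cref{coro:coordwise}, and it holds on a clean stationary interval, meaning conditioned on $\mathcal C^{(t)}$ and $\mathcal D^{(t)}$ with $Y'^{(0)}\sim\mathcal N(0,\Sigma')$. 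I will divide by $\widehat\Delta^{(t)}_j$, which is legitimate on $\mathcal E^{(t)}$, and multiply by $-2$. This gives
\[
-2\frac{\widehat\Delta^{(t)}_i}{\widehat\Delta^{(t)}_j}=\frac{c_j}{c_i}\Theta'_{ij}-\frac{2\zeta^{(t)}}{c_i\widehat\Delta^{(t)}_j}.
\]

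The key step is to preserve Gaussianity of the noise after conditioning on $\mathcal E^{(t)}$. The event $\mathcal E^{(t)}=\{|\widehat\Delta^{(t)}_j|\ge1.1\}$ is measurable with respect to $\widehat\Delta^{(t)}_j$, and $\zeta^{(t)}$ is independent of $\widehat\Delta^{(t)}_j$. Hence, conditional on $\widehat\Delta^{(t)}_j$ and on $\mathcal E^{(t)}$, $\zeta^{(t)}$ keeps its unconditional law. I write $\zeta^{(t)}=s\,\varepsilon'$, where $s:=\sqrt{2-(\Theta'_{ij})^2/2}$ and $\varepsilon'\sim\mathcal N(0,1)$ is independent of $\widehat\Delta^{(t)}_j$. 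Then I set $\varepsilon^{(t)}:=-\varepsilon'$, which is still standard normal, and
\[
\sigma^{(t)}:=\frac{2s}{c_i\widehat\Delta^{(t)}_j}.
\]
This $\sigma^{(t)}$ is manifestly a function of $\widehat\Delta^{(t)}_j$, the constants $c_i$ and $\Theta'_{ij}$ being deterministic. This gives the claimed representation.

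It remains to bound the scale. We have $s\le\sqrt2$. \Cref{coro:coordwise} with accuracy $\alpha/10\le1/10$ gives $c_i\ge1-\alpha/10\ge0.9$, and $|\widehat\Delta^{(t)}_j|\ge1.1$ on $\mathcal E^{(t)}$. Therefore
\[
|\sigma^{(t)}|\le\frac{2\sqrt2}{0.9\cdot1.1}\approx2.857<2.86.
\]

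The only point needing care is the independence claim behind the first step. \Cref{coro:iji-regression} is stated for the clean stationary interval conditioned on $\mathcal C^{(t)}$ and $\mathcal D^{(t)}$. The joint Gaussianity in \Cref{lem:iji-joint} holds under exactly that conditioning, because the pattern and strictness events depend only on the clocks, which are independent of $Y'^{(0)}$ and of the fresh noises. Hence the regression residual is genuinely independent of $\Delta'^{(t)}_j$, not merely uncorrelated, and this independence is what lets further conditioning on $\mathcal E^{(t)}$ leave the law of $\zeta^{(t)}$ unchanged.
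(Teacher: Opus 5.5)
Your proposal is correct and is essentially the paper's proof. You divide the observable regression identity by $\widehat\Delta^{(t)}_j$, use the independence of $\zeta^{(t)}$ from $\widehat\Delta^{(t)}_j$ (so conditioning on $\mathcal E^{(t)}$ leaves its law unchanged) to write the noise term as $\sigma^{(t)}\varepsilon^{(t)}$, and bound $|\sigma^{(t)}|\le 2\sqrt2/(0.9\cdot 1.1)<2.86$; your explicit construction of $\varepsilon^{(t)}$ and $\sigma^{(t)}$, and your remark that joint Gaussianity gives genuine independence rather than just zero correlation, spell out what the paper leaves implicit.
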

\begin{proof}
    By the regression identity above,
    \[
        -2\frac{\widehat\Delta^{(t)}_i}{\widehat\Delta^{(t)}_j}
        =
        \frac{c_j}{c_i}\Theta'_{ij}
        -2\frac{\zeta^{(t)}}{c_i\widehat\Delta^{(t)}_j}.
    \]
    Since $\zeta^{(t)}$ is independent of $\widehat\Delta^{(t)}_j$ and has
    variance at most $2$, the second term equals
    $\sigma^{(t)}\varepsilon^{(t)}$ for some standard Gaussian
    $\varepsilon^{(t)}$ and
    \[
        \lvert\sigma^{(t)}\rvert
        \le
        \frac{2\sqrt2}{\lvert c_i\widehat\Delta^{(t)}_j\rvert}
        <
        \frac{2\sqrt2}{0.9\cdot1.1}
        <2.86,
    \]
    where we use $\mathcal E^{(t)}$.
\end{proof}

\subsection{Edge detection}
\label{ss:edgedetection}

We divide the trajectory into $M$ \emph{epochs}, each consisting of a mixing
period of length $t_{\mathrm{mix}}(\beta)$ followed by $N$ candidate intervals
of length $T_0$, where
\[
T_0=\frac{\alpha}{400d},
\quad
\beta=\frac{\alpha}{4000},
\quad
N=\frac{50}{T_0^3}.
\]

In the $k$th epoch, let $L_k$ denote the first candidate interval for which
$\mathcal D^{(t)}$ holds. If no such interval exists, the epoch is discarded.
If $L_k$ exists but $\mathcal E^{(t)}$ fails on that interval, the epoch is
also discarded. Otherwise we record the sample
\[
R_k:=-2\frac{\widehat\Delta_i^{(k)}}{\widehat\Delta_j^{(k)}},
\]
where $\widehat\Delta_i^{(k)}$ and $\widehat\Delta_j^{(k)}$ are computed from
the selected interval $L_k$.

The resulting procedure is given by \Cref{alg:mixedalgo}.
\begin{algorithm}[tbp]
    \caption{$E=\text{LearnGGMMixed}(n,d,\alpha,\delta,t_{\mathrm{mix}})$}
    \label{alg:mixedalgo}
    \begin{algorithmic}[1]
        \STATE Set $T_0=\alpha/(400d)$, $\beta=\alpha/4000$, $N=50/T_0^3$, and $M=80{,}000\log(2n/\delta)/\alpha^2$.
        \STATE Let $\operatorname{diag}(\widetilde\Theta)=\text{EstimateDiagonal}(n,d,\alpha,\alpha/10,\delta/2)$ from Algorithm~\ref{alg:estdiag}.
        \STATE Keep the unused part of the trajectory.
        \STATE Let $E=\varnothing$.
        \STATE Observe a trajectory of length $M(t_{\mathrm{mix}}(\beta)+NT_0)$, split into $M$ epochs.
        \FOR{$i=1,\ldots,n$}
            \FOR{$j=i+1,\ldots,n$}
                \STATE Let $S=\varnothing$.
                \FOR{$k=1,\ldots,M$}
                    \STATE After the mixing segment of epoch $k$, split the remainder into candidate intervals $I_{k,1},\ldots,I_{k,N}$ of length $T_0$.
                    \STATE Let $L_k$ be the first interval for which $\mathcal D^{(t)}$ holds.
                    \IF{$L_k$ exists and $\mathcal E^{(t)}$ holds on $L_k$}
                        \STATE Record $R_k=-2\widehat\Delta_i^{(k)}/\widehat\Delta_j^{(k)}$ and append it to $S$.
                    \ENDIF
                \ENDFOR
                \IF{$|S|<M/16$}
                    \STATE Return $\bot$.
                \ELSE
                    \STATE Let $\widehat\mu$ be the sample median of $S$.
                    \IF{$|\widehat\mu|>9\alpha/22$}
                        \STATE Set $E=E\sqcup\{(i,j)\}$.
                    \ENDIF
                \ENDIF
            \ENDFOR
        \ENDFOR
        \STATE Return $E$.
    \end{algorithmic}
\end{algorithm}

We now analyze the estimator.
\begin{lemma}\label{lem:epoch}
    Let $A_k$ denote the indicator that epoch $k$ records a sample. With the
    choice $N=50/T_0^3$, each epoch records a sample with conditional
    probability at least $1/8$ given the past. Consequently,
    \[
        \mathbb{P}\!\left[\sum_{k=1}^M A_k<\frac{M}{16}\right]
        \le e^{-M/128}.
    \]
\end{lemma}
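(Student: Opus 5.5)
We split the claim into two parts: a per-epoch lower bound on the conditional probability of recording a sample, and a martingale lower-tail bound for the sum of the indicators.

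\emph{Per-epoch bound.} Fix an epoch $k$ and condition on $\mathcal F_{k-1}$, the trajectory up to the start of epoch $k$; the same argument works after also conditioning on the state at the end of the mixing segment. The epoch records a sample exactly when two things happen: some candidate interval $I_{k,r}$ exhibits $P_{iji}$, and $\mathcal E^{(t)}$ holds on the first such interval $L_k$.

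For the first event, whether $I_{k,r}$ exhibits $P_{iji}$ depends only on the clocks of $i$ and $j$ inside $I_{k,r}$. These clocks are independent across the disjoint intervals and independent of the past. By \Cref{lem:exhibit} with $k=3$ and $T_0\le1/3$, each interval exhibits the pattern with probability at least $p:=T_0^3/54$. Hence
\[
\mathbb P[\text{no }L_k\mid\mathcal F_{k-1}]\le(1-p)^N\le e^{-pN}=e^{-50/54}<0.4,
\]
so $L_k$ exists with conditional probability greater than $0.6$.

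For the second event, condition on the event $\{L_k=I_{k,r}\}$ and on the state at the start of $I_{k,r}$. The event $\{L_k=I_{k,r}\}$ is determined by the clocks on $I_{k,1},\ldots,I_{k,r}$, namely that $I_{k,r}$ exhibits the pattern and the earlier intervals do not. Given this, the fresh Gaussian noise of the last $j$-update in the middle third of $I_{k,r}$ is still $\mathcal N(0,1)$ and independent of everything before it. So the argument of \Cref{lem:mixed-denominator} applies verbatim: conditioned on the state $Z$ just before that update, $\Delta'_j\sim\mathcal N(m,1)$, and therefore $\mathcal E^{(t)}$ holds with probability greater than $2/9$. Summing over $r$ gives
\[
\mathbb E[A_k\mid\mathcal F_{k-1}]\ge 0.6\cdot\tfrac29>\tfrac18.
\]

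The main subtlety is this step. We must verify that selecting the \emph{first} exhibiting interval is a stopping-time-type choice based only on clock information, so it does not bias the Gaussian noise used in \Cref{lem:mixed-denominator}. This holds because the clocks are independent of the update noises.

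\emph{Concentration.} Given $\mathbb E[A_k\mid\mathcal F_{k-1}]\ge p_0:=1/8$ with $A_k\in\{0,1\}$, we repeat the supermartingale argument from \Cref{lem:chernoff2}. The process
\[
Z_r=\exp\Bigl(-s\sum_{k\le r}A_k+(1-e^{-s})p_0 r\Bigr)
\]
is a supermartingale. Applying Markov's inequality with $s=\log2$ gives
\[
\mathbb P\Bigl[\sum_k A_k<\tfrac{Mp_0}{2}\Bigr]\le\exp\Bigl(\tfrac{\log 2}{2}Mp_0-\tfrac12Mp_0\Bigr)<e^{-Mp_0/8}.
\]
With $p_0=1/8$ this reads $\mathbb P[\sum_kA_k<M/16]\le e^{-M/64}\le e^{-M/128}$, which is the claimed bound.
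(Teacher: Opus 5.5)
Your proof is correct. The per-epoch bound is the paper's argument: $L_k$ exists with probability greater than $3/5$ by \Cref{lem:exhibit}, and \Cref{lem:mixed-denominator} on the selected interval supplies the factor $2/9$. Your observation that choosing the first exhibiting interval depends only on the clocks, which are independent of the Gaussian update noises, properly justifies a step the paper passes over quickly.

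You depart from the paper only in the concentration step, and both routes work.
\begin{itemize}
\item The paper centers the indicators, forms the martingale $\sum_k\bigl(A_k-\mathbb E[A_k\mid\mathcal F_{k-1}]\bigr)$, and applies Azuma--Hoeffding with $t=M/16$.
\item You reuse the exponential supermartingale (multiplicative Chernoff) argument of \Cref{lem:chernoff2} with $p_0=1/8$.
\end{itemize}

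Your route gives the stronger bound $e^{-M/64}$ and needs no range bookkeeping. The paper's additive route reaches exactly $e^{-M/128}$ only after noting that each centered increment lies in the predictable interval $\bigl[-\mathbb E[A_k\mid\mathcal F_{k-1}],\,1-\mathbb E[A_k\mid\mathcal F_{k-1}]\bigr]$, which has length $1$. Taken literally, increments in $[-1,1]$ (length $2$) would only yield $e^{-M/512}$.

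Because $p_0$ is a constant here, the multiplicative advantage of your supermartingale is not essential. It is what makes that argument necessary in \Cref{lem:chernoff2}, where $p_0=T^4/2048$ is tiny.
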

\begin{proof}
    Fix an epoch and condition on the past.
    For a single candidate interval, \Cref{lem:exhibit} gives
    \[
        \mathbb{P}[\mathcal D^{(t)}]
        \ge
        \frac{T_0^3}{54}.
    \]
    Therefore the probability that some candidate interval satisfies
    $\mathcal D^{(t)}$ is at least
    \[
        1-\left(1-\frac{T_0^3}{54}\right)^N
        \ge
        1-e^{-50/54}
        >
        \frac35.
    \]

    Let $L_k$ be the first such interval.
    Conditioned on $L_k$ and on the past, \Cref{lem:mixed-denominator} gives
    \[
        \mathbb{P}[\mathcal E^{(t)}\text{ holds on }L_k\mid L_k,\text{ past}]
        >\frac29.
    \]
    Therefore the conditional probability that the epoch records a sample is
    at least
    \[
        \frac35\cdot\frac29=\frac{2}{15}>\frac18.
    \]

    Let $\mathcal F_k$ be the sigma-field generated by the first $k$ epochs, and
    define
    \[
        M_r:=\sum_{k=1}^r\left(A_k-\mathbb E[A_k\mid\mathcal F_{k-1}]\right).
    \]
    Then $(M_r,\mathcal F_r)$ is a martingale with increments in $[-1,1]$, and
    \(\sum_{k=1}^M\mathbb E[A_k\mid\mathcal F_{k-1}]\ge M/8\).
    Thus
    \[
        \mathbb{P}\!\left[\sum_{k=1}^M A_k<\frac{M}{16}\right]
        \le
        \mathbb{P}\!\left[M_M<-\frac{M}{16}\right]
        \le
        e^{-M/128}
    \]
    by Azuma--Hoeffding.
\end{proof}

\begin{lemma}\label{lem:mixed-corruption}
    Suppose
    \[
        T_0=\frac{\alpha}{400d}
        \qquad\text{and}\qquad
        \beta=\frac{\alpha}{4000}.
    \]
    Then, conditional on the past and on the event that an epoch records a
    sample, that sample is corrupted with probability less than $\alpha/35$.
\end{lemma}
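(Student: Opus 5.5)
The plan is to define the corruption indicator of a recorded epoch as the union of two bad events and bound each one by Bayes' rule, dividing by the lower bound $\mathbb P[\mathcal E^{(t)}\mid\mathcal D^{(t)},\cdot]>2/9$ from \Cref{lem:mixed-denominator}. Call a recorded sample from interval $L_k$ \emph{clean} if two things hold. First, $L_k$ is strict with respect to $P_{iji}$, i.e.\ $\mathcal C^{(t)}$ holds. Second, under a suitable coupling, the state $\widehat Y^{(0)}$ at the start of $L_k$ coincides with a draw from stationarity. On a clean sample, \Cref{lem:iji-joint} and \Cref{lem:mixed-ratio} apply verbatim. Every other recorded sample is counted as corrupted.

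\emph{Step 1 (hidden neighbor updates).} Condition on the past, i.e.\ on the state at the start of the epoch, and on the clock realization that determines $L_k$. The clocks of $i$ and $j$ fix $L_k$ and $\mathcal D^{(t)}$. The strictness event on $L_k$ depends only on the clocks of the other coordinates in that interval, and these are independent of $\mathcal D$ and of everything earlier. By \Cref{lem:strict,lem:pattern-independence}, with two distinct indices in $P_{iji}$,
\[
\mathbb P[\neg\mathcal C^{(t)}\mid \mathcal D^{(t)},L_k,\text{past}]\le 1-e^{-2T_0d}\le 2T_0 d=\frac{\alpha}{200}.
\]
\Cref{lem:mixed-denominator} holds for every start state. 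Applying Bayes' rule exactly as in the no-mixing corruption lemma gives
\[
\mathbb P[\neg\mathcal C^{(t)}\mid\mathcal D^{(t)},\mathcal E^{(t)},L_k,\text{past}]\le \frac92\cdot\frac{\alpha}{200}=\frac{9\alpha}{400}.
\]

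\emph{Step 2 (non-stationarity).} After the mixing segment of length $t_{\mathrm{mix}}(\beta)$, the state is $\beta$-mixed regardless of the past, by \Cref{def:mixing-and-mixed-chains}. Condition on the full clock realization up to the start of $L_k$. The state at that time is obtained from the $\beta$-mixed state by a deterministic sequence of single-site Glauber updates, so it is still $\beta$-mixed by \Cref{lem:mixingpersists}. The choice of $L_k$ depends only on clocks, so this conditioning is legitimate, and $\widehat Y^{(0)}$ is within TV distance $\beta$ of the (rescaled) stationary law. Using \Cref{fact:truecoupling}, couple $\widehat Y^{(0)}$ with a stationary start so that they differ with probability at most $\beta$, and run the interval's dynamics with shared randomness. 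The probability that the coupling fails and the epoch records a sample is at most $\beta$. Dividing by the recording probability, bounded below through $\mathcal E$ by $2/9$, gives a contribution of at most $\frac92\beta=\frac{9\alpha}{8000}$.

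\emph{Step 3 (combine).} A union bound gives corruption probability at most
\[
\frac{9\alpha}{400}+\frac{9\alpha}{8000}=\frac{189\alpha}{8000}<\frac{\alpha}{35},
\]
since $\alpha/35\approx 228\alpha/8000$. Both bounds are uniform over the past and over $L_k$, so they survive averaging.

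The main obstacle is Step 2: justifying that conditioning on the selection of $L_k$ and on recording does not destroy near-stationarity. I will handle it by conditioning on all clocks first, which makes the evolution a fixed composition of Glauber kernels. I will also take care that the Bayes division uses the uniform $2/9$ bound, conditioned on the same information as the numerator.
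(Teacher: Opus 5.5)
Your proposal is correct and follows the paper's proof. It uses the same split into non-strict intervals (bounded by $1-e^{-2dT_0}$ via \Cref{lem:strict,lem:pattern-independence} and inflated by $9/2$ via \Cref{lem:mixed-denominator}) and imperfect mixing (a maximal coupling at the start of $L_k$, justified by \Cref{lem:mixingpersists}), combined by a union bound. The only difference is that you inflate the mixing term by $9/2$ after conditioning on $L_k$, rather than by $8$ using the epoch-recording probability of \Cref{lem:epoch}. This gives $189\alpha/8000$ in place of the paper's $49\alpha/2000$.

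Two points of care about what you condition on. In Step 2, condition only on the clocks after the mixing segment, because conditional on the mixing-segment clocks the state need not be $\beta$-mixed. In Step 1, condition on the event $\{L_k=L\}$ rather than on all of $i$'s and $j$'s clocks. When $i\sim j$, strictness also forbids $j$ from updating in the $i$-thirds of $L_k$ (and $i$ in the $j$-third), which is why \Cref{lem:pattern-independence} is phrased per subinterval.
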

\begin{proof}
    There are two sources of corruption.

    First, non-strict pattern intervals. Since future clocks are independent of
    the past and $P_{iji}$ contains two distinct indices,
    \Cref{lem:strict,lem:pattern-independence} give
    \[
        \mathbb{P}[\neg\mathcal C^{(t)}\mid \mathcal D^{(t)}]
        \le 1-e^{-2dT_0}.
    \]
    By \Cref{lem:mixed-denominator}, conditioning additionally on
    $\mathcal E^{(t)}$ inflates this by at most a factor of $9/2$, so the
    non-strictness contribution to the conditional corruption probability is at
    most
    \[
        \eta_{\mathrm{strict}}
        \le
        \frac{1-e^{-2dT_0}}{2/9}
        <
        9dT_0
        =
        \frac{9\alpha}{400}.
    \]

    Second, imperfect mixing.
    At the start of each epoch, after waiting $t_{\mathrm{mix}}(\beta)$, the
    epoch-start distribution is within TV-distance $\beta$ of stationarity. By
    \Cref{lem:mixingpersists}, the same is true at the start of any later
    candidate interval in the epoch. A maximal coupling with stationarity
    therefore fails with probability at most $\beta$.
    Since an epoch records a sample with conditional probability at least $1/8$
    by \Cref{lem:epoch}, conditioning on the epoch being recorded inflates this
    by at most a factor of $8$. Thus the mixing contribution to corruption is
    at most
    \[
        \eta_{\mathrm{mix}}\le 8\beta=\frac{\alpha}{500}.
    \]

    Combining the two bounds gives
    \[
        \eta
        \le
        \eta_{\mathrm{strict}}+\eta_{\mathrm{mix}}
        <
        \frac{9\alpha}{400}+\frac{\alpha}{500}
        =
        \frac{49\alpha}{2000}
        <
        \frac{\alpha}{35}.
        \qedhere
    \]
\end{proof}

\paragraph{Putting the edge test together.}
\begin{proof}[Proof of \Cref{thm:mixij}]
    First isolate the initial
    \[
        T_{\mathrm{diag}}
        :=
        \frac{4{,}000{,}000d\log(32/\delta)}{\alpha^3}
    \]
    portion of the trajectory, and apply \Cref{lem:singlethetaii} to retrieve
    approximations of the diagonal up to multiplicative factor $1\pm\alpha/10$,
    and hence all necessary readings of $\widehat X_i$ and $\widehat X_j$, with
    probability at least $1-\delta/2$.

    Now take
    \begin{align*}
        T_0=\frac{\alpha}{400d},\quad
        \beta&=\frac{\alpha}{4000},\quad
        N=\frac{50}{T_0^3},
       \quad M=\frac{40{,}000\log(8/\delta)}{\alpha^2}.
    \end{align*}
    By \Cref{lem:epoch}, with probability at least $1-\delta/4$, the number of
    recorded samples is at least
    \[
        \frac{M}{16}
        =
        \frac{2500\log(8/\delta)}{\alpha^2}.
    \]
    Let $k_1<\cdots<k_N$ denote the recorded epochs in chronological order, and
    condition on the sample-count event from \Cref{lem:epoch}, so that
    \[
        N\ge \frac{M}{16}=\frac{2500\log(8/\delta)}{\alpha^2}.
    \]
    For $\ell=1,\dots,N$, write
    \[
        R_\ell:=R_{k_\ell},
        \qquad
        \xi^{(\ell)}:=\mathbf 1_{\{\text{epoch }k_\ell\text{ is corrupted}\}}.
    \]
    Let $\mathcal F_\ell$ be the sigma-field generated by the trajectory up to
    the end of epoch $k_\ell$. By averaging \Cref{lem:mixed-corruption} over the
    skipped epochs before the next recorded one, we obtain
    \[
        \mathbb E[\xi^{(\ell)}\mid \mathcal F_{\ell-1}]\le \eta
        \qquad\text{for all }\ell,
    \]
    with
    \[
        \eta<\frac{\alpha}{35}.
    \]

    On the event $\xi^{(\ell)}=0$, \Cref{lem:mixed-ratio} implies that,
    conditional on $\mathcal F_{\ell-1}$, the clean sample $R_\ell/2.86$ has
    Gaussian tails dominated by
    \[
        \mathcal N\!\left(\frac{1}{2.86}\frac{c_j}{c_i}\Theta'_{ij},1\right).
    \]
    Let
    \[
        \mu:=\frac{c_j}{c_i}\Theta'_{ij}.
    \]
    Since
    \[
        \frac{M}{16}
        =
        \frac{2500\log(8/\delta)}{\alpha^2}
        >
        \frac{2\log(8/\delta)}{(\alpha/35)^2},
    \]
    \Cref{lem:advrobustmeanest} applies to the rescaled samples $R_\ell/2.86$.
    Therefore, with probability at least $1-\delta/4$, the sample median
    $\widehat\mu$ satisfies
    \[
        \lvert\widehat\mu-\mu\rvert
        <
        5\cdot2.86\cdot\frac{\alpha}{35}
        <
        \frac{9\alpha}{22}.
    \]

    Now:
    \begin{itemize}
        \item If $i\nsim j$, then $\Theta'_{ij}=0$, so $\mu=0$.
        \item If $i\sim j$, then $\lvert\Theta'_{ij}\rvert\ge\alpha$, and since
        $c_i,c_j\in1\pm\alpha/10$,
        \[
            \lvert\mu\rvert
            =
            \left\lvert\frac{c_j}{c_i}\Theta'_{ij}\right\rvert
            \ge
            \frac{1-\alpha/10}{1+\alpha/10}\alpha
            >
            \frac{9\alpha}{11}.
        \]
    \end{itemize}
    Hence thresholding at $9\alpha/22$ separates the two cases.

    A final union bound over the diagonal-estimation phase, the sample-count event, and the robust-median step gives success probability at least
    $1-\delta$, since each individual event has failure probability upper bounded by $\delta/4$.

    The per-epoch non-mixing cost is
    \[
        NT_0
        =
        \frac{50}{T_0^2}
        =
        50\left(\frac{400d}{\alpha}\right)^2
        =
        \frac{8{,}000{,}000\,d^2}{\alpha^2}.
    \]
    Therefore the total trajectory length is at most
    \begin{align*}
        &T_{\mathrm{diag}}+M\bigl(t_{\mathrm{mix}}(\beta)+NT_0\bigr)
        \le
        \frac{40{,}000\log(8/\delta)}{\alpha^2}
        \left(
            t_{\mathrm{mix}}\!\left(\frac{\alpha}{4000}\right)
            +\frac{8{,}000{,}000\,d^2}{\alpha^2}
        \right),
    \end{align*}
    where the diagonal-estimation prefix is absorbed by the
    $d^2/\alpha^4$ term since $d\ge1$ and $\alpha<1$.
\end{proof}

\begin{proof}[Proof of \Cref{thm:mixlearn}]
    Apply \Cref{thm:mixij} with error $2\delta/n^2$ for each pair $(i,j)$.
    Since
    \[
        \log\!\left(\frac{8}{2\delta/n^2}\right)
        =
        \log\!\left(\frac{4n^2}{\delta}\right)
        \le
        2\log\!\left(\frac{2n}{\delta}\right),
    \]
    the required trajectory length is bounded by the display in
    \Cref{thm:mixlearn}. A union bound over the $\binom n2$ candidate edges now
    gives success probability at least $1-\delta$.
\end{proof}

\section{Information-Theoretic Lower Bound}

\label{sec:info-theoretic-lower-bound}
In this section, we prove the lower bound by constructing a hard family of graphs, upper-bounding the pairwise KL divergence between the resulting trajectory distributions, and applying Fano's inequality. By \Cref{app:ct-dt-reduction}, this yields the corresponding continuous-time lower bound up to the usual factor of $n$.

Our strategy is to decompose the KL divergence step-wise. And upper bound the expected KL divergence introduced at each update, using the fact that KL divergence between Gaussians with the same variance is proportional to the square of the mean difference. Our initial decomposition is the same as that of \cite{tirukkonda2025structure}, but instead of upper bounding the KL divergence by the worst case update, our analysis involving the average case mean difference produces a $\log (n)$ improvement over the previous result. 

We formalize recovery through graph-learning tests:
\begin{definition}
    A \emph{graph-learning test} is a function that takes as input a Glauber trajectory $\mathcal T$ generated from an $(\alpha,d)$-GGM with precision matrix $\Theta$ and support graph $G$, together with the values $\alpha$ and $d$, and outputs an estimate $\widehat G$ for $G$. Its success event is $\widehat G=G$.
\end{definition}

From now on, we consider only well-mixed Glauber trajectories:
\begin{definition}
    Let $\mathcal T(N,\Theta)$ denote a Glauber trajectory,
    with $N$ updates, with precision matrix $\Theta$,
    and with starting point sampled from $\mathcal{N}(0,\Theta^{-1})$.
\end{definition}

Our main result in this section is as follows:
\begin{theorem}\label{thm:infolb}
    Let $n\ge16$ and $N$ be positive integers, and let $0<\alpha<1/4$.
    There exists a set $S$ of $2n$-dimensional $(\alpha,1)$-sparse GGMs such that,
    if $\Theta$ is sampled uniformly from $S$
    and the trajectory $\mathcal T(N,\Theta)$ is generated therefrom,
    then the success rate of a graph-learning test
    is at most $1/2$ whenever \(N\le\frac{n\log n}{8\alpha^2}\).
\end{theorem}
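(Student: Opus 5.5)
Following the outline at the start of this section, I take $S=\{\Theta^{(1)},\dots,\Theta^{(n)}\}$ on $2n$ vertices. Fix the perfect matching $\{(2k-1,2k)\}_{k=1}^n$. Let $\Theta^{(m)}$ be block diagonal with unit diagonal, with off-diagonal entry $\alpha$ in every $2\times 2$ block except block $m$, where the entry is $0$. Each $\Theta^{(m)}$ is positive definite since $\alpha<1/4$, has maximum degree $1$, and every edge has normalized strength $\alpha$. So each is an $(\alpha,1)$-sparse GGM, and the $n$ support graphs are distinct.

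By Fano's inequality, with $\Theta$ uniform on $S$, any graph-learning test succeeds with probability at most
\[
\frac{I+\log 2}{\log n},\qquad I\le \max_{m\ne m'}\KL\bigl(\mathcal T(N,\Theta^{(m)})\,\|\,\mathcal T(N,\Theta^{(m')})\bigr).
\]
It therefore suffices to show that this KL is at most $\frac{\log n}{2}-\log 2$ once $N\le n\log n/(8\alpha^2)$. For $n\ge16$ the slack in $\log 2$ is absorbed by the constants.

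To bound the pairwise KL, I would use the chain rule over the $N$ updates.
\begin{itemize}
\item The initial states contribute $\KL(\mathcal N(0,\Sigma^{(m)})\|\mathcal N(0,\Sigma^{(m')}))$. This involves only blocks $m$ and $m'$, so it is $2\cdot\frac12(-\log(1-\alpha^2)+\dots)$, which is $O(\alpha^2)$. More precisely, each block contributes $\frac12\bigl(\tfrac{\alpha^2}{1-\alpha^2}\cdot\dots\bigr)$, at most about $\alpha^2$.
\item Each update chooses a coordinate uniformly among $2n$, with the same law under both models, so the index choice contributes nothing.
\item Given the index $u$ and the current state $x$, both conditionals are $\mathcal N(\cdot,1)$, so the step KL is $\frac12(\text{mean difference})^2$.
\item The means differ only if $u$ lies in block $m$ or block $m'$. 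If $u$ is in block $m$ with partner $v$, the difference is $\alpha x_v$, and similarly for block $m'$.
\end{itemize}
Each update thus contributes in expectation
\[
\frac{1}{2n}\cdot\frac12\,\alpha^2\,\mathbb E_{P}\!\left[x_{2m-1}^2+x_{2m}^2+x_{2m'-1}^2+x_{2m'}^2\right],
\]
with the expectation under the first model $P=\mathcal T(N,\Theta^{(m)})$.

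This is the average-case step the section emphasizes. Under $P$ the chain starts at stationarity, and by \Cref{lem:stationary} it stays exactly stationary after every update. So each $x_k$ has marginal variance $\Sigma_{kk}\le 1/(1-\alpha^2)\le 16/15$; the independent block $m$ has variance exactly $1$. Hence each step contributes at most about $\frac{1}{4n}\alpha^2\cdot\frac{64}{15}\approx \frac{1.07\,\alpha^2}{n}$, and the total is
\[
\KL\le O(\alpha^2)+N\cdot\frac{1.07\alpha^2}{n}.
\]
The main obstacle is only the constant bookkeeping. The stated threshold $N\le n\log n/(8\alpha^2)$ gives $\KL\lesssim 0.14\log n$. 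Fano then gives success probability at most $(0.14\log n+O(\alpha^2)+\log2)/\log n$. This is at most $1/2$ for $n\ge16$ since $\log 2/\log 16=1/4$. If the constants come out tight, I would sharpen them by using the exact block variance $1$ for block $m$ and the exact variance $1/(1-\alpha^2)$ for block $m'$, and by checking the initial-state term precisely.
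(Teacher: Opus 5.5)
Your proof is correct. It uses the same hard family, the same chain-rule decomposition with exact stationarity from \Cref{lem:stationary}, and the same Fano step. It differs from the paper in how the mutual information is controlled.

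You bound $I(V;\mathcal T)$ by the maximum pairwise divergence $\max_{m\ne m'}\KL(\mathcal T(N,\Theta^{(m)})\|\mathcal T(N,\Theta^{(m')}))$ between two members of $S$. Two such models differ in two blocks, so:
\begin{itemize}
\item each update contributes $\frac{\alpha^2}{2n}\bigl(1+\frac{1}{1-\alpha^2}\bigr)\approx\frac{\alpha^2}{n}$ in expectation;
\item the initial-state term is exactly $\frac{\alpha^2}{1-\alpha^2}<\frac1{15}$. Your description of this term was vague, but this is the value it comes to.
\end{itemize}

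The paper instead compares every $\mathcal T(N,\Theta_k)$ to the full perfect-matching model $\Theta_0$, which is not in $S$, via $I(V;\mathcal T)\le\frac1n\sum_k\KL(\mathcal T(N,\Theta_k)\|\mathcal T(N,\Theta_0))$. Only one block then differs:
\begin{itemize}
\item the exact per-step contribution is $\frac{\alpha^2}{2n}$, since block $k$ has identity covariance under $\Theta_k$; the paper bounds it loosely by $\alpha^2/n$;
\item the initial term is $-\frac12\log(1-\alpha^2)$.
\end{itemize}

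The reference-model trick roughly halves the divergence. Your pairwise bound avoids introducing a model outside the family, at the cost of about a factor of two. The slack in the stated constants absorbs that factor: with $N\le n\log n/(8\alpha^2)$ you get $I\le 0.13\log n+\frac1{15}$. Fano then gives success probability at most about $0.41<\frac12$ for $n\ge16$.
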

Note that $(\alpha,1)$-sparse GGMs are also $(\alpha,d)$-sparse
for every positive integer $d$,
so this bound holds for all $d$.

\subsection{The class of graphs}
Fix $n$ a positive integer,
and consider the graph $G_0$ of sparsity 1
on vertices labeled 1, 2, $\ldots$, $2n$,
with an edge $(2k-1)\sim2k$ for each $k=1,\ldots,n$,
and no other edges.
Further, let $G_k$ be a copy of $G_0$,
but with the edge $(2k-1)\sim2k$ removed,
for $k=1,\ldots,n$.

For each graph $G_k$,
define $\Theta_k$ as the following $(\alpha,1)$-sparse GGM
with underlying graph $G_k$:
\[(\Theta_k)_{ij}=\begin{cases}
    1&i=j\\
    \alpha&i\sim j\text{ in $G_k$}\\
    0&\text{else.}
\end{cases}.\]
Our set of GGMs is $S=\{\Theta_1,\ldots,\Theta_n\}$.

Notice that each GGM
is the direct sum of $2\times2$ matrices,
among which at least one is the identity matrix $I_2$,
and the remaining are
\[M_\alpha=\begin{bmatrix}
    1&\alpha\\\alpha&1
\end{bmatrix}.\]
The eigenvalues of $M_\alpha$ are $1-\alpha>0$ and $1+\alpha>0$,
so all $\Theta_k$ are positive definite.

\subsection{A bound on KL-divergence}
We now upper-bound the pairwise KL divergence between the candidate trajectory distributions. For a state $X$ before the $i$th update, write \(\mathcal T(i,\Theta_r)_{X^{(i)}\mid X^{(i-1)}=X}\) for the conditional distribution of the post-update state under $\Theta_r$, and \(\mathcal T(i,\Theta_r)_{X^{(i)}_\ell\mid X^{(i-1)}=X}\) for its $\ell$th coordinate, where $r\in\{0,1\}$.
\begin{lemma}\label{lem:singlekl}
    For each $k=1,\ldots,n$, we have
    \[\KL(\mathcal T(N,\Theta_k)\parallel\mathcal T(N,\Theta_0))
    \le\left(\frac Nn+1\right)\alpha^2.\]
\end{lemma}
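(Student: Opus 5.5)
We bound the trajectory KL divergence with the chain rule, splitting it into the divergence of the initial states and the divergences contributed by each of the $N$ Glauber updates. The trajectory $\mathcal T(N,\Theta_r)$ consists of the initial state $X^{(0)}\sim\mathcal N(0,\Theta_r^{-1})$ followed by $N$ updates. At each update the index $I$ is chosen uniformly and independently of $\Theta$, and the index is observable from the trajectory (or contributes zero divergence either way). Writing $P=\mathcal T(N,\Theta_k)$ and $Q=\mathcal T(N,\Theta_0)$, the chain rule gives
\[
\KL(P\parallel Q)=\KL\bigl(\mathcal N(0,\Theta_k^{-1})\parallel\mathcal N(0,\Theta_0^{-1})\bigr)+\sum_{i=1}^N \mathbb E_{X\sim P}\Bigl[\KL\bigl(\mathcal T(i,\Theta_k)_{X^{(i)}\mid X^{(i-1)}=X}\parallel\mathcal T(i,\Theta_0)_{X^{(i)}\mid X^{(i-1)}=X}\bigr)\Bigr].
\]

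\emph{Initial-state term.} Both precision matrices are block diagonal, and they differ only in the block on $\{2k-1,2k\}$: $I_2$ under $\Theta_k$ versus $M_\alpha$ under $\Theta_0$. The initial divergence therefore equals the KL divergence between the $2$-dimensional Gaussians with covariances $I_2$ and $M_\alpha^{-1}$:
\[
\tfrac12\bigl(\operatorname{tr}(M_\alpha)-2-\log\det M_\alpha\bigr)=-\tfrac12\log(1-\alpha^2).
\]
Since $\alpha<1/4$, this is at most $\alpha^2$ (indeed about $\alpha^2/2$). This term accounts for the "$+1$" in the bound.

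\emph{Update terms.} If the updated coordinate lies outside $\{2k-1,2k\}$, both conditional laws coincide and the step contributes $0$. This is an event of probability $1-1/n$ over the index, since there are $2n$ coordinates. Now suppose coordinate $2k-1$ is updated; the case $2k$ is symmetric. The unit diagonal gives variance $1$ in both models. The conditional mean is $0$ under $\Theta_k$ and $-\alpha X_{2k}$ under $\Theta_0$. The step divergence is therefore $\tfrac12\alpha^2 X_{2k}^2$. Under $P$ the chain starts at stationarity $\mathcal N(0,\Theta_k^{-1})$, so by \Cref{lem:stationary} every $X^{(i-1)}$ remains distributed as $\mathcal N(0,\Theta_k^{-1})$. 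In that model $X_{2k}\sim\mathcal N(0,1)$, since its block is $I_2$. Hence $\mathbb E[X_{2k}^2]=1$, and the expected step divergence given this index is $\alpha^2/2$. Averaging over the index, which hits $2k-1$ or $2k$ with probability $2/(2n)=1/n$, each step contributes $\alpha^2/(2n)\le\alpha^2/n$. Summing over $N$ steps gives at most $N\alpha^2/n$.

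Adding the two parts gives $(N/n+1)\alpha^2$. The main point needing care is justifying the chain-rule decomposition: the update index is part of the process, and the expectation in each step term is taken under $P$, the $\Theta_k$ trajectory. This choice of direction is what makes $X_{2k}$ exactly standard normal by stationarity, rather than forcing a worst-case bound on $X_{2k}^2$. Taking the expectation under $Q$ instead would give $\mathbb E[X_{2k}^2]=1/(1-\alpha^2)$, which still suffices given the slack.
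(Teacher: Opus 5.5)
Your proof is correct and follows essentially the same route as the paper: the chain rule splits the divergence into the initial Gaussian term $-\tfrac12\log(1-\alpha^2)$ plus per-update terms $\tfrac12\alpha^2 X^2$, which arise only when one of the two affected coordinates is updated, and the second moments are controlled via stationarity (\Cref{lem:stationary}). Your value $\mathbb E[X_{2k}^2]=1$ under the $\Theta_k$ trajectory is the accurate one; the paper writes $(\Theta_1^{-1})_{11}=1/(1-\alpha^2)$, which is actually the $\Theta_0$ value, but as you note both are below $2$, so the bound is unaffected.
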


\begin{proof}[Proof of \Cref{lem:singlekl}]
    Without loss of generality $k=1$.
    Recall both trajectories $\mathcal T(N,\Theta_0)$ and $\mathcal T(N,\Theta_k)$
    begin mixed, so we first compute the KL-divergence
    of their initial positions, by
    \begin{align*}
    \KL(\mathcal T(0,\Theta_1)_{X^{(0)}}\parallel\mathcal T(0,\Theta_0)_{X^{(0)}})
        &=\KL(\mathcal{N}(0,\Theta_1^{-1})\parallel\mathcal{N}(0,\Theta_0^{-1}))\\
        &=\KL(\mathcal{N}(0,M_\alpha)\parallel\mathcal{N}(0,I_2))\\
        &=-\frac12\ln(1-\alpha^2)<\alpha^2, 
    \end{align*}
    since $\alpha<1/4$.

    Now, by coupling, we may assume the two trajectories share
    the same sequence of updates.
    Suppose both trajectories have reached position $X$ before the $i$th update,
    and on the $i$th update, both trajectories update the $\ell$th coordinate.
    We explicitly write down the distribution of the $\ell$th coordinate
    after this Glauber update:
    \begin{align*}
        \mathcal T(i,\Theta_0)_{X^{(i)}_\ell\mid X^{(i-1)}=X}
        =-\sum_{j\sim \ell}-(\Theta_0)_{j\ell}X_j+\mathcal{N}(0,1),\\
\mathcal T(i,\Theta_1)_{X^{(i)}_\ell\mid X^{(i-1)}=X}
=-\sum_{j\sim \ell}-(\Theta_1)_{j\ell}X_j+\mathcal{N}(0,1)
\end{align*}
But recall $(\Theta_0)_{j\ell}=(\Theta_1)_{j\ell}$ unless
$\{j,\ell\}=\{1,2\}$, in which case
$(\Theta_0)_{12}-(\Theta_1)_{12}=\alpha$.
Hence, the two distributions are normal with the same variance,
and the means differ by $\alpha X_2$ if $\ell=1$,
by $\alpha X_1$ if $\ell=2$, and by 0 otherwise.

It follows that the KL-divergence for the $i$th update is given by
    \begin{align*}
    \KL\Big(\mathcal T(i,\Theta_1)_{X^{(i)}\mid X^{(i-1)}=X}\parallel\mathcal T(i,\Theta_0)_{X^{(i)}\mid X^{(i-1)}=X}\Big)
        &=\KL\Big(\mathcal T(i,\Theta_1)_{X^{(i)}_\ell\mid X^{(i-1)}=X}
        \parallel
\mathcal T(i,\Theta_0)_{X^{(i)}_\ell\mid X^{(i-1)}=X}\Big)\\
&=\mathop{\mathbb E}_\ell\left[\begin{cases}
    \frac12\alpha^2X_2^2&\ell=1\\
    \frac12\alpha^2X_1^2&\ell=2\\
    0&\ell\ge3
\end{cases}\right]
=\frac{\alpha^2}{4n}\left(X_1^2+X_2^2\right).
    \end{align*}

    But by \Cref{lem:stationary},
    we always have $X\sim \mathcal{N}(0,\Theta_1^{-1})$,
    so
    \(\mathop{\mathbb E}_X\left[X_1^2\right]=(\Theta_1^{-1})_{11}
    =\frac1{1-\alpha^2}<2\),
    and thus
    \begin{align*}
    &\mathop{\mathbb E}_{X^{(i-1)}}\Big[
    \KL\Big(\mathcal T(i,\Theta_1)_{X^{(i)}\mid X^{(i-1)}=X}
    \parallel\mathcal T(i,\Theta_0)_{X^{(i)}\mid X^{(i-1)}=X}\Big)
    \Big]
    =\frac{\alpha^2}{2n(1-\alpha^2)}
    <\frac{\alpha^2}n.
    \end{align*}
We conclude by linearity of expectation,
\begin{align*}
\KL(\mathcal T(N,\Theta_1)\parallel\mathcal T(N,\Theta_0))
&=\KL\Big(\mathcal T(0,\Theta_1)_{X^{(0)}}\parallel\mathcal T(0,\Theta_0)_{X^{(0)}}\Big)\\
&\qquad+\sum_{i=1}^N\mathop{\mathbb E}_{X^{(i-1)}}\Big[
    \KL\Big(\mathcal T(i,\Theta_1)_{X^{(i)}\mid X^{(i-1)}=X}
    \parallel\mathcal T(i,\Theta_0)_{X^{(i)}\mid X^{(i-1)}=X}\Big)
\Big]\\
&<\alpha^2+\frac{N\alpha^2}n.\qedhere
\end{align*}
\end{proof}

\subsection{Fano's method}
We finish by using Fano's method to bound the success rate of 
a graph-learning test.
\begin{lemma}\label{lem:fromfano}
    The success rate of a graph-learning test is bounded by
    \[\mathbb{P}\left[\widehat G=G\right]
    \le\frac{(N+n)\alpha^2}{n\log n}+\frac{\log2}{\log n}.\]
\end{lemma}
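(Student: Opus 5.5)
We will apply the standard (KL-centered) form of Fano's inequality with $\Theta$ uniform over the $n$ hypotheses $S=\{\Theta_1,\ldots,\Theta_n\}$. Write $P_k$ for the law of $\mathcal T(N,\Theta_k)$ and $Q$ for the law of $\mathcal T(N,\Theta_0)$, used as a reference measure. Fano's inequality in the form
\[
\mathbb P[\widehat G\neq G]\ \ge\ 1-\frac{I(\Theta;\mathcal T)+\log 2}{\log n}
\]
holds for any estimator, since the graphs $G_1,\ldots,G_n$ are pairwise distinct. The plan is therefore to upper bound the mutual information $I(\Theta;\mathcal T)$ and rearrange.

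For the mutual-information bound, we use the variational (``golden formula'') characterization. For any fixed distribution $Q$ on trajectories,
\[
I(\Theta;\mathcal T)=\frac1n\sum_{k=1}^n \KL(P_k\parallel \bar P)\le \frac1n\sum_{k=1}^n \KL(P_k\parallel Q),
\]
where $\bar P=\frac1n\sum_k P_k$ is the mixture. The inequality holds because $\frac1n\sum_k\KL(P_k\parallel Q)=\frac1n\sum_k\KL(P_k\parallel\bar P)+\KL(\bar P\parallel Q)$ and $\KL(\bar P\parallel Q)\ge0$. Choosing $Q$ to be the law of $\mathcal T(N,\Theta_0)$ and applying \Cref{lem:singlekl} to each term gives
\[
I(\Theta;\mathcal T)\le\left(\frac Nn+1\right)\alpha^2=\frac{(N+n)\alpha^2}{n}.
\]
Two notes on this step. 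First, $\Theta_0$ is not itself in $S$; it serves only as an auxiliary center. Second, $\Theta_0$ is a valid positive definite GGM, so $Q$ is a legitimate trajectory distribution with full support, and the KL divergences are finite.

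Substituting into Fano and rearranging gives
\[
\mathbb P[\widehat G=G]\le \frac{I(\Theta;\mathcal T)+\log 2}{\log n}\le \frac{(N+n)\alpha^2}{n\log n}+\frac{\log 2}{\log n},
\]
which is the claim. One technical detail is that the test may also receive $\alpha$ and $d$; these are identical across $S$, so they carry no information and do not affect the argument. Similarly, the data-processing step $I(\Theta;\widehat G)\le I(\Theta;\mathcal T)$ covers randomized tests.

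The main obstacle is justifying the mutual-information bound with the off-family center $\Theta_0$, that is, the identity above. This needs care with the Gaussian densities on the continuous trajectory space, including the Poisson/uniform update indices. Those indices are shared by all hypotheses, so they can be conditioned on or coupled exactly as in the proof of \Cref{lem:singlekl}.
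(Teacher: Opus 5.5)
Your proposal is correct and matches the paper's proof: bound $I(V;\mathcal T)$ by the average of $\KL(\mathcal T(N,\Theta_k)\parallel\mathcal T(N,\Theta_0))$ over $k$, apply \Cref{lem:singlekl} to get $(N/n+1)\alpha^2$, and then use Fano's inequality. The paper states the mutual-information bound without justification, and your compensation identity $\frac1n\sum_k\KL(P_k\parallel Q)=I+\KL(\bar P\parallel Q)$ supplies that standard step.
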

\begin{proof}
    Let $V$ be sampled uniformly from $\{1,\ldots,n\}$.
    Let $\mathcal T$ denote the observed trajectory.
    We may then bound the mutual information between $V$ and the observed trajectory by
    \begin{align*}
I(V;\mathcal T)
    &\le\frac1n\sum_{i=1}^n\KL(\mathcal T(N,\Theta_i)\parallel\mathcal T(N,\Theta_0))
    <\alpha^2+\frac{N\alpha^2}n.
\end{align*}
By Fano's inequality, we have
\begin{align*}
\mathbb{P}\left[\widehat G=G\right]
&\le\frac{I(V;\mathcal T)+\log 2}{\log n}
\le\frac{(N+n)\alpha^2}{n\log n}+\frac{\log2}{\log n}.\qedhere
\end{align*}
\end{proof}
\begin{proof}[Proof of \Cref{thm:infolb}]
With our given assumptions, we have
\(N+n\le\frac{n\log n}{4\alpha^2}\),
and so by \Cref{lem:fromfano}, we have
\begin{align*}
\mathbb{P}\left[\widehat G=G\right]
&\le\frac{(N+n)\alpha^2}{n\log n}+\frac{\log2}{\log n}
\le\frac14+\frac14=\frac12.\qedhere
\end{align*}
\end{proof}

\section*{Acknowledgements}
The authors thank Jason Gaitonde for helpful discussions and comments on an earlier version of this manuscript. They also thank Jonathan Bloom and Roman Bezrukavnikov for organizing the SPUR program at MIT, where this project began.

\bibliographystyle{alpha}
\bibliography{bib}

@article{gaitonde2025better,
  title={Better Models and Algorithms for Learning Ising Models from Dynamics},
  author={Gaitonde, Jason and Moitra, Ankur and Mossel, Elchanan},
  journal={arXiv preprint arXiv:2507.15173},
  year={2025}
}

@inproceedings{blanc2026nasty,
  title={Is nasty noise actually harder than malicious noise?},
  author={Blanc, Guy and Huang, Yizhi and Malkin, Tal and Servedio, Rocco A},
  booktitle={Proceedings of the 2026 Annual ACM-SIAM Symposium on Discrete Algorithms (SODA)},
  pages={6767--6787},
  year={2026},
  organization={SIAM}
}

@article{bhushan2019using,
  title={Using a Gaussian graphical model to explore relationships between items and variables in environmental psychology research},
  author={Bhushan, Nitin and Mohnert, Florian and Sloot, Daniel and Jans, Lise and Albers, Casper and Steg, Linda},
  journal={Frontiers in psychology},
  volume={10},
  pages={1050},
  year={2019},
  publisher={Frontiers Media SA}
}

@article{cerchiello2016conditional,
  title={Conditional graphical models for systemic risk estimation},
  author={Cerchiello, Paola and Giudici, Paolo},
  journal={Expert systems with applications},
  volume={43},
  pages={165--174},
  year={2016},
  publisher={Elsevier}
}

@inproceedings{gaitonde2024unified,
  title={A unified approach to learning Ising models: Beyond independence and bounded width},
  author={Gaitonde, Jason and Mossel, Elchanan},
  booktitle={Proceedings of the 56th Annual ACM Symposium on Theory of Computing},
  pages={503--514},
  year={2024}
}

@article{roberts1997updating,
  title={Updating schemes, correlation structure, blocking and parameterization for the Gibbs sampler},
  author={Roberts, Gareth O and Sahu, Sujit K},
  journal={Journal of the Royal Statistical Society Series B: Statistical Methodology},
  volume={59},
  number={2},
  pages={291--317},
  year={1997},
  publisher={Oxford University Press}
}

@article{amit1991rates,
  title={On rates of convergence of stochastic relaxation for Gaussian and non-Gaussian distributions},
  author={Amit, Yali},
  journal={Journal of Multivariate Analysis},
  volume={38},
  number={1},
  pages={82--99},
  year={1991},
  publisher={Elsevier}
}

@article{natarajan1995sparse,
  title={Sparse approximate solutions to linear systems},
  author={Natarajan, Balas Kausik},
  journal={SIAM journal on computing},
  volume={24},
  number={2},
  pages={227--234},
  year={1995},
  publisher={SIAM}
}

@inproceedings{zhang2014lower,
  title={Lower bounds on the performance of polynomial-time algorithms for sparse linear regression},
  author={Zhang, Yuchen and Wainwright, Martin J and Jordan, Michael I},
  booktitle={Conference on Learning Theory},
  pages={921--948},
  year={2014},
  organization={PMLR}
}

@article{zerenner2014gaussian,
  title={A Gaussian graphical model approach to climate networks},
  author={Zerenner, Tanja and Friederichs, Petra and Lehnertz, Klaus and Hense, Andreas},
  journal={Chaos: An Interdisciplinary Journal of Nonlinear Science},
  volume={24},
  number={2},
  year={2014},
  publisher={AIP Publishing}
}

@article{huang2010learning,
  title={Learning brain connectivity of Alzheimer's disease by sparse inverse covariance estimation},
  author={Huang, Shuai and Li, Jing and Sun, Liang and Ye, Jieping and Fleisher, Adam and Wu, Teresa and Chen, Kewei and Reiman, Eric and Alzheimer's Disease NeuroImaging Initiative and others},
  journal={NeuroImage},
  volume={50},
  number={3},
  pages={935--949},
  year={2010},
  publisher={Elsevier}
}

@article{dyrba2020gaussian,
  title={Gaussian graphical models reveal inter-modal and inter-regional conditional dependencies of brain alterations in alzheimer's disease},
  author={Dyrba, Martin and Mohammadi, Reza and Grothe, Michel J and Kirste, Thomas and Teipel, Stefan J},
  journal={Frontiers in aging neuroscience},
  volume={12},
  pages={99},
  year={2020},
  publisher={Frontiers Media SA}
}

@article{krumsiek2011gaussian,
  title={Gaussian graphical modeling reconstructs pathway reactions from high-throughput metabolomics data},
  author={Krumsiek, Jan and Suhre, Karsten and Illig, Thomas and Adamski, Jerzy and Theis, Fabian J},
  journal={BMC systems biology},
  volume={5},
  number={1},
  pages={21},
  year={2011},
  publisher={Springer}
}

@article{yi2022information,
  title={Information-incorporated Gaussian graphical model for gene expression data},
  author={Yi, Huangdi and Zhang, Qingzhao and Lin, Cunjie and Ma, Shuangge},
  journal={Biometrics},
  volume={78},
  number={2},
  pages={512--523},
  year={2022},
  publisher={Oxford University Press}
}

@inproceedings{gaitonde2025bypassing,
  title={Bypassing the Noisy Parity Barrier: Learning Higher-Order Markov Random Fields from Dynamics},
  author={Gaitonde, Jason and Moitra, Ankur and Mossel, Elchanan},
  booktitle={Proceedings of the 57th Annual ACM Symposium on Theory of Computing},
  pages={348--359},
  year={2025}
}

@inproceedings{misra2020information,
  title={Information theoretic optimal learning of gaussian graphical models},
  author={Misra, Sidhant and Vuffray, Marc and Lokhov, Andrey Y},
  booktitle={Conference on Learning Theory},
  pages={2888--2909},
  year={2020},
  organization={PMLR}
}

@article{bresler2017learning,
  title={Learning graphical models from the Glauber dynamics},
  author={Bresler, Guy and Gamarnik, David and Shah, Devavrat},
  journal={IEEE Transactions on Information Theory},
  volume={64},
  number={6},
  pages={4072--4080},
  year={2017},
  publisher={IEEE}
}

@inproceedings{wang2010information,
  title={Information-theoretic bounds on model selection for Gaussian Markov random fields},
  author={Wang, Wei and Wainwright, Martin J and Ramchandran, Kannan},
  booktitle={2010 IEEE International Symposium on Information Theory},
  pages={1373--1377},
  year={2010},
  organization={IEEE}
}

@inproceedings{tirukkonda2025structure,
  title={Structure learning in Gaussian graphical models from Glauber dynamics},
  author={Tirukkonda, Vignesh and Rayas, Anirudh and Dasarathy, Gautam},
  booktitle={2025 IEEE International Symposium on Information Theory (ISIT)},
  pages={1--6},
  year={2025},
  organization={IEEE}
}
\appendix
\section{Robust Estimators}
\label{sec:robust-estimates}
In this appendix, we prove the two robust-estimation lemmas stated in
\Cref{sec:preliminaries}, namely \Cref{lem:robustvarest,lem:advrobustmeanest}.
The variance proof is based
on controlling the relevant quantile under contamination, while the mean proof
uses a martingale argument together with Azuma--Hoeffding.
\robustvarianceestimator*
\begin{proof}
Let $\Phi$ and $\varphi$ denote the cdf and pdf, respectively, of the standard
Gaussian, and set
$q:=\Phi^{-1}(3/4)$.
Let $M$ be the sample median of the absolute values of the observed samples, and
set
$\widehat\sigma:=\frac{M}{q}$.
By scaling, it suffices to consider the case $\sigma=1$. Then the absolute value
of an uncorrupted sample has cdf
$G(t):=2\Phi(t)-1$.
Let $G_n$ denote the empirical cdf of the clean absolute values, let $H_n$
denote the empirical cdf of the observed absolute values, and let
$k:=\sum_{i=1}^n \xi_i$
be the number of corruptions.

By Dvoretzky--Kiefer--Wolfowitz with tolerance $\eta/2$,
\[
\mathbb P\!\left[\sup_{x\in\mathbb R}\lvert G_n(x)-G(x)\rvert>\frac{\eta}{2}\right]
\le 2e^{-n\eta^2/2}\le \frac{\delta}{2}.
\]
Also, a multiplicative Chernoff bound with parameter $1/3$ gives
\begin{align*}
\mathbb P\!\left[k>\frac{4}{3}\eta n\right]
&\le
\exp\!\left(
-\left(\frac43\log\frac43-\frac13\right)\eta n
\right)
\le e^{-\eta n/20}\le \frac{\delta}{4},
\end{align*}
where the last inequality uses
\[
n\ge \frac{2\log(4/\delta)}{\eta^2}
\qquad\text{and}\qquad
\eta\le \frac{1}{10}.
\]
Therefore, with probability at least $1-3\delta/4$,
\[
\sup_{x\in\mathbb R}\big\lvert G_n(x)-G(x)\big\rvert\le \frac{\eta}{2}
\qquad\text{and}\qquad
k\le \frac{4}{3}\eta n.
\]

Work on this event. Since $H_n$ is obtained from $G_n$ by changing at most $k$
sample points,
\begin{align*}
\sup_{x\in\mathbb R}\big\lvert H_n(x)-G(x)\big\rvert
&\le
\sup_{x\in\mathbb R}\big\lvert H_n(x)-G_n(x)\big\rvert
+
\sup_{x\in\mathbb R}\big\lvert G_n(x)-G(x)\big\rvert\\
&\le \frac{k}{n}+\frac{\eta}{2}
\le \frac{11\eta}{6}.
\end{align*}

Since $M$ is a median of the observed absolute values, we have
$H_n(M^-)\le 1/2\le H_n(M)$. Because $G$ is continuous and increasing, it
follows that
\[
\frac12-\frac{11\eta}{6}\le G(M)\le \frac12+\frac{11\eta}{6}.
\]
Equivalently,
\[
\frac34-\frac{11\eta}{12}\le\Phi(M)\le\frac34+\frac{11\eta}{12}.
\]
Since $\eta\le 1/10$, this places $M$ in the interval
\[
\Phi^{-1}\left(\frac{79}{120}\right)\le M\le\Phi^{-1}\left(\frac{101}{120}\right).
\]

Now $\Phi^{-1}$ is increasing and convex on $(1/2,1)$, so its deviation from
$q=\Phi^{-1}(3/4)$ is bounded linearly on this interval. Because
\[
\frac{11\eta}{12}\le \frac{11}{120}\cdot 10\eta,
\]
we obtain
\begin{align*}
\lvert M-q\rvert
\le
10\eta\cdot
\max\!\bigg\{
\Phi^{-1}\left(\frac{101}{120}\right)-q,\;
q-\Phi^{-1}\left(\frac{79}{120}\right)
\bigg\}.
\end{align*}
Finally,
\begin{align*}
\lvert\widehat\sigma-1\rvert
&=
\frac{\lvert M-q\rvert}{q}\\
&\le
\frac{10\eta}{\Phi^{-1}(3/4)}
\max\!\bigg\{
\Phi^{-1}\left(\frac{101}{120}\right)-\Phi^{-1}\left(\frac34\right),\;
\Phi^{-1}\left(\frac34\right)-\Phi^{-1}\left(\frac{79}{120}\right)
\bigg\}\\
&<5\eta.
\end{align*}
This proves the lemma.
\end{proof}

\robustmeanestimator*

\begin{proof}
Let $(\mathcal F_\ell)_{\ell=0}^n$ be the filtration from the statement, and
let $\widehat\mu$ denote the sample median. Let $\Phi$ and $\varphi$ denote the
cdf and pdf, respectively, of a standard Gaussian. Let \(\mathds 1_{\ell}\)
indicate the event that either \(\xi^{(\ell)}=1\) or
\(x^{(\ell)}>\mu+5\eta\). Then
\begin{align*}
    \mathbb E[\mathds 1_{\ell}\mid\mathcal F_{\ell-1}]
    &\le
    \mathbb E[\xi^{(\ell)}\mid \mathcal F_{\ell-1}] 
    +
    \mathbb P\!\left[
        x^{(\ell)}>\mu+5\eta,\ \xi^{(\ell)}=0
        \,\middle|\, \mathcal F_{\ell-1}
    \right] \\
    &\le
    \eta+
    \mathbb P\!\left[
        x^{(\ell)}>\mu+5\eta
        \,\middle|\, \mathcal F_{\ell-1},
        \xi^{(\ell)}=0
    \right] \\
    &\le 1+\eta-\Phi(5\eta).
\end{align*}
Therefore,
\[
M_k=\sum_{\ell=1}^k\left(
\mathds 1_{\ell}
-\mathbb E[\mathds 1_{\ell}\mid\mathcal F_{\ell-1}]
\right)
\]
is a martingale with bounded increments in \([-1,1]\).

By Azuma--Hoeffding (\Cref{fact:azuma-hoeffding}),
\begin{align*}
    \mathbb{P}[\widehat\mu\ge\mu+5\eta]
    &\le\mathbb{P}\left[\sum_{\ell=1}^n\mathds 1_{\ell}\ge\frac n2\right]\\
    &\le\mathbb{P}\left[M_n\ge n\left(\Phi(5\eta)-\eta-\frac12\right)\right]\\
    &\le\exp\left(-2n\left(\Phi(5\eta)-\eta-\frac12\right)^2\right).
\end{align*}
For \(\eta\le1/10\), we have \(\varphi(5\eta)>3/10\), and hence
\begin{align*}
    \Phi(5\eta)-\eta-\frac12
    &=-\eta+\int_0^{5\eta}\varphi(x)\,dx
    >-\eta+5\eta\varphi(5\eta)>\eta/2.
\end{align*}
Substituting this bound yields
\[
\mathbb{P}[\widehat\mu\ge\mu+5\eta]\le\exp(-n\eta^2/2).
\]

The lower tail is identical: if \(\widetilde{\mathds 1}_{\ell}\) denotes the
event that either \(\xi^{(\ell)}=1\) or \(x^{(\ell)}<\mu-5\eta\), the same
argument gives
\[
\mathbb{P}[\widehat\mu\le\mu-5\eta]\le\exp(-n\eta^2/2).
\]

Combining the two one-sided bounds with a union bound, we obtain
\[
    \mathbb{P}[\lvert\widehat\mu-\mu\rvert\ge5\eta]
    \le2\exp(-n\eta^2/2)\le\delta.
\]
This proves the lemma.
\end{proof}

\section{Continuous Time Versus Number of Updates}
\label{app:ct-dt-reduction}

The upper bounds in \Cref{thm:main-theorem-structure-learning,thm:main-theorem-parameter-learning}
are stated in terms of the continuous-time horizon $T$,
whereas the lower bound in \Cref{thm:infolb}
is stated in terms of the number of single-site updates.
This appendix records the standard reduction showing that these two formulations
are equivalent up to the factor $n$.

Recall from \Cref{def:cont-glauber-dynamics} that the jump times satisfy
$S^{(0)}=0$ and $S^{(\ell+1)}-S^{(\ell)}\sim \mathrm{Exp}(n)$ i.i.d.
Let
\[
N_T\coloneq \max\{\ell\ge 0:S^{(\ell)}\le T\}
\]
denote the number of updates by time $T$.
Then $N_T\sim \mathrm{Poisson}(nT)$.
Moreover, conditional on $N_T=m$, the continuous-time trajectory up to time $T$
is exactly the first $m$ steps of the embedded discrete-time chain,
together with the jump times $S^{(1)},\dots,S^{(m)}$.
Since these jump times are independent of $\Theta$ and of the embedded chain,
the only substantive difference between the two models is that in continuous time
one observes a \emph{random} number $N_T$ of discrete-time updates.

\begin{proposition}
\label{prop:ct-dt-equivalence}
The continuous-time and discrete-time formulations are interchangeable up to
constant factors.
\begin{enumerate}[label=(\roman*)]
    \item If there is an estimator that, from the first $N$ updates of the embedded
    discrete-time chain, succeeds with probability at least $1-\delta$,
    then there is an estimator that, from a continuous-time trajectory of length
    $2N/n$, succeeds with probability at least $1-\delta-e^{-N/4}$.

    \item If there is an estimator that, from a continuous-time trajectory of length
    $T$, succeeds with probability at least $1-\delta$,
    then there is an estimator that, from the first $\lceil 2nT\rceil$ updates of the
    embedded discrete-time chain, succeeds with probability at least
    $1-\delta-e^{-nT/3}$.
\end{enumerate}
\end{proposition}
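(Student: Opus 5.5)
Both parts rest on the structural fact recorded just before the statement. Conditional on $N_T=m$, the continuous-time observation up to time $T$ consists of the first $m$ embedded steps together with jump times. These jump times are independent of $\Theta$ and of the embedded chain. Also $N_T\sim\mathrm{Poisson}(nT)$. Each direction therefore reduces to a simulation argument plus a Poisson tail bound: the simulated estimator fails only if the original one fails or the update count falls on the wrong side of a threshold, and a union bound gives the stated success probabilities.

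For (i), set $T=2N/n$, so $N_T\sim\mathrm{Poisson}(2N)$. Given the continuous-time trajectory, we read off the embedded chain. If $N_T\ge N$, we feed its first $N$ steps to the discrete-time estimator. Since the first $N$ steps of the embedded chain have exactly the law required by that estimator, regardless of the jump times, it succeeds with probability at least $1-\delta$ on this event. If $N_T<N$, we output anything. The failure probability is then at most $\delta+\mathbb P[\mathrm{Poisson}(2N)<N]$. A multiplicative Chernoff lower tail with $\mu=2N$ and deviation $1/2$ gives
\[
\mathbb P[\mathrm{Poisson}(2N)<N]\le e^{-\mu(1/2)^2/2}=e^{-N/4},
\]
which is exactly the claimed bound.

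For (ii), we are given the first $m:=\lceil 2nT\rceil$ embedded steps. We sample fresh i.i.d.\ $\mathrm{Exp}(n)$ inter-arrival times ourselves, which is possible because their law does not depend on $\Theta$, and compute $N_T$ from them. Since $N_T$ is then independent of the embedded chain, the pair (jump times, embedded chain) has exactly the continuous-time law. If $N_T\le m$, we build the continuous-time trajectory on $[0,T]$ and run the given estimator; otherwise we output anything. The failure probability is at most $\delta+\mathbb P[\mathrm{Poisson}(nT)>2nT]$. The Chernoff upper tail with deviation $1$ gives
\[
\mathbb P[\mathrm{Poisson}(nT)>2nT]\le e^{-nT\cdot 1^2/3}=e^{-nT/3}.
\]

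The only delicate point is making sure the simulated input has exactly the law the given estimator assumes, including the arbitrary initial condition and the coupling between the counting process and the chain. This holds because the clocks are independent of the embedded chain. One caveat is that $N_T$ is the number of updates in $(0,T]$, so boundary or off-by-one conventions might shift a constant, but they do not affect the tail bounds as written. The Chernoff constants are the standard forms $e^{-\mu\epsilon^2/2}$ for the lower tail and $e^{-\mu\epsilon^2/3}$ for $\epsilon\le1$ for the upper tail, both valid for Poisson variables.
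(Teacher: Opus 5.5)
Your proposal is correct and follows the paper's proof essentially step for step. In (i) you read off the first $N$ embedded updates from a horizon-$2N/n$ trajectory and pay $\mathbb P[\mathrm{Poisson}(2N)<N]\le e^{-N/4}$; in (ii) you simulate independent $\mathrm{Exp}(n)$ waiting times to rebuild the path and pay $\mathbb P[\mathrm{Poisson}(nT)>2nT]\le e^{-nT/3}$, where your condition $N_T\le m$ is marginally weaker than the paper's $S^{(M)}>T$ but gives the same bound, and your Chernoff constants check out.
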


\begin{proof}
For (i), observe the continuous-time trajectory up to time $2N/n$.
If $N_{2N/n}\ge N$, run the discrete-time estimator on the first $N$ updates of the
embedded chain and ignore the rest.
Since $N_{2N/n}\sim \mathrm{Poisson}(2N)$, the bad event is
$\{N_{2N/n}<N\}$, which has probability at most $e^{-N/4}$ by Chernoff.

For (ii), suppose we are given the first $M=\lceil 2nT\rceil$ updates of the
embedded discrete-time chain.
Sample i.i.d. waiting times $W_1,\dots,W_M\sim \mathrm{Exp}(n)$,
set $S^{(\ell)}=\sum_{r=1}^{\ell}W_r$,
and reconstruct the corresponding continuous-time path up to time $T$.
If $S^{(M)}>T$, this reconstruction is exact up to time $T$, so we may run the
continuous-time estimator.
The bad event is $S^{(M)}\le T$, equivalently $N_T\ge M$ for a rate-$n$ Poisson
process, and since $M\ge 2nT$ this has probability at most $e^{-nT/3}$ by Chernoff.
\end{proof}

\begin{corollary}
\label{cor:ct-dt-equivalence}
Up to absolute constant factors, the deterministic conversion between the two
models is $N\asymp nT$.
In particular, the lower bound $N=\Omega(n\log n/\alpha^2)$ of
\Cref{thm:infolb} is equivalent to a continuous-time lower bound
$T=\Omega(\log n/\alpha^2)$.
\end{corollary}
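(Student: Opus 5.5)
Taking the final statement to be \Cref{cor:ct-dt-equivalence}, I would derive it directly from the two directions of \Cref{prop:ct-dt-equivalence}. The plan is to show that any lower bound on the number of discrete updates converts into a lower bound on the continuous horizon, and conversely, with the conversion $N\asymp nT$. No new probabilistic estimates are needed beyond the Poisson/exponential Chernoff bounds already used in that proposition.

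\textbf{Step 1: the conversion $N\asymp nT$.} Part (i) of \Cref{prop:ct-dt-equivalence} turns an $N$-update estimator into a continuous-time estimator on horizon $2N/n$. Part (ii) turns a horizon-$T$ estimator into an estimator using $\lceil 2nT\rceil$ updates. In both directions the extra failure probability is $e^{-\Omega(N)}$ or $e^{-\Omega(nT)}$, which is exponentially small. So each model simulates the other with at most a factor-$2$ loss in resources, giving $N\asymp nT$.

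\textbf{Step 2: transferring the lower bound.} Argue by contrapositive. Suppose some continuous-time test succeeded with probability above, say, $3/4$ on a horizon $T\le c\log n/\alpha^2$, for a small absolute constant $c$. Apply part (ii) to obtain a discrete-time test that uses $N=\lceil 2nT\rceil\le n\log n/(8\alpha^2)$ updates (for $c$ small enough). Its success probability is at least $3/4-e^{-nT/3}$.

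\textbf{Step 3: handling the error term.} The main point to check is that $e^{-nT/3}$ is small enough to keep the success probability above $1/2$. This requires $nT$ to be at least a constant, which I would handle by cases:
\begin{itemize}
\item If $nT\ge 3\log 8$, then $e^{-nT/3}\le 1/8$, so the discrete test succeeds with probability above $1/2$.
\item If $T$ is so tiny that fewer than a constant number of updates occur in expectation, observe a longer (but still $O(\log n/\alpha^2)$) horizon, or note directly that with $O(1)$ updates no test can beat $1/2$ on the $n\ge 16$ family.
\end{itemize}
In either case the discrete test beats success probability $1/2$ with $N\le n\log n/(8\alpha^2)$ updates, contradicting \Cref{thm:infolb}.

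This yields $T=\Omega(\log n/\alpha^2)$. The reverse implication follows symmetrically from part (i): a continuous-time lower bound on $T$ gives a discrete-time lower bound on $N$ of order $nT$.
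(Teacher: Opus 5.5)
Your proposal is correct and follows the paper's route. The paper states this corollary as an immediate consequence of \Cref{prop:ct-dt-equivalence}: part (ii) transfers the discrete lower bound of \Cref{thm:infolb} to continuous time, and part (i) gives the converse. Your handling of the $e^{-nT/3}$ loss, by working at success level $3/4$ and lengthening the horizon (extra updates can be ignored) when $nT$ is below a constant, fills in details the paper leaves implicit. One caveat: the hard family lives on $2n$ coordinates, so the update rate is $2n$ rather than $n$, which changes only the constants.
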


\section{A Technical Gap in the Analysis of Prior Work}
\label{sec:serious-gap}
We identify a gap in the analysis of \cite{tirukkonda2025structure}. We notified the authors,
{ who have since posted an updated manuscript on arXiv addressing the issue via an ``$iiji$''-based mechanism.}
Concretely, in the proof of Lemma~1 (step (c)), and again in the proof of Lemma~7 (Appendix~B.5),
the argument asserts that a certain conditional expectation of a ratio vanishes by invoking
mean-zero and (marginal) independence of Gaussian noise terms.
As we explain below, this cancellation is not justified when $(i,j)\in E$, because the denominator involves a future
increment of coordinate $j$ which depends on earlier noise injected at coordinate $i$.

From a technical viewpoint, this is one major reason that our mixing-free analysis relies on update patterns of the form ``$iiji$'' rather than on a direct ``$iji$'' ratio argument.
The additional update of $i$ is used to avoid exactly the type of dependence described above.
In the separate mixing-based result, we are able to work with ``$iji$'' patterns because the analysis there is different and does not rely on this cancellation.

We now explain the gap in detail.

In Lemma~1, they consider an update pattern $n_1<n_2<n_3$ in which node $i$ is updated at times $n_1$ and $n_3$ and node $j$ is updated at time $n_2$,
and they define a conditional expectation $\mathbb E_{\bar x,c}[\cdot]$ that fixes the values of $X_{N(i)\setminus\{j\}}$ at the beginning of the window
and conditions on the event $\lvert X_j^{(n_2)}-X_j^{(n_1)}\rvert\ge c$.
In the proof, the step labeled (c) asserts that
\[
\mathbb{E}_{\bar x,c}\!\left[
\frac{\varepsilon_i^{(n_3)}-\varepsilon_i^{(n_1)}}{X_j^{(n_2)}-X_j^{(n_1)}}
\right]=0,
\]
citing the mean-zero and (marginal) independence of the Gaussian noise variables $\{\varepsilon_i^{(t)}\}$.
A formally identical cancellation is used later in Appendix~B.5 (in the proof of Lemma~7) when rewriting their test statistic as
a signal term plus a ratio involving $\Delta\varepsilon_i/\Delta X_j$ and dropping the latter in conditional expectation.

\paragraph{Why the denominator depends on earlier noise when $(i,j)\in E$.}
Fix the ``$iji$'' update pattern $n_1<n_2<n_3$ from Lemma~1 of \cite{tirukkonda2025structure}, where $i$ is updated at times $n_1$ and $n_3$
and $j$ is updated at time $n_2$, and enforce their accompanying event that no neighbor of $i$ or $j$ (other than possibly each other) updates inside the window.
Assume $(i,j)\in E$, so $\Theta_{ji}\neq 0$.
Recall the Gaussian single-site update rule: when $u$ is updated at time $t$,
\begin{align*}
X_u^{(t)} &= -\!\sum_{k\in N(u)} \frac{\Theta_{uk}}{\Theta_{uu}}\,X_k^{(t-1)} + \varepsilon_u^{(t)},
\qquad\varepsilon_u^{(t)}\sim \mathcal{N}\!\left(0,\frac{1}{\Theta_{uu}}\right),
\end{align*}
Consider the randomness only through the lens of the single noise term $\varepsilon_i^{(n_1)}$ by fixing the pre-$n_1$ state, the update indices,
and all other Gaussian noises $\{\varepsilon_u^{(t)}:(u,t)\neq(i,n_1)\}$.
Under this fixing, the update at time $n_1$ gives an affine representation
\[
X_i^{(n_1)} = m_i + \varepsilon_i^{(n_1)}
\]
for a constant $m_i$ determined by what has been fixed.
Since $i$ is not updated between $n_1$ and $n_2$, the update of $j$ at time $n_2$ uses $X_i^{(n_1)}$, and therefore
\begin{align*}
X_j^{(n_2)} 
&=
-\frac{\Theta_{ji}}{\Theta_{jj}}\,X_i^{(n_1)}
-\sum_{k\in N(j)\setminus\{i\}}\frac{\Theta_{jk}}{\Theta_{jj}}\,X_k^{(n_2-1)}
+\varepsilon_j^{(n_2)} \\
&=
m_j' - \frac{\Theta_{ji}}{\Theta_{jj}}\,\varepsilon_i^{(n_1)} + \varepsilon_j^{(n_2)},
\end{align*}
for a constant $m_j'$ determined by the same fixing.
Consequently, the increment appearing in their denominator satisfies
\begin{align*}
X_j^{(n_2)}-X_j^{(n_1)} &= a + b\,\varepsilon_i^{(n_1)} + \varepsilon_j^{(n_2)},
\qquad b=-\frac{\Theta_{ji}}{\Theta_{jj}}\neq 0,
\end{align*}
for a constant $a$ determined by the fixed past.
In particular, when $(i,j)\in E$, the denominator is a function of $\varepsilon_i^{(n_1)}$, so it is not independent of the numerator noise term $\varepsilon_i^{(n_1)}$.

\paragraph{The conditioning $\lvert X_j^{(n_2)}-X_j^{(n_1)}\rvert\ge c$ does not repair the issue.}
In Lemma~1 (and again in Appendix~B.5), \cite{tirukkonda2025structure} conditions on the event
\[
\left\lvert X_j^{(n_2)}-X_j^{(n_1)}\right\rvert\ge c.
\]
Under the affine form above, this event is exactly
\[
\left\lvert a+b\,\varepsilon_i^{(n_1)}+\varepsilon_j^{(n_2)}\right\rvert\ge c,
\]
which depends on $\varepsilon_i^{(n_1)}$.
Thus, even after imposing the ``$\ge c$'' condition, the ratio term in step (c) involves a numerator noise component
that is statistically coupled to the denominator.

\paragraph{The ratio noise term need not have zero (conditional) expectation.}
As a result, the cancellation invoked in step (c) would require showing that an expression of the form
\[
\mathbb{E}\!\left[\frac{\varepsilon_i^{(n_1)}}{a+b\,\varepsilon_i^{(n_1)}+\varepsilon_j^{(n_2)}}\;\Bigg|\;\left\lvert a+b\,\varepsilon_i^{(n_1)}+\varepsilon_j^{(n_2)}\right\rvert\ge c\right]=0
\]
holds under the relevant conditioning.
There is no general reason for this to be true.

In fact, for $a=0$ and $b=1$, we have
\[\mathbb E\left[\frac{\varepsilon_i^{(n_1)}}{\varepsilon_i^{(n_1)}+\varepsilon_j^{(n_2)}}
\;\Bigg|\;\left\lvert\varepsilon_i^{(n_1)}+\varepsilon_j^{(n_2)}\right\rvert\ge c\right]=\frac12\]
by symmetry in $\varepsilon_i^{(n_1)}$ and $\varepsilon_j^{(n_2)}$,
as they are i.i.d.

Therefore, the vanishing of the ratio noise term cannot be justified solely from mean-zero and marginal independence of Gaussian noises.

\paragraph{Implications and connection to our approach.}
The discussion above shows that the specific cancellation used in step (c) of Lemma~1 of \cite{tirukkonda2025structure} is not justified as stated.
The denominator contains the future increment $X_j^{(n_2)}-X_j^{(n_1)}$, which, when $(i,j)\in E$, can depend on the earlier noise $\varepsilon_i^{(n_1)}$.
Consequently, the relevant ratio term need not have zero conditional expectation.
Since the same cancellation is used again in Appendix~B.5, in the proof of Lemma~7, the same issue propagates to the later separation argument built on that identity.

For our purposes, this explains why the mixing-free part of our analysis is based on $iiji$ patterns rather than on this direct $iji$ ratio argument.
The additional update of $i$ removes the dependence created by the first $i$-update before the later update of $j$ enters the statistic, so the conditional-independence step needed in our proofs becomes valid.
By contrast, when mixing is available, we also give a separate algorithm based on $iji$ patterns; that argument uses the mixing assumption and does not rely on the cancellation above.

\section{Non-Degeneracy Does Not Control Eigenvalues}
\label{app:nondegeneracy-conditioning}
In this appendix we demonstrate that the reciprocal of the minimum eigenvalue of the normalized precision matrix $\Theta'$ can be arbitrarily large even when all nonzero entries are bounded away from $0$. Further, we show that this phenomenon is independent of the sparsity constraint: even when the graph of $\Theta'$ is $d$-sparse for $d>2$, the edge-strength parameter $\alpha$ does not control $1/\lambda_{\min}(\Theta')$. Our construction is similar to Example~(5) in \cite{misra2020information}.

\begin{proposition}
    Fix $n \geq 2$, $\alpha <1$, and $d \geq 2$. For any large number $N\in \mathbb{R}^+$, there exists a matrix $\Theta'\in \mathbb{R}^{n\times n}$ with minimum edge strength $\alpha$ and sparsity $d$, while $\lambda_{\min}^{-1} > N$.
\end{proposition}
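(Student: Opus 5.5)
Our plan is to give an explicit construction supported on two coordinates and padded with the identity. We may take the off-diagonal entries close to $1$ in absolute value, since the non-degeneracy condition only bounds them from below by $\alpha$. The construction is a block-diagonal matrix
\[
\Theta' = \begin{pmatrix} 1 & \rho \\ \rho & 1\end{pmatrix} \oplus I_{n-2},
\]
where $\rho\in[\alpha,1)$ is a parameter to be chosen. Its graph has the single edge $(1,2)$, so every vertex has degree at most $1\le d$, and the graph is $d$-sparse for every $d\ge 2$ (indeed for every $d\ge1$). The only normalized edge strength is $\lvert\rho\rvert/\sqrt{1\cdot 1}=\rho\ge\alpha$. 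The diagonal is $1$, so $\Theta'$ is already normalized.

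The key steps are as follows. First, compute the spectrum: the $2\times2$ block has eigenvalues $1\pm\rho$, and the identity block contributes eigenvalue $1$. Hence $\Theta'$ is positive definite with $\lambda_{\min}(\Theta')=1-\rho$. Second, given $N$, choose $\rho=\max\{\alpha,\,1-\tfrac{1}{2N}\}$, or any $\rho\in[\alpha,1)$ with $1-\rho<1/N$. Then $\lambda_{\min}^{-1}=1/(1-\rho)>N$.

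If one wants the example to exhibit genuinely $d$-sparse structure with $d>2$, we can instead take a star or clique block whose all-equal off-diagonal weights approach the singular boundary. For example, use the $(d+1)\times(d+1)$ block $(1-\rho)I+\rho\mathbf{1}\mathbf{1}^\top$ with $\rho\in[\alpha,1)$. Its eigenvalues are $1-\rho$ with multiplicity $d$ and $1+d\rho$, and every vertex has degree $d$. Letting $\rho\to1$ again drives $\lambda_{\min}\to0$. When $n<d+1$, we use the clique on all $n$ vertices, or fall back to the $2\times2$ block.

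There is no real obstacle. The only point needing care is interpreting the statement: ``minimum edge strength $\alpha$'' should be read as the non-degeneracy lower bound, rather than as requiring some edge to equal $\alpha$ exactly. If exact equality is required, we take a $3\times3$ path block with weights $\alpha$ and $\rho$ when $n\ge3$ and check that its smallest eigenvalue $1-\sqrt{\alpha^2+\rho^2}$ can still be made arbitrarily small. Since $\alpha<1$, we can choose $\rho$ with $\alpha^2+\rho^2\uparrow1$, and $\rho$ itself stays at least $\alpha$ as long as $\alpha^2\le 1/2$. Otherwise we use two edges both of weight near $1/\sqrt2$, adjusted to the constraint.
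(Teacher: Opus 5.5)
Your main construction, the $2\times 2$ unit-diagonal block with off-diagonal entry $\rho\in[\alpha,1)$ close to $1$, padded by $I_{n-2}$ so that $\lambda_{\min}=1-\rho$, is correct and is essentially the paper's proof. The paper takes $\rho=1-1/M$ with $M>\max\{N,1/(1-\alpha)\}$ and reads ``minimum edge strength $\alpha$'' as a lower bound, as you do. Your closing aside on the exact-equality reading is unneeded and wrong as written for $\alpha>1/\sqrt2$: a path with two edges of weight at least $\alpha$ is then not positive definite, and edges of weight near $1/\sqrt2$ violate the constraint. This does not affect your main argument.
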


\begin{proof}
Consider the matrix
\[
B_{\varepsilon} \coloneq \begin{bmatrix}
    1 & 1-\varepsilon\\
    1 - \varepsilon & 1
\end{bmatrix}
\]
This matrix has minimum edge strength $1-\varepsilon$.

The matrix $B_{\varepsilon}$ has eigenvectors $v_1 = [1, 1]$ and $v_2 = [1, -1]$, satisfying $B_{\varepsilon}v_1 = (2 - \varepsilon)v_1$ and $B_{\varepsilon}v_2 = \varepsilon v_2$. Hence $\lambda_{\min}^{-1} = 1/\varepsilon$. This means that for all $\alpha < 1$ and any large enough number $M > N$ with $M > \frac{1}{1 - \alpha}$, the matrix $B_{1/M}$ has entries lower bounded by $1 - \frac{1}{M} > \alpha$, while the reciprocal of the minimum eigenvalue is $M$.

We can further extend the example of $B_\varepsilon$ to higher dimension to demonstrate that the edge-strength condition, together with sparsity constraints, can still allow large $\lambda_{\min}^{-1}$. For any $n > 2$, the block matrix
\[
B_{n, \varepsilon} \coloneq \begin{bmatrix}
    B_{\varepsilon} & 0\\
    0 & I_{n-2}
\end{bmatrix},
\]
where $I_{n-2}$ denotes the identity matrix in $n-2$ dimensions, has minimum entry $1 - \varepsilon$ and sparsity $d =2$. The eigenvalues of $B_{n, \varepsilon}$ are determined by its block components, so its minimum eigenvalue is $\varepsilon$ and its maximum eigenvalue is $2 - \varepsilon$; thus it has $\lambda_{\min}^{-1} = \frac{1}{\varepsilon}$. By the same argument as above, for any $\alpha < 1$ and $d > 2$, we may choose $M > \frac{1}{1-\alpha}$ and $M > N$, and then $B_{n,1/M}$ has $\lambda_{\min}^{-1}$ at least $M$ while having $\alpha$ edge strength and $d$-sparsity.
\end{proof}

\end{document}